%% file: main.tex
\documentclass{article}

\PassOptionsToPackage{numbers}{natbib}
\usepackage[preprint]{neurips_2026}

\usepackage[utf8]{inputenc} 
\usepackage[T1]{fontenc}    
\usepackage[colorlinks, allcolors=gray]{hyperref}       
\usepackage{url}            
\usepackage{booktabs}       
\usepackage{amsfonts}       
\usepackage{nicefrac}       
\usepackage{microtype}      
\usepackage{xcolor}         
\usepackage{amsmath}
\usepackage{makecell} 
\usepackage{amssymb}
\usepackage{algorithmicx}
\usepackage{algcompatible}
\usepackage{algorithm}
\usepackage{caption}
\usepackage{array}
\usepackage{setspace}
\usepackage{cancel}
\usepackage{caption}
\usepackage{mathtools}
\usepackage{amsthm}
\usepackage{booktabs}
\usepackage{multirow}
\usepackage{ dsfont }
\usepackage{subcaption}
\usepackage{enumitem}
\usepackage{url}
\usepackage[capitalize,noabbrev]{cleveref}
\usepackage{bm}
\usepackage{bbm}
\theoremstyle{plain}
\newtheorem{theorem}{Theorem}[section]
\newtheorem{proposition}[theorem]{Proposition}
\newtheorem{lemma}[theorem]{Lemma}
\newtheorem{corollary}[theorem]{Corollary}
\theoremstyle{definition}

\newtheorem{assumption}[theorem]{Assumption}
\theoremstyle{remark}
\newtheorem{remark}[theorem]{Remark}

\input{preamble_draft}

\definecolor{deepblue}{rgb}{0.12, 0.23, 0.45}

\newcommand{\f}{f}

\newcommand{\dd}{\mathrm{d}}

\newcommand{\bx}{\textbf{x}}
\newcommand{\K}{\textbf{K}}

\newcommand{\R}{\mathbb{R}}
\newcommand{\E}{\mathbb{E}}

\title{Conditioning Gaussian Processes on Almost Anything}

\author{%
  Henry B. Moss\thanks{Joint first author. Correspondence to \texttt{henry.moss@lancaster.ac.uk} and \texttt{l.astfalck@unsw.edu.au}} \thanks{Lancaster University   \textsuperscript{$\heartsuit$}University of New South Wales  \textsuperscript{$\spadesuit$}University of Cambridge \textsuperscript{$\flat$}University of Tübingen }
  \\
\And
  Lachlan Astfalck\textsuperscript{*}\textsuperscript{$\heartsuit$}
\And
  Thomas Cowperthwaite \textsuperscript{$\spadesuit$}
\And
  Colin Doumont \textsuperscript{$\flat$}
\And
  Sam Willis \textsuperscript{$\spadesuit$}
\And
  Philipp Hennig \textsuperscript{$\flat$}
\And
  Christopher Nemeth\textsuperscript{\textdagger}
\And
  Andrew Zammit-Mangion\textsuperscript{$\heartsuit$}\\
}

\begin{document}

\maketitle

\begin{abstract}

Gaussian processes (GPs) offer a principled probabilistic model over functions, but exact inference is restricted to the linear-Gaussian regime. We establish an explicit equivalence between GPs and a class of linear diffusion models, recasting predictive sampling as an ODE with closed-form Gaussian dynamics and a likelihood-dependent guidance term that admits a simple Monte Carlo approximation. In the linear-Gaussian setting, we recover standard GP conditioning exactly; beyond conjugacy, the same machinery handles any conditioning statement admitting point-wise likelihood evaluation — including non-linear physics, and, for the first time, natural language via large language models. Whitening isolates the irreducible non-Gaussian dynamics, minimising Wasserstein-2 transport cost and eliminating numerical stiffness. The result is a general-purpose GP inference scheme requiring no bespoke derivations. Together, these results provide a general mechanism for incorporating the full richness of real-world knowledge as conditioning information, opening a new frontier for the probabilistic modelling of real-world problems.

\end{abstract}

\section{Introduction} \label{intro}

Gaussian processes (GPs) provide a fully specified probabilistic model over functions with closed-form conditioning under linear Gaussian observations, offering a rare combination of flexibility and mathematical exactness\citep{williams2006gaussian}. Beyond the linear-Gaussian regime, exact conditioning is lost: non-Gaussian likelihoods and non-linear constraints force reliance on approximate inference schemes such as Laplace approximations \citep{rue2007approximate}, expectation propagation \citep{cseke2011approximate}, or variational methods \citep{hensman2013gaussian}, which are often computationally demanding, sensitive to parametrisation, and difficult to generalise across data modalities. This motivates a framework for conditional GP sampling that treats the problem in full generality, extending beyond conjugacy without bespoke approximations.

Our fresh perspective draws on recent advances in diffusion modelling \citep{ho2020denoising, songscore} and flow-matching \citep{lipmanflow, albergo2023building, liu2023flow}, which transport samples from a simple reference distribution (typically Gaussian) through a sequence of intermediate distributions to match a target law \citep{lai2025principles}.  In diffusion modelling, this transport is described via the reverse-time SDE of a noising process \citep{songscore}; in flow-matching, a learnt velocity field induces an ODE pushing the reference to the target \citep{lipmanflow}. In both cases, sampling is realised as an evolution along a path in distribution space. This has led to the realisation that conditioning need not be built into the model from the outset. Rather, sampling dynamics can be modified at test time so that trajectories are biased towards regions satisfying the conditioning statement \citep{ho2022classifier}, without retraining. Exactness is contingent on three approximations: (i) accuracy of the learned score, (ii) quality of the conditional score along the path, and (iii) discretisation error from the SDE/ODE solver.

The main contribution of this work is the realisation that GPs are an exact closed-form instance of diffusion and flow-matching models, see Section~\ref{subsec:flows}. General GP conditioning on arbitrary conditions may then be written as the modification of an underlying analytical transport, and existing conditioning techniques from the diffusion literature can be incorporated see Section~\ref{sec:nonconjugate}. Of the three sources of error mentioned above, the first is eliminated entirely: the GP prior is available in closed form, no score network need be learnt. We prove that the remaining two sources — Monte Carlo guidance estimation and ODE discretisation — are independently controllable, yielding predictive samples at prescribed fidelity with a fixed per-sample cost and without the mixing-time dependencies of MCMC or the parametrisation sensitivities of variational inference.

\emph{Our contributions offer a new frontier for the GP modelling of practical real-world problems:}  
\begin{itemize}[itemsep=0pt, topsep=0pt]
\item \textbf{GP--diffusion equivalence.} We establish an explicit equivalence between GPs and a class of linear diffusion models, providing an alternative transport view of GP sampling.
\item \textbf{Conditioning on anything.} We show that standard linear GP conditioning arises as a special case of test-time guidance, and extend this to arbitrary non-linear, non-Gaussian conditioning via a Monte Carlo approximation requiring only point-wise likelihood evaluations.
\item \textbf{Whitening is the optimal parameterisation.} We show whitening minimises Wasserstein-2 transport cost, with numerical stability governed by the covariance condition number.
\item \textbf{A unified GP sampler.} We apply our sampler (\textsc{FlowGP}) across applications including constrained regression, physics-informed modelling, Bayesian optimisation, and conditioning GPs on natural language via large language models (see Figure \ref{fig:llm+physc}).  
\item \textbf{Computational efficiency.} \textsc{FlowGP} runtimes range from milliseconds for monotonic bounded regression, to less than 5 seconds for LLM- and physics-constrained generation.
\end{itemize}

\begin{figure}[thbp]
\begin{minipage}[b]{\textwidth}
\centering

    \begin{subfigure}[b]{0.49\textwidth}
        \centering
        \includegraphics[trim={0 0 0 0}, clip, width=\textwidth]{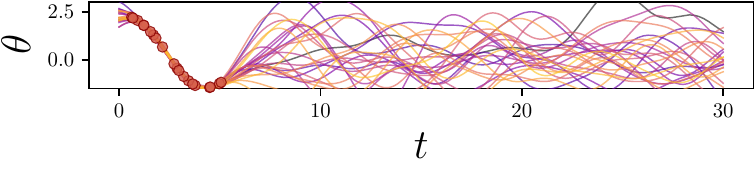}
    \end{subfigure}
    \hfill
    \begin{subfigure}[b]{0.49\textwidth}
        \centering
        \includegraphics[width=\textwidth]{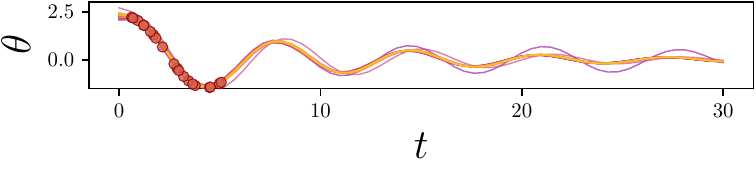}
    \end{subfigure}
   \textbf{(a)} \textsc{FlowGP} enforcing a damped pendulum equation constraint: $\frac{\mathrm{d}^2 \theta}{\mathrm{d}t^2} + \sin\!\theta + \beta\,\frac{\mathrm{d}\theta}{\mathrm{d}t} = 0$
\end{minipage}

\begin{minipage}[b]{\textwidth}
\centering
    \begin{subfigure}[b]{0.49\textwidth}
        \centering
        \includegraphics[trim={0 0 0 0}, clip,width=\textwidth]{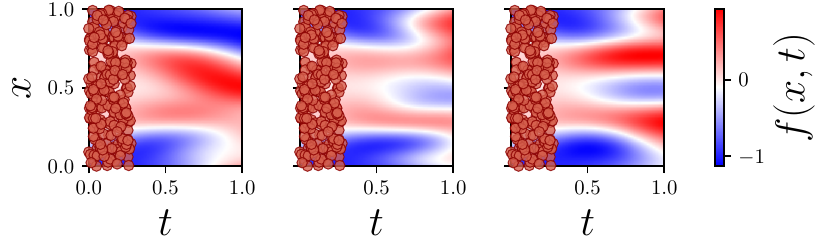}
    \end{subfigure}
    \hfill
    \begin{subfigure}[b]{0.49\textwidth}
        \centering
        \includegraphics[trim={0 0 0 0}, clip,width=\textwidth]{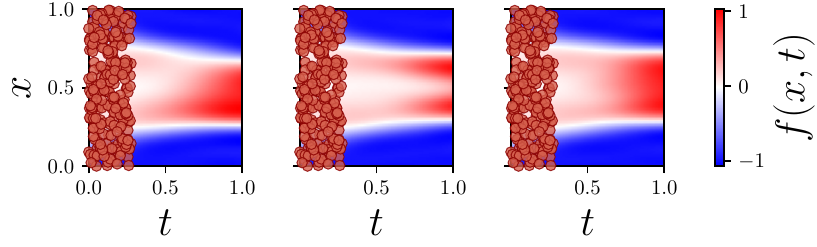}
    \end{subfigure}
\textbf{(b)} \textsc{FlowGP} enforcing an Allen--Cahn equation constraint: $\frac{\partial f}{\partial t}= \varepsilon \frac{\partial^2 f}{\partial x^2}+ 5f - 5f^3$
\end{minipage}

\begin{minipage}[b]{\textwidth}
    \centering
        \begin{subfigure}[b]{0.49\textwidth}
        \centering
        \includegraphics[trim={0 0 0 0}, clip,width=\textwidth]{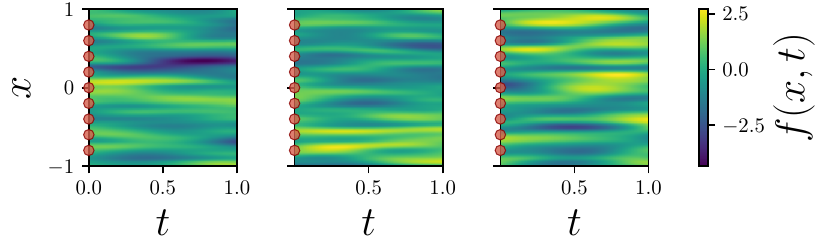}
    \end{subfigure}
    \hfill
    \begin{subfigure}[b]{0.49\textwidth}
        \centering
        \includegraphics[trim={0 0 0 0}, clip, width=\textwidth]{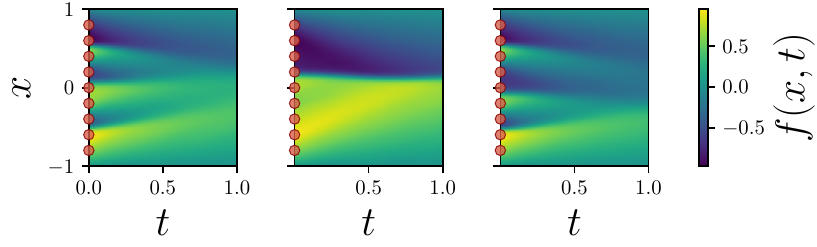}
    \end{subfigure}
\textbf{(c)} \textsc{FlowGP} enforcing a Burgers' equation constraint: $\frac{\partial f}{\partial t} + u\frac{\partial f}{\partial x}
    = \nu \frac{\partial^2 f}{\partial x^2}$
\end{minipage}

\begin{minipage}[b]{0.34\textwidth}
    \centering
    \begin{subfigure}[b]{0.54\textwidth}
        \includegraphics[width=\textwidth]{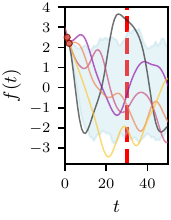}
    \end{subfigure}%
    \hfill
    \begin{subfigure}[b]{0.45\textwidth}
        \includegraphics[trim={0.5cm 0 0 0}, clip, width=\textwidth]{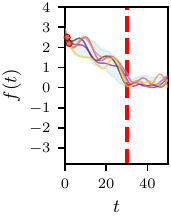}
    \end{subfigure}
\textbf{(d)} \textit{``...stock price...compulsory liquidation on day 30...''}
\end{minipage}%
\hfill
\begin{minipage}[b]{0.31\textwidth}
    \centering
    \begin{subfigure}[b]{0.49\textwidth}
        \includegraphics[trim={0.5cm 0 0 0}, clip, width=\textwidth]{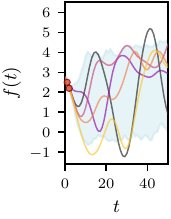}
    \end{subfigure}%
    \hfill
    \begin{subfigure}[b]{0.49\textwidth}
        \includegraphics[trim={0.5cm 0 0 0}, clip, width=\textwidth]{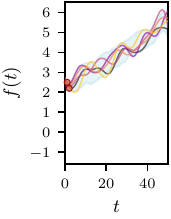}
    \end{subfigure}
\textbf{(e)} \textit{``...stock price...performs well...ending over 5 GBP...''}
\end{minipage}%
\hfill
\begin{minipage}[b]{0.31\textwidth}
    \centering
    \begin{subfigure}[b]{0.49\textwidth}
        \includegraphics[trim={0.5cm 0 0 0}, clip, width=\textwidth]{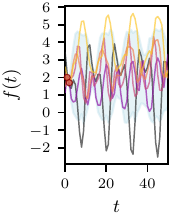}
    \end{subfigure}%
    \hfill
    \begin{subfigure}[b]{0.49\textwidth}
        \includegraphics[trim={0.5cm 0 0 0}, clip, width=\textwidth]{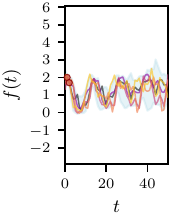}
    \end{subfigure}
\textbf{(f)} \textit{``Monthly average precipitation...January...''}
\end{minipage}
\caption{\textbf{(left of each pair)} Samples from a GP conditioned on observations (red dots) and \textbf{(right of each pair)} samples from \textsc{FlowGP} including additional information about non-linear physics  via known differential equations \textbf{(a-c)} and natural language descriptions via an LLM-based likelihood \textbf{(d-f)}. In each case, the unconstrained GP produces statistically coherent but semantically uninformed samples, whilst \textsc{FlowGP} produces samples that are also faithful to known physics or semantic content of the text prompt. See Section \ref{sec:experiments} for full experimental details.}
\label{fig:llm+physc}
\end{figure}

\section{A primer on Gaussian processes}\label{sec:GPs}

Let  $\f \in \mathcal{H}$ be a random function in some function space $\mathcal{H}$ indexed by locations $\bx \in \mathcal{X}\subseteq\mathbb{R}^d$. The defining property of a GP is that for any finite collection of $m$ test inputs $\textbf{X}_* \in\mathcal{X}^{m}$, the random vector $\bm f_0\in\mathbb{R}^m$ of evaluations of $f(\cdot)$ at $\textbf{X}_*$ follows a multivariate Gaussian distribution $\bm f_0\mid \textbf X_* \sim \mathcal{N}(\textbf{m}_*, \textbf{K}_{**})$, where $\textbf{m}_*=\mu(\textbf X_*)\in\mathbb{R}^m$ and covariance matrix $\textbf{K}_{**}:=k(\textbf{X}_*, \textbf{X}_*)\in\mathbb{R}^{m\times m}$ are induced by a mean function $\mu: \mathcal{X} \to \mathbb{R}$ and kernel function $k: \mathcal{X} \times \mathcal{X} \to \mathbb{R}$. We depart from the conventional notation $\bm f_*$, using $\bm f_0$ to reflect both (i) the standard statistical interpretation of a ``true'' parameter and (ii) the notation of generative modelling, in which $t=0$ typically marks the data-generating endpoint.

One of the most desirable  features of GPs is that they remain GPs under Gaussian conditioning with a bounded linear operator \citep{pfortner2022physics}. Let $\mathcal{L}: \mathcal{H} \to \mathbb{R}^n$ be a linear operator acting on the sample paths of $f(\cdot)$ to generate data at locations $\mathbf{X}_n \in \mathcal{X}^n$. Its action on $f(\cdot)$ can be represented as a matrix $\textbf L \in \mathbb{R}^{n\times m}$ acting on a fine discretisation of $\f(\cdot)$,  $\bm{f}_0$, which also constitutes our inferential target. Now consider the following data model for a noisy realisation $\bm y$,
\begin{equation} \label{eqn:gp_model}
    \bm y = \mathbf{L}\bm{f}_0 + \bm\varepsilon, \quad \bm\varepsilon \sim \mathcal{N}(\bm 0_n, \bm \Gamma),
\end{equation}
where $\bm 0_n$ denotes the $n$-vector of zeros, and where $\bm \Gamma$ denotes the covariance matrix of the noise $\bm \varepsilon$ (e.g. $\bm\Gamma=\sigma^2\mathbf{I}_n$). The GP predictive distribution $\bm{f}_0 \mid \mathcal{D} \sim \mathcal{N}(\mathbf{m}_{*|\bm y}, \mathbf{K}_{**|\bm y})$ has mean and covariance
\begin{align}
\mathbf{m}_{*|\bm y} &=  \mathbf{m}_* + \mathbf{K}_{**} \mathbf{L}^\intercal \left(\mathbf{L} \mathbf{K}_{**} \mathbf{L}^\intercal+ \bm\Gamma\right)^{-1} (\bm y - \mathbf{L} \mathbf{m}_*), \label{eqn:m_post} \\
\mathbf{K}_{**|\bm y} &= \mathbf{K}_{**}- \mathbf{K}_{**}\mathbf{L}^\intercal \left(\mathbf{L} \mathbf{K}_{**} \mathbf{L}^\intercal+ \bm\Gamma\right)^{-1} \mathbf{L} \mathbf{K}_{**}^\intercal, \label{eqn:K_post}
\end{align}
 where $\mathcal{D} = \{\mathbf{X}_n, \bm y\}$. Key applications of Gaussian processes conditioned on linear operators in the Gaussian setting include linear inverse problems \cite{hansen2006linear}; making inference from derivative observations \cite{solak2002derivative,ponte2024inferring}; data assimilation \cite{o1991bayes,briol2019probabilistic}; and problems involving linear constraints \cite{jidling2017linearly}.

\textbf{Techniques for sampling from GPs}. The standard approach is to transform white noise $\bm z\sim\mathcal{N}(\mathbf{0},\mathbf{I}_m)$ via the mean-scale transformation: $\bm f_0 = \mathbf{m}_{*} + \mathbf{K}^{1/2}_{**}\bm z$, where $\mathbf{K}^{1/2}_{**}$ denotes the matrix square root, raising a substantial $\mathcal{O}(m^3)$ factorisation cost. The SPDE approach \citep{whittle1954stationary, lindgren2011explicit} and its Hilbert space extensions \citep{solin2020hilbert} enable $\mathcal{O}(m)$ predictive sampling for Markovian kernels via Kalman smoothing; pathwise conditioning \citep{wilson2020efficiently} constructs differentiable predictive sample paths as a prior sample plus a deterministic update; sparse approximations \citep{quinonero2005unifying, hensman2013gaussian, lazaro2010sparse, wilson2015kernel} and computation-aware iterative solvers \citep{cutajar2016preconditioning, wenger2024computation} further reduce the cost of drawing samples from large-scale GP predictive distributions. These methods and approximations are compatible with \textsc{FlowGP} (Section \ref{sec:nonconjugate}) and provide avenues for scalability.

\textbf{Physics-informed Gaussian processes.} Kernel-based approaches encoding physics through the covariance structure are restricted to linear PDEs \citep{alvarez2013linear, raissi2017machine, besginow2022constraining}. Modern approximate methods extend to non-linear PDEs — \textsc{PHYSS} \citep{hamelijnck2024physics} and \textsc{AUTOIP} \citep{long2022autoip} via variational inference, and via kernel smoothing \cite{chen2021solving}, though the latter yields only point estimates. All of these methods require bespoke derivations tailored to each PDE. Probabilistic ODE solvers \citep{schober2019probabilistic, tronarp2019probabilistic} model solutions as GPs with calibrated discretisation uncertainty and generalise linear PDE solvers \citep{pfortner2022physics}, but extensions to non-linear PDEs remain limited.

\section{Sampling GPs under linear-Gaussian conditioning by solving an ODE}
\label{subsec:flows}

We now take a fresh perspective on GP sampling and  derive an ODE whose dynamics produce the samples from the target Gaussian distribution at the final step. Consider the set of vectors $\{\bm f_t\}_{t \in [0,1]}$, each defined by mapping white noise samples  $\bm z$ via the flow map
\begin{align}
      \bm f_t= \Phi(t, \bm z) :=  \mathbf{b}(t) + \mathbf{A}(t)^{1/2}\bm z, \quad t \in [0,1], \label{eq:flowmap}
\end{align}
where $\mathbf{A}(t) := \alpha^2(t) \mathbf{K}_{**} + (1 - \alpha^2(t))\mathbf{I}_m$ and $\mathbf{b}(t) := \alpha(t)\mathbf{m}_*$. We require that $\alpha(t)$ is a monotonically decreasing and differentiable function, with $\alpha(0) = 1$, and $\alpha(1) \rightarrow 0^+$ to ensure that $\bm f_0\sim\mathcal{N}(\mathbf{m}_{*}, \mathbf{K}_{**})$ and $\bm f_1\rightarrow\bm z$. Differentiating the flow map $\Phi(\cdot,\cdot)$ in \eqref{eq:flowmap} with respect to $t$, we obtain the following linear time-varying ODE,

\begin{equation}
\boxed{    \label{eq:GPODE}
    \frac{\mathrm{d}\bm f_t}{\mathrm{d}t} =\bm{v}(\bm f_t, t) \coloneqq -\frac{1}{2}\beta(t)  \mathbf{A}(t)^{-1}\mathbf{b}(t)-\frac{1}{2}\beta(t)\left(\mathbf{I}_m - \mathbf{A}(t)^{-1}\right)\bm f_t, \quad t\in[0,1],
}
\end{equation}
where $v(\bm f_t,t)$ is the velocity field, $\bm f_1 \sim \mathcal{N}(\mathbf{0},\mathbf{I}_m)$ and $\beta(t)  = -2 \frac{\alpha'(t)}{\alpha(t)} = -2\frac{\mathrm{d}\log\alpha(t)}{\mathrm{d} t}$. Across all the results in this paper, we use a simple linear schedule for $\beta(t)$ (see Appendix \ref{appendix:schedule}). The ODE \eqref{eq:GPODE} yields a strategy for sampling from $\bm f_0$: starting from white noise $\bm f_1\sim\mathcal{N}(\mathbf{0},\mathbf{I}_m)$ and numerically integrating the ODE \eqref{eq:GPODE} backwards in time from $t=1$ to $t=0$ produces samples from the target distribution at $t=0$ (see Figure ~\ref{fig:linear_ode_evolution}). 

\subsection{An alternative perspective: sampling from Gaussian processes with a diffusion} 

The one-time marginals $p(\bm f_t), t\in [0,1]$ induced by the ODE in \eqref{eq:GPODE} are the same as those of a variance-preserving (VP) SDE \citep{songscore} of the form
\begin{equation} \label{eq:var_preserving}
\mathrm{d}\bm f_t = - \frac{1}{2}\beta(t)\, \bm f_t \,\mathrm{d}t + \sqrt{\beta(t)} \, \mathrm{d} \bm W_t,\quad t\in[0,1],
\end{equation} where $\bm W_t, t \in [0,1]$, is an $m$-dimensional Wiener process. To see this, first observe that the conditional distributions $p(\bm f_t \mid \bm f_0)$ generated by the SDE in \eqref{eq:var_preserving}  are, for all $t \in [0,1]$, the same as those generated by the perturbation model \citep{lipmanflow}
\begin{equation}\label{eq:var_preserving2}
\bm f_t = \alpha(t)\bm f_0 + \sqrt{1-\alpha^2(t)}\bm z,\quad t\in[0,1],
\end{equation}
where $\bm z \sim \mathcal N(\bm 0, \mathbf I_m)$. This can be verified by showing that the conditional expectations and variances of the solution to \eqref{eq:var_preserving} match those of \eqref{eq:var_preserving2} for each $t$. It follows that for a distribution $p(\bm f_0)$, the SDE in \eqref{eq:var_preserving} induces the same marginals
\begin{equation} \label{eq:marginals}
    p(\bm f_t) =\int p(\bm f_t \mid \bm f_0)p(\bm f_0)\, \mathrm{d} \bm f_0, \quad t\in[0,1],
\end{equation}
as \eqref{eq:var_preserving2}.  Second, it is immediate that \eqref{eq:GPODE} generates the same marginals $\bm f_t \sim \mathcal N(\mathbf{b}(t), \mathbf A(t)), t \in [0,1],$ as \eqref{eq:var_preserving2}  when $\bm f_0 \sim \mathcal N(\mathbf{m}_*, \mathbf K_{**})$, and therefore the same as those from the SDE in \eqref{eq:var_preserving}.  We may therefore replace simulation of the SDE with numerical integration of the ODE in \eqref{eq:GPODE} when sampling from these marginals. 

In most applications of diffusion sampling, \eqref{eq:marginals} is approximated with a neural network, however these marginals available in closed-form when the distribution of $\bm f_0$ is induced from a GP. Moreover, the conditional distribution $p(\bm f_0 \mid \bm f_t)$ of the SDE is also available; in Appendix \ref{appendix:flow_derivations} we show that for $\bm f_0 \sim \mathcal N(\mathbf{m}_*, \mathbf K_{**})$,
\begin{align} 
     \bm f_0 \mid \bm f_t \sim~ & ~\mathcal{N}\left(\mathbf{m}_* + \alpha(t)\mathbf{K}_{**}\mathbf{A}(t)^{-1}\left(\bm f_t - \alpha(t)\mathbf{m}_*\right), \mathbf{K}_{**} - \alpha^2(t)\mathbf{K}_{**}\mathbf{A}(t)^{-1}\mathbf{K}_{**}\right).\label{eq:joint}
\end{align}
This property will be particularly useful when computing with non-linear guidance in Section~\ref{sec:nonconjugate}.

The above discussion showed that the probabilistic flow \eqref{eq:GPODE} generates margins that are the same as that of the SDE \eqref{eq:var_preserving}. We can also derive the probability flow \eqref{eq:GPODE} directly from the SDE: It is well known  \citep{anderson1982reverse} that the probability-flow ODE associated with the SDE \eqref{eq:var_preserving} is 
 \begin{equation}
 \label{eq:PFODE}
     \frac{\mathrm{d}\bm f_t}{\mathrm{d}t} = -\frac{1}{2}\beta(t)\left(\bm f_t + s(\bm f_t, t)\right);
 \end{equation}
see \cite{lai2025principles} for a derivation. In the Gaussian setting, the score is $s(\bm f_t,t) =  - \mathbf{A}(t)^{-1} \left(\bm f_t - \mathbf{b}(t)\right)$; substituting this into \eqref{eq:PFODE} recovers the linear ODE in \eqref{eq:GPODE}. We use \eqref{eq:PFODE} directly in Sections \ref{subsec:conjugate_conditioning} and \ref{subsec:nonconjugate_conditioning} to implement linear-Gaussian and nonlinear-non-Gaussian guidance, respectively.  The ODE \eqref{eq:PFODE} also admits a natural flow matching interpretation: the interpolant \eqref{eq:var_preserving2} is precisely the affine Gaussian interpolant of stochastic interpolant flow matching \citep{albergo2023building}, making \textsc{FlowGP} an exact closed-form instance of flow matching for Gaussian targets.

\subsection{Sampling under linear Gaussian conditioning as score-based guidance}
\label{subsec:conjugate_conditioning}

Recall the linear Gaussian data model in \eqref{eqn:gp_model} that yields the predictive distribution $\bm{f}_0 \mid \mathcal{D} \sim \mathcal{N}(\mathbf{m}_{*|\bm y}, \mathbf{K}_{**|\bm y})$. As $\bm{f}_0 \mid \mathcal{D}$ remains Gaussian, we may  substitute $\mathbf{b}_{\mid \bm y}(t) \coloneqq \alpha(t)\mathbf{m}_{*\mid\bm y}$ and 
$\mathbf{A}_{\mid \bm y}(t) \coloneqq \alpha^2(t)\mathbf{K}_{**\mid\bm y} + (1-\alpha^2(t))\mathbf{I}_m$ for $\mathbf{b}(t)$ and $\mathbf{A}(t)$ in \eqref{eq:GPODE} to yield the ODE 
\begin{equation}
\label{eq:GPODE2}
    \frac{\mathrm{d}\bm f_t}{\mathrm{d}t} =\bm{v}(\bm f_t, t \mid \mathcal D) \coloneqq -\frac{1}{2}\beta(t)  \mathbf{A}_{|\bm y}(t)^{-1}\mathbf{b}_{|\bm y}(t)-\frac{1}{2}\beta(t)\left(\mathbf{I}_m - \mathbf{A}_{|\bm y}(t)^{-1}\right)\bm f_t, \quad t\in[0,1],
\end{equation}
which can be used to generate samples from the predictive distribution $\bm{f}_0 \mid \mathcal{D} \sim \mathcal{N}(\mathbf{m}_{*|\bm y}, \mathbf{K}_{**|\bm y})$ (see Figure \ref{fig:linear_ode_evolution}), with marginals $\bm f_t \mid \mathcal{D} \sim \mathcal{N}(\mathbf{b}_{\mid \bm y}(t),\mathbf{A}_{\mid \bm y}(t))$.

\begin{figure}

    \includegraphics[width=\textwidth]{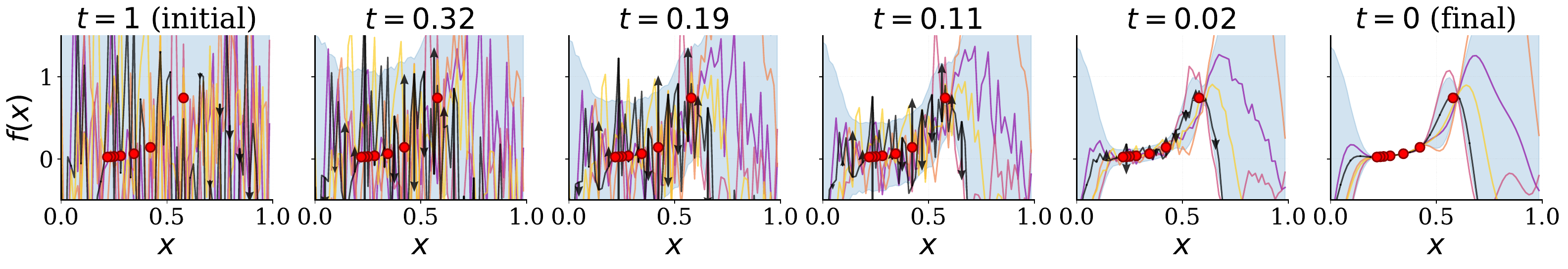}

    \caption{ Generating samples from the GP predictive distribution when conditioning on Gaussian observations (red dots). Sample trajectories (lines), velocity field (arrows), and empirical quantiles of $p(\bm f_t)$ (shading) are shown at different times. The flow interpolates white noise (left, $t = 1$) to samples from the predictive distribution (right, $t = 0$).}
\label{fig:linear_ode_evolution}
\end{figure}

Alternatively, one can start off from the probability flow ODE representation in \eqref{eq:PFODE} given by
\begin{equation}\label{eq:ODE2}
\frac{\mathrm d \bm f_t}{\mathrm d t} = \bm{v}(\bm f_t, t \mid \mathcal{D}) := -\frac{1}{2}\beta(t)\left(
\bm f_t + \nabla_{\bm f_t}\log p(\bm f_t \mid \mathcal D)
\right).
\end{equation}
The conditional score admits the decomposition
\begin{equation} \label{eqn:conjugate_score}
\nabla_{\bm f_t}\log p(\bm f_t \mid \mathcal D)
=
\underbrace{\nabla_{\bm f_t}\log p(\bm f_t)}_\text{unconditional GP}
+
\underbrace{\nabla_{\bm f_t}\log p(\mathcal D \mid \bm f_t),}_\text{linear conditioning/guidance}
\end{equation}
where the first term corresponds to the GP model (Section~\ref{subsec:flows}) 
and the second term encodes the conditioning information. While this recovers standard GP conditioning in a different guise, the decomposition reveals that conditioning can be implemented by augmenting the drift of the probability flow ODE with a \emph{guidance term} $\nabla_{\bm f_t}\log p(\mathcal D \mid \bm f_t)$ \citep{daras2024survey}. Importantly, this framework provides a path forward to relax assumptions of linearity and Gaussianity in our conditioning statements.

\section{\textsc{FlowGP}: sampling GPs under non-linear and non-Gaussian conditioning}
\label{sec:nonconjugate}

We now demonstrate the key utility of our ODE viewpoint of GP sampling: it enables efficient sampling from the GP predictive distribution arising from arbitrary non-linear non-Gaussian likelihoods. Specifically, we target
\begin{equation}
    p(\bm f_0 \mid \mathcal D, \mathcal C) \propto \underbrace{p(\bm f_0)\, p(\mathcal D \mid \bm f_0)}_{\propto \; p(\bm f_0 \mid \mathcal D)} \, p(\mathcal C \mid \bm f_0), \label{eq:GPcond}
\end{equation}
where $\mathcal{D}$ denotes the information that admit a linear closed-form Gaussian update, and $\mathcal{C}$ denotes non-linear and non-Gaussian contributions, assumed conditionally independent given $\bm f_0$. Although this is the case most commonly encountered in practice, our methodological development does not depend on this independence assumption. We assume the GP prior and $\mathcal{C}$ are broadly compatible; if the prior assigns negligible probability to the region satisfying $\mathcal{C}$, sample quality may degrade and so kernel hyperparameters should be reasonably well-specified before applying \textsc{FlowGP}; see Section~\ref{sec:limitations}.

\subsection{Sampling from the predictive distribution via an ODE}\label{subsec:nonconjugate_conditioning}

Section~\ref{subsec:conjugate_conditioning} outlines two admissible ODE-based methods for sampling from the predictive distribution under linear Gaussian conditioning: differentiating closed-form flow maps and using the probability flow \eqref{eq:ODE2}. In general, no analytically-tractable flow map exists when conditioning includes $\mathcal C$, but the representation \eqref{eq:PFODE} offers a way forward to generate the required samples.

Consider a general conditioning set $\mathcal{C}$ and the sequence of marginal distributions obtained by incrementally noising our target conditional distribution 
\begin{equation*}
    p(\bm f_t \mid \mathcal{D}, \mathcal{C}) = \int p(\bm f_t \mid \bm f_0)p(\bm f_0 \mid \mathcal{D}, \mathcal{C}) \; \mathrm{d} \bm f_0 \quad \text{where} \quad p(\bm f_t \mid \bm f_0) = \mathcal{N}(\alpha(t)\bm f_0, (1-\alpha^2(t))\mathbf{I}_m).
\end{equation*}
By Bayes' rule, the score function of $p(\bm f_t \mid \mathcal{D}, \mathcal{C})$ decomposes as
\begin{align}
\nabla_{\bm f_t}\log p(\bm f_t \mid \mathcal{D}, \mathcal{C})
&= \underbrace{\nabla_{\bm f_t}\log p(\bm f_t \mid \mathcal D)}_{\text{linear update in \eqref{eqn:conjugate_score}}}
+ \underbrace{\nabla_{\bm f_t}\log p(\mathcal C \mid \bm f_t, \mathcal D)}_{\text{non-linear guidance}}.
\label{eq:GPCONDODE}
\end{align}
This score function is the sum of the score of the  linear update and an additional term  encoding the non-linear and non-Gaussian contributions. Incorporating this term into the ODE \eqref{eq:PFODE} yields 
\begin{equation}
\boxed{    \label{eq:NONCONGGPCONDODE}
    \frac{\mathrm{d}\bm f_t}{\mathrm{d}t} =\bm{v}(\bm f_t, t\mid \mathcal{D}) - \frac{1}{2}\beta(t)\nabla_{\bm f_t}\log p(\mathcal{C} \mid \bm f_t, \mathcal{D})
}
\end{equation}
where $\bm{v}(\bm f_t, t\mid \mathcal{D})$ is the vector field \eqref{eq:GPODE2} producing samples from the linear-Gaussian part of the predictive distribution, $p(\bm f_0 \mid \mathcal{D})$.  If $\nabla_{\bm f_t} \log p(\mathcal C \mid \bm f_t, \mathcal{D})$ is available, the flow can therefore be adjusted to generate samples from the full predictive distribution that also involves the nonlinear non-Gaussian components $\mathcal C$. Therefore, the ODE \eqref{eq:NONCONGGPCONDODE} yields a strategy for sampling from the predictive distributions of GPs that are not available in closed form: starting from white noise $\bm f_1\sim\mathcal{N}(\mathbf{0},\mathbf{I}_m)$ and numerically integrating ODE \eqref{eq:NONCONGGPCONDODE} from $t=1$ to $t=0$ produces samples $\bm f_0$ from the target predictive distribution (see Figure \ref{fig:nonlinear_ode_evolution}).

\begin{figure}[t]
    \centering
    
    \includegraphics[trim={0 1cm 0 0}, clip, width=\textwidth]{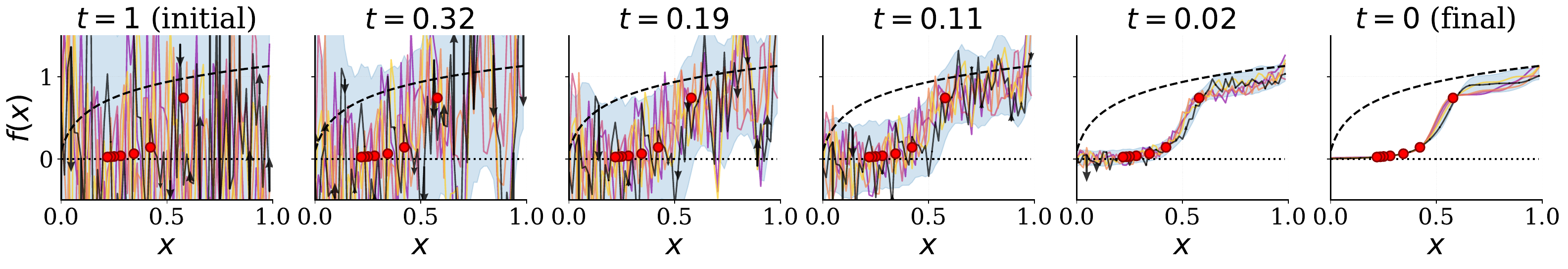}

    \includegraphics[trim={0 0 0 1cm}, clip, width=\textwidth]{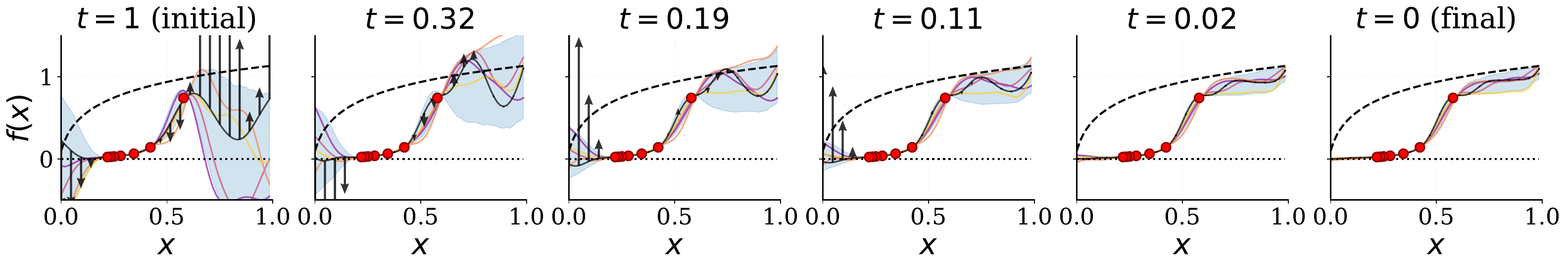}
    
    \caption{Extension of Figure~\ref{fig:linear_ode_evolution} to non-Gaussian and non-linear likelihoods.  \textbf{Top row}: Generating samples from the predictive distribution when conditioning on Gaussian observations (red dots) together with non-linear constraints enforcing boundedness and monotonicity (dashed lines). \textbf{Bottom row}: Generating samples from the same predictive distribution using the whitened formulation, in which the Gaussian dynamics are removed, producing smoother trajectories and avoiding the stiff behaviour present in the original formulation.   
    }
    \label{fig:nonlinear_ode_evolution}
\end{figure}

\subsection{Calculating the guidance term by Monte Carlo}
\label{subsec:MC}

Unlike $\bm{v}(\bm f_t, t\mid \mathcal{D})$, the nonlinear guidance term $\nabla_{\bm f_t}\log p(\mathcal{C} \mid \bm f_t, \mathcal{D})$ appearing in \eqref{eq:NONCONGGPCONDODE} is generally intractable. However, since
\begin{align}\label{eq:guidanceterm}
\nabla_{\bm f_t}\log p(\mathcal C \mid \bm f_t, \mathcal D)
=
\nabla_{\bm f_t}\log \int p(\bm f_0 \mid \bm f_t, \mathcal D)\, p(\mathcal C \mid \bm f_0)\, \mathrm d \bm f_0,
\end{align}
and $p(\bm f_0 \mid  \bm f_t, \mathcal{D})$ is a known Gaussian distribution, we can estimate it using a Monte Carlo approximation that only requires point-wise evaluation of the likelihood $p(\mathcal C\mid \bm f_0)$ and its score $\nabla_{\bm f_0}
\log p(\mathcal C\mid \bm f_0)$.

For a sample of size $S$, 
\begin{equation}
\label{eq:guidance-mc-estimator}
    \nabla_{\bm{f}_t} \log p(\mathcal{C} \mid \bm{f}_t, \mathcal{D})
    \approx \alpha(t)\mathbf{K}_{**|\bm y}\mathbf{A}^{-1}_{|\bm y}(t)\sum_{i=1}^S \bar w^{(i)}s^{(i)}, \quad \textrm{for}\quad\bar w^{(i)} = w^{(i)}/\sum_{r=1}^S w^{(r)}
\end{equation}
where $\bm f_0^{(i)}\sim p(\bm f_0 \mid \bm f_t, \mathcal{D})$, and $w^{(i)} \propto p(\mathcal C\mid \bm f_0^{(i)})$, $s^{(i)}=\nabla_{\bm f_0}\log p(\mathcal{C}\mid \bm f_0^{(i)})$
for $i = 1,\dots,S$. See Appendix~\ref{appendix:guidance_calc}  for derivation and implementation details. If $\nabla_{\bm f_0} \log p(\mathcal C\mid \bm f_0)$ is unavailable, a gradient-free estimate may be obtained via Fisher's identity (Appendix~\ref{appendix:fisher}).

\textbf{Convergence and weight collapse.} The estimator \eqref{eq:guidance-mc-estimator} is a self-normalised importance sampler. It is consistent, with a bias of order $\mathcal{O}(S^{-1})$, which vanishes faster than the $\mathcal{O}(S^{-1/2})$ standard Monte Carlo error under standard moment conditions on the importance weights. In high dimensions, self-normalised estimators can in principle suffer from \emph{weight collapse} onto a single sample. Note that our construction mitigates this risk as $\bm f_0|\bm f_t, \mathcal{D}$ has inflated variance for most of the integration interval (see \eqref{eq:joint} when $\alpha(t)>0$), providing a deliberately broad proposal. Empirically we observe no evidence of collapse across our experiments, including the $m>1000$ physics settings.
We acknowledge that collapse may nonetheless arise in higher-dimensional settings; developing adaptive remedies is an important direction for future work.

\textbf{Alternative guidance approximations.} In standard diffusion guidance, $p(\bm f_0 \mid \bm f_t, \mathcal D)$ is rarely available in closed form, motivating point-estimate methods such as DPS \citep{chung2023diffusion} and MPGD \citep{manifold}. More sophisticated schemes incorporate higher-order information at additional computational cost \citep{rissanenfree}. Remarkably, in Appendix \ref{appendix:ablate} we show that our simple Monte Carlo estimator outperforms DPS and MPGD even at $S=1$, raising broader questions about the suitability of point-estimate approximations of guidance in diffusion sampling.

\section{Practical and theoretical considerations}

\subsection{A natural parameterisation through whitening}
\label{subsec:whiten}

Discretising ODE \eqref{eq:NONCONGGPCONDODE} over $T$ steps is challenging when the covariance is poorly conditioned as the dynamics become stiff, which forces small step sizes. 
In Appendix~\ref{app:theory}, we show that stiffness is governed by the condition number $\kappa(\mathbf K_{**|\bm y})$, which also controls numerical stability in standard GP sampling. In particular, Theorem~\ref{cor:stiffness} establishes that when the flow's Jacobian is negative definite and $\lambda_{\min}(\mathbf{K}_{**|\bm y}) \to 0$, stiffness has two-sided asymptotic equivalence to $\kappa(\mathbf K_{**|\bm y})$, i.e., $\text{stiffness} \asymp \kappa(\mathbf K_{**|\bm y})$. This regime arises, for instance, under increasing data in a fixed domain. Increased stiffness leads to numerical instabilities that we seek to avoid. 

This challenge is resolved by whitening. Define $\hat{\bm f}_0 \coloneqq \mathcal{W}\bm f_0 =  \mathbf{K}_{**|\bm y}^{-1/2}\bm (\bm f_0-\mathbf m_{*|\bm y})$, under which $\hat{\textbf{A}}(t)=\textbf{I}_m$ and $\hat{\bm b}(t)=\bm 0$, so the linear Gaussian velocity term vanishes, leaving only the non-linear guidance contribution
\begin{equation}
\boxed{    \label{eq:whitened}
    \frac{\mathrm{d}\hat{\bm f}_t}{\mathrm{d}t} =
    \cancelto{0}{\bm v(\hat{\bm f}_t, t \mid \mathcal{D})}
    \; - \; \frac{1}{2}\beta(t)\nabla_{\hat{\bm f}_t}\log p(\mathcal{C} \mid \hat{\bm f}_t, \mathcal{D}), 
}
\end{equation}
and the conditional distribution from the underlying whitened SDE becomes $\hat{\bm f}_0 \mid \hat{\bm f}_t, \mathcal D \sim \mathcal{N}(\alpha(t)\hat{\bm f}_t, (1-\alpha^2(t))\mathbf{I}_m)$.
Predictive samples are recovered from the solutions to \eqref{eq:whitened} via $\bm f_0 = \mathcal{W}^{-1}\hat{\bm f}_0$.

\textbf{Why whitening?} The original dynamics of \eqref{eq:NONCONGGPCONDODE} couple two operations: transporting white noise to a correlated Gaussian, and steering samples towards $p(\bm f_0 \mid \mathcal{D}, \mathcal C)$ via non-linear guidance. Whitening handles the first explicitly, so the whitened ODE retains only the irreducible non-Gaussian dynamics. This decomposition is also optimal from a transport perspective: Theorem~\ref{cor:gaussian-action} and Corollary~\ref{cor:isotropy-minimises} show that whitening minimises the upper bound on the squared Wasserstein--2 distance $W_2^2(p(\bm f_0), p(\bm f_1))$ for all schedules $\beta(t)$.

\subsection{Solving the resulting ODE}
\label{subsec:flowgp}

In practice \textsc{FlowGP} incurs two sources of error: (i) the guidance approximation, and (ii) ODE solver discretisation. Theorem~\ref{thm:guidance_stability} shows these decompose cleanly into a guidance error governed by log-likelihood curvature and a discretisation error governed by solver order and step size, which need not compound whenever the likelihood is sufficiently regular. For log-concave likelihoods, including classification, Poisson regression, and convex constraints --- guidance error remains bounded or accumulates at most linearly, controllable by increasing $S$ or reducing step size. Non-log-concave settings, such as mixture models or LLM-based likelihoods, admit only a worst-case exponential bound, though this is likely loose in practice: empirical results in Figure~\ref{fig:llm+physc} are well-behaved despite this classification. Understanding precisely when this regularisation suffices is an important direction for future work. A full error analysis is given in Appendix~\ref{sec:numerical error}.

\textbf{Non-differentiable likelihoods.} A conclusion from Theorem~\ref{thm:guidance_stability} is that, in order to
practically solve \textsc{FlowGP}'s ODE under genuinely non-differentiable conditions such as binary or equality constraints, we require a smooth surrogate. Therefore, we propose binary conditions (e.g. inequality constraints) are wrapped in a probit likelihood with small bandwidth $\nu$, and equality conditions are replaced by a Gaussian likelihood with small variance $\sigma$, each recovering the original constraint in the limit.

\section{Experimental details}
\label{sec:experiments}
 
We now provide details for four experimental settings, each chosen to demonstrate a key benefit of \textsc{FlowGP}. First, we establish competitiveness with bespoke constrained regression methods (Section~\ref{sec:monotonic}). Second, we show that our framework occupies a previously unoccupied position in physics-informed GP modelling, handling non-linear PDEs without bespoke kernels at a fraction of the cost (Section~\ref{sec:physics}). Finally, we explain how we used \textsc{FlowGP} in tandem with large language models to condition on natural language --- a capability beyond the reach of any existing GP method (Section~\ref{sec:LLM}).
Appendix~\ref{appendix:BO} further demonstrates that encoding structural knowledge into the GP predictive distribution via \textsc{FlowGP} improves a down-stream decision-making in Bayesian optimisation.

\textbf{Implementation details.}  We employ a linear diffusion schedule $\beta(t)=10^{-5} + 10t$ and solve the resulting ODE via a simple Euler scheme for $1000$ time steps. The SNR-uniform time discretisation of \cite{nichol2021improved} is used throughout (see Appendix \ref{appendix:snr}); this concentrates steps near $t=0$ where Theorem~\ref{cor:stiffness} identifies stiffness to be maximal, providing a principled justification for this choice in the GP setting. Supported by our ablation in Appendix \ref{appendix:ablate}, we use only $S=5$ Monte Carlo samples for all guidance approximations. Unless otherwise stated, kernel hyperparameters are estimated by maximising the marginal likelihood on the linear-Gaussian component $p(\bm f_0 \mid \mathcal{D})$ before applying our ODE sampler to the full predictive distribution $p(\bm f_0 \mid \mathcal{D}, \mathcal{C})$. Performance is evaluated on held-out test data using root mean squared error (RMSE) and negative log predictive density (NLPD), estimated empirically from 100 samples. Reported timings  include both hyperparameter fitting and sampling, averaged over 100 independent runs. All experiments used a Nvidia A4500 GPU workstation except the one-dimensional monotonic experiment which used an Apple M2 Pro processor with 32GB RAM to allow fair comparisons with competitor approaches. Our code will be released soon, with all additional implementation details provided in Appendices \ref{app:monotonic}-\ref{app:llm}.

\subsection{Monotone and bounded regression: sanity check on a well-studied problem}
\label{sec:monotonic}

Figure~\ref{fig:nonlinear_ode_evolution} validated \textsc{FlowGP} on a well-studied benchmark, demonstrating that our general framework can match the performance of custom problem-specific approaches. We reproduce the case study of \cite{agrell2019gaussian}: recovering $f(x) = \frac{1}{3}\left[\tan^{-1}(20x-10) - \tan^{-1}(-10)\right]$ from seven observations under monotonicity and the bound $0<f(x)<\frac{1}{3}\log(30x+1) + 0.1=u(x)$. Both constraints are handled as inequality conditions via the probit relaxation of Section~\ref{subsec:flowgp}: monotonicity is encoded as a lower bound on finite differences of $\bm f_0$, and boundedness as pointwise upper and lower bounds. Figure~\ref{fig:monotone_full} in Appendix~\ref{app:monotonic} shows that \textsc{FlowGP} closely matches or outperforms bespoke methods from the statistics literature \citep{lin2014bayesian, astfalck2018posterior} that are specifically engineered for this constraint structure, whilst avoiding the computational overhead of virtual observation approaches \citep{agrell2019gaussian, wang2014modeling, riihimaki2010gaussian, da2012gaussian}.

\subsection{Physics-obeying GPs: non-linear constraints without bespoke kernels or inference}
\label{sec:physics}
We demonstrate \textsc{FlowGP} on three physics benchmarks --- the non-linear pendulum, Allen--Cahn, and Burgers' equations --- each handled by specifying only the likelihood $p(\mathcal{C} \mid \bm f_0)$ via ODE and boundary condition residuals. Figure~\ref{fig:llm+physc} shows physically consistent samples across all three, and Table~\ref{tab:physics} confirms competitive accuracy at orders-of-magnitude lower cost with improved uncertainty quantification over the first two benchmarks. See Appendix \ref{app:burgers} for qualitatively similar results on Burger's equations and an additional comparison with non-probabilistic kernel method of \citep{chen2021solving}. Physics constraints are enforced via finite differences at the discretisation grid; finer grids or spectral differentiation are straightforward substitutes requiring no changes to the sampling algorithm. We note that hyperparameter fitting procedures differ between methods — \textsc{FlowGP} fits by maximising the marginal likelihood, whilst \textsc{PHYSS} and \textsc{AUTOIP} optimise within their respective inference frameworks — which may partially confound accuracy comparisons, e.g. the PDE constraint may strongly govern smoothness and lengthscale.

\begin{table}[t]
\centering
{\setlength{\tabcolsep}{5pt}
\renewcommand{\arraystretch}{0.8}
\begin{tabular}{@{}c c c@{}}
\begin{tabular}{@{}p{1.2cm}llll@{}}
\toprule
\textbf{Model} & \textbf{RMSE} & \textbf{NLPD} & \textbf{Time} \\
\midrule
$\textrm{PHYSS}$  & \textbf{0.05} & -0.38 & $1\!\cdot\!10^{2}$ \\
$\textrm{AUTOIP}$ & \textbf{0.05} & -0.08 & $1\!\cdot\!10^{2}$ \\
\midrule
$\textrm{FLOW}$         & 
\textbf{0.05 (0.00)} & -0.68 (0.10) & $\mathbf{4\!\cdot\!10^{0}}$ \\
\midrule
$\textrm{FLOW}_{\textrm{W}}$ & \textbf{0.05 (0.00)}   & \textbf{-1.05 (0.15)} & $\mathbf{4\!\cdot\!10^{0}}$ \\
\bottomrule
\end{tabular}
&
\begin{tabular}{@{}p{1.2cm}llll@{}}
\toprule
\textbf{Model} & \textbf{RMSE} & \textbf{NLPD} & \textbf{Time} \\
\midrule
$\textrm{PHYSS}$ & 0.17 & 1.69 & $5\!\cdot\!10^{3}$ \\
$\textrm{AUTOIP}$             & 0.17 & -0.29 & $1\!\cdot\!10^{4}$ \\
\midrule
$\textrm{FLOW}$               & 0.38 (1.39) & 3.20 (0.05) & $\mathbf{4\!\cdot\!10^{0}}$ \\
\midrule
$\textrm{FLOW}_{\textrm{W}}$  & \textbf{0.13(0.00)} & \textbf{-0.83(0.02)} & $\mathbf{4\!\cdot\!10^{0}}$ \\
\bottomrule
\end{tabular}
\end{tabular}
}

\vspace{0.5em}

\caption{
\textsc{FlowGP} matches existing physics-obeying approaches in accuracy at orders-of-magnitude 
lower cost across physics benchmarks, with whitening ($_W$) consistently improving uncertainty 
quantification.
\textbf{Left:} Non-linear pendulum ($25\times$ lower cost). 
\textbf{Right:} Allen--Cahn equation ($2{,}500\times$ lower cost). 
Timings in seconds; results to 2 d.p.\ with one standard deviation 
and best-performing score in \textbf{bold}. Baseline results are taken from the respective 
publications, which did not report confidence intervals but used comparable hardware 
(Nvidia Titan RTX vs. our Nvidia A4500).
}
\label{tab:physics}
\end{table}

\subsection{LLM-guided Gaussian processes: conditioning on anything, including text}
\label{sec:LLM}

Our final experiment demonstrates conditioning on natural language, serving as a proof of concept highlighting a new frontier enabled by our framework. We demonstrate conditioning on natural language via a product-of-experts construction, defining the target as $\pi(\bm f_0 \mid \mathcal{D}, \mathcal{C})\propto {p(\bm f_0\mid \mathcal{D})}{q(\bm f_0\mid \mathcal{C})}$, where $q(\bm f_0 \mid \mathcal{C})$ is an LLM-derived density (Qwen3.5 \citep{team2026qwen3}) over functions conditioned on text, as used previously to guide neural diffusion processes \cite{biggs2026llm}. 
The score decomposes as ${\nabla_{\bm f_t}\log \pi(\bm f_t \mid \mathcal{D}, \mathcal{C})} = {\nabla_{\bm f_t}\log p(\bm f_t\mid\mathcal{D})} + {\nabla_{\bm f_t}\log \tilde q(\bm f_t\mid\mathcal{C})}$, where $\tilde q (\bm f_t\mid\mathcal{C}) :=\int q(\bm f_0\mid\mathcal{C})p(\bm f_t\mid\bm f_0)\mathrm{d}\bm f_0$ is the Gaussian-smoothed LLM density. Here, the guidance term is structurally identical to Section ~\ref{subsec:MC} and our Monte Carlo approximation applies directly. We adopt the product-of-experts to avoid backpropagation through the LLM, though direct conditioning is feasible given sufficient compute. Figure~\ref{fig:llm+physc} shows that  \textsc{FlowGP}'s samples are simultaneously prior-consistent and faithful to semantic descriptions. It takes less than 2 seconds to generate $100$ samples for each prompt. Full text prompts and GP priors are given in  Appendix \ref{app:llm}. We anticipate that as LLMs become richer sources of ``qualitative data'', \textsc{FlowGP} provides a principled mechanism for incorporating that knowledge into GP models.

\section{Limitations, future work and broader impact}
\label{sec:limitations}
We have established a plug-and-play GP inference scheme supporting arbitrary 
likelihoods by leveraging an equivalence between Gaussian processes 
and linear diffusion models, with predictive sampling implemented as a guided ODE. Three key limitations of \textsc{FlowGP} remain. 

\textbf{Scalability and discretisation.} Cubic scaling in $m$ makes large-scale problems intractable, and all theoretical results are stated for the finite-dimensional discretisation rather than the function-space limit; a rigorous infinite-dimensional treatment, together with integration with sparse GP approximations and spectral differentiation schemes, remains open.

\textbf{Challenging likelihoods.} Theoretical guarantees are strong only for log-concave likelihoods; non-log-concave settings admit only worst-case exponential bounds. Nevertheless, \textsc{FlowGP} performs well empirically across all settings, including those where theory is weakest. We conjecture that Gaussian smoothing induced by the noising process regularises non-log-concave likelihoods in practice. Importance weight collapse remains a plausible concern in high dimensions. Understanding the precise conditions under which these failure modes arise, and developing adaptive remedies is an important direction for future work. 

\textbf{Hyperparameter misspecification.} Kernel hyperparameters are estimated from $p(\bm f_0 \mid \mathcal{D})$ alone, introducing a systematic bias when $\mathcal{C}$ is highly informative. A natural remedy is to optimise hyperparameters jointly over  $p(\bm f_0 \mid \mathcal{D}, \mathcal{C})$, perhaps by backpropagating through the ODE solver, though this introduces additional computational cost. Adaptive remedies for prior--constraint incompatibility, such as re-initialising hyperparameters when the guidance term dominates, are also natural extensions.

\textbf{Broader impact}. \textsc{FlowGP} is a methodological contribution to probabilistic modelling. By enabling arbitrary domain knowledge — including physical laws, structural constraints, and natural language — to be incorporated into GP posteriors without bespoke derivations, the framework broadens the scope of principled uncertainty quantification in scientific and engineering applications. The main negative risk is that the LLM-guided component inherits biases from the underlying language model, which may propagate into probabilistic predictions in ways that are difficult to audit. Beyond this, we foresee no misuse risks beyond those applicable to GP modelling generally.

\bibliography{references}
\bibliographystyle{unsrt}

\newpage

\appendix
\onecolumn

\section{Derivation of our flow's marginal and joint distributions}
\label{appendix:flow_derivations}

Under the prior $\bm f_0 \sim \mathcal{N}(\mathbf{m}_*, \textbf{K}_{**} )$ and the corruption model \eqref{eq:marginals}, the pair $(\bm f_0,\bm f_t)$ is jointly Gaussian. Writing
\[
\bm f_t
=
\alpha(t)\bm f_0+\sqrt{1-\alpha^2(t)}\bm z,
\qquad
\bm z\sim \mathcal N(\bm 0,\mathbf{I}_m),
\qquad
\bm z\perp \bm f_0,
\]
we obtain
\[
\E[\bm f_t]=\alpha(t)\mathbf{m}_*,
\qquad
\textrm{Cov}(\bm f_0,\bm f_t)=\alpha(t)\K_{**},
\]
and
\begin{equation}
\label{eq:A_definition_notes}
\textrm{Cov}(\bm f_t)=\mathbf{A}(t):=\alpha^2(t)\K_{**}+(1-\alpha^2(t))\mathbf{I}_m.
\end{equation}

Hence, the joint distribution is
\[
\begin{bmatrix}
\bm f_0\\
\bm f_t
\end{bmatrix}
\sim
\mathcal N\left(
\begin{bmatrix}
\mathbf{m}_*\\
\alpha(t)\mathbf{m}_*
\end{bmatrix},
\begin{bmatrix}
\K_{**} & \alpha(t)\K_{**}\\
\alpha(t)\K_{**} & \mathbf{A}(t)
\end{bmatrix}
\right).
\]

Standard Gaussian conditioning then gives
\begin{equation}
\label{eq:bridge_distribution_notes}
\bm f_0 \mid \bm f_t
\sim
\mathcal N\bigl(\bm\mu_{0\mid t},\mathbf\Sigma_{0\mid t}\bigr),
\end{equation}
with
\begin{equation}
\label{eq:bridge_mean_notes}
\bm\mu_{0\mid t}
=
\mathbf{m}_*
+
\alpha(t)\K_{**}\mathbf{A}(t)^{-1}
\bigl(\bm f_t-\alpha(t)\mathbf{m}_*\bigr),
\end{equation}
and
\begin{equation}
\label{eq:bridge_cov_notes}
\mathbf\Sigma_{0\mid t}
=
\K_{**}
-
\alpha^2(t)\K_{**}\mathbf{A}(t)^{-1}\K_{**}.
\end{equation}

In particular (as will be useful for Appendix \ref{appendix:fisher}),

\begin{equation}
\label{eq:bridge_mean_expectation_notes}
\E[\bm f_0\mid \bm f_t]
=
\mathbf{m}_*
+
\alpha(t)\K_{**}\mathbf{A}(t)^{-1}
\bigl(\bm f_t-\alpha(t)\mathbf{m}_*\bigr).
\end{equation}

Note that conditioning on $\mathcal{D}$ simply replaces $\mathcal{N}(\mathbf{m}_*,\mathbf{K}_{**})$ by $\mathcal{N}(\mathbf{m}_{*|\bm{y}},\mathbf{K}_{**|\bm{y}})$, so $\bm{\mu}_{0\mid t}$ and $\mathbf{\Sigma}_{0\mid t}$ are the same expressions with $\mathbf{m}_{*|\bm{y}}$, $\mathbf{K}_{**|\bm{y}}$ and $\mathbf{A}_{|\bm{y}}(t)$ in place of $\mathbf{m}_*, \mathbf{K}_{**}$ and $\mathbf{A}(t)$.


\textbf{Under whitening.} In whitened space, the mean is $\mathbf{0}_m$ and the covariance matrix is $\mathbf{I}_m$, so we have the greatly simplified conditional distribution 

\begin{equation}
\label{eq:whitened_bridge_distribution_notes}
\hat{\bm f}_0 \mid \hat{\bm f}_t
\sim
\mathcal N\bigl(\alpha(t)\hat{\bm f}_t, (1-\alpha^2(t))\textbf{I}_m\bigr).
\end{equation}

\section{Approximating the guidance term}
\label{appendix:guidance_calc}
 For each ODE evaluation, we draw $S$ samples $\bm f_0^{(i)}\sim p(\bm f_0\mid\bm f_t, \mathcal{D}), i = 1,\dots,S,$ to obtain $p(\mathcal{C} \mid \bm f_t, \mathcal{D})\approx \frac{1}{S}\sum_i p(\mathcal{C} \mid \bm f_0^{(i)})$. We first take the gradient with respect to $\bm f_t$,
\begin{align}
    \nabla_{\bm{f}_t} \log p(\mathcal{C} \mid \bm{f}_t, \mathcal{D})
    \approx& \nabla_{\bm f_t} \log \frac{1}{S}\sum_{i=1}^S p(\mathcal{C} \mid \bm f_0^{(i)})=  \sum_{i=1}^S\frac{ \nabla_{\bm f_t}p(\mathcal{C}\mid\bm f_0^{(i)})}{ \sum_{r} p(\mathcal{C}|\bm f_0^{(r)})}, \label{eq:overall}
\end{align}
and then apply the chain rule to each term $ \nabla_{\bm f_t}p(\mathcal{C}\mid\bm f_0^{(i)})$,
\begin{align}
    \nabla_{\bm f_t}p(\mathcal{C}\mid\bm f_0^{(i)}) = \left[\frac{\partial \bm f_0^{(i)}}{\partial \bm f_t}\right]^T\nabla_{\bm f_0}p(\mathcal{C}\mid\bm f_0^{(i)}).
\end{align}
Rewriting using the log-derivative trick, we obtain
\begin{align}
    \nabla_{\bm f_t}p(\mathcal{C}\mid\bm f_0^{(i)}) = p(\mathcal{C}\mid\bm f_0^{(i)})\left[\frac{\partial \bm f_0^{(i)}}{\partial \bm f_t}\right]^T\nabla_{\bm f_0}\log p(\mathcal{C}\mid\bm f_0^{(i)}). \label{eq:eachbit}
\end{align}
Substituting \eqref{eq:eachbit} into \eqref{eq:overall} yields 
\begin{align}
    \nabla_{\bm{f}_t} \log p(\mathcal{C} \mid \bm{f}_t, \mathcal{D})
    \approx \sum_{i=1}^S\frac{p(\mathcal{C}\mid\bm f_0^{(i)})\left[\frac{\partial \bm f_0^{(i)}}{\partial \bm f_t}\right]^T \nabla_{\bm f_0}\log p(\mathcal{C}\mid\bm f_0^{(i)})}{ \sum_{r} p(\mathcal{C}\mid\bm f_0^{(r)})}.
\end{align}
Introducing normalised importance weights $\bar w^{(i)} = p(\mathcal{C}\mid\bm f_0^{(i)}) / \sum_r p(\mathcal{C}\mid\bm f_0^{(r)})$ and noting that, under our VP corruption kernel, the Jacobian is the symmetric matrix $\frac{\partial \bm f_0^{(i)}}{\partial \bm f_t} = \alpha(t)\textbf{K}_{**}\mathbf{A}^{-1}(t)$, leads to our proposed guidance term approximation:
\begin{align}
\boxed{
    \nabla_{\bm{f}_t} \log p(\mathcal{C} \mid \bm{f}_t, \mathcal{D})
    \approx \alpha(t)\textbf{K}_{**|\bm y}\mathbf{A}^{-1}_{|\bm y}(t)\left[\sum_{i=1}^S \bar w^{(i)}\nabla_{\bm f_0}\log p(\mathcal{C}\mid\bm f_0^{(i)})\right].
    \label{eq:finalform}
    }
\end{align}
See Appendix \ref{app:practical_guidance} for practical implementation details.

\section{An alternative guidance approximation using Fisher's Identity}
\label{appendix:fisher}
For a gradient-free estimate of the score we may consider Fisher's Identity. First, consider the predictive distribution of $\bm f_t$. The derivative of this quantity is
\[
\nabla_{\bm f_t} p(\bm f_t\mid \mathcal{D}, \mathcal C)
=
\int \nabla_{\bm f_t} p(\bm f_t\mid \bm f_0)p(\bm f_0\mid \mathcal{D}, \mathcal C)\dd \bm f_0.
\]
Using
\[
\nabla_{\bm f_t} p(\bm f_t\mid \bm f_0)
=
p(\bm f_t\mid \bm f_0)\nabla_{\bm f_t}\log p(\bm f_t\mid \bm f_0),
\]
we obtain
\[
\nabla_{\bm f_t} p(\bm f_t\mid \mathcal{D}, \mathcal C)
=
\int p(\bm f_t\mid \bm f_0)\nabla_{\bm f_t}\log p(\bm f_t\mid \bm f_0)p(\bm f_0\mid \mathcal{D}, \mathcal C)\dd \bm f_0.
\]
Dividing by $p(\bm f_t\mid \mathcal{D}, \mathcal C)$ on both sides and applying Bayes' Theorem gives
\begin{equation}
\label{eq:fisher_identity_notes}
\nabla_{\bm f_t}\log p(\bm f_t\mid \mathcal{D}, \mathcal C)
=
\E\left[
\nabla_{\bm f_t}\log p(\bm f_t\mid \bm f_0)
\middle|
\bm f_t,\mathcal{D}, \mathcal C
\right].
\end{equation}

Under a variance preserving Gaussian corruption kernel we have that,
\begin{equation}
\label{eq:gaussian_kernel_score_notes}
\nabla_{\bm f_t}\log p(\bm f_t\mid \bm f_0)
=
-\frac{1}{1-\alpha^2(t)}\bigl(\bm f_t-\alpha(t)\bm f_0\bigr).
\end{equation}
Substituting \eqref{eq:gaussian_kernel_score_notes} into \eqref{eq:fisher_identity_notes} therefore gives
\begin{equation}
\label{eq:conditional_score_bridge_notes}
\nabla_{\bm f_t}\log p(\bm f_t\mid \mathcal{D}, \mathcal C)
=
-\frac{1}{1-\alpha^2(t)}
\left(
\bm f_t-\alpha(t)\E[\bm f_0\mid \bm f_t,\mathcal{D}, \mathcal C]
\right).
\end{equation}

Similarly, in the case where we only condition on $\mathcal{D}$,
\begin{equation}
\label{eq:unconditional_score_bridge_notes}
\nabla_{\bm f_t}\log p(\bm f_t\mid \mathcal{D})
=
-\frac{1}{1-\alpha^2(t)}
\left(
\bm f_t-\alpha(t)\E[\bm f_0\mid \bm f_t, \mathcal{D}]
\right).
\end{equation}

From \eqref{eq:GPCONDODE}, we can subtract \eqref{eq:unconditional_score_bridge_notes} from \eqref{eq:conditional_score_bridge_notes} to get the guidance identity:
\begin{equation}
\label{eq:guidance_identity_notes}
\boxed{
\nabla_{\bm f_t}\log p(\mathcal C\mid \bm f_t, \mathcal{D})
=
\frac{\alpha(t)}{1-\alpha^2(t)}
\left(
\E[\bm f_0\mid \bm f_t, \mathcal{D}, \mathcal C]
-
\E[\bm f_0\mid \bm f_t, \mathcal{D}]
\right).
}
\end{equation}
The guidance is thus a scaled difference between the expected  latent state $\bm f_0$  under full conditioning versus that conditional just on $\mathcal{D}$. Crucially, this decomposition separates tractable and intractable components, with the second term available in closed-form due to our knowledge of the conditional distribution under our noising process which, analogous to \eqref{eq:joint} , is given by 
\begin{align} 
     \bm f_0 \mid \bm f_t, \mathcal{D} \sim~ & ~\mathcal{N}\left(\mathbf{m}_{*|\bm{y}} + \alpha(t)\mathbf{K}_{**|\bm{y}}\mathbf{A}_{|\bm{y}}(t)^{-1}\left(\bm f_t - \alpha(t)\mathbf{m}_{*|\bm{y}}\right),\right.  \nonumber \\ 
     &\left.\qquad \mathbf{K}_{**|\bm{y}} - \alpha^2(t)\mathbf{K}_{**|\bm{y}}\mathbf{A}(t)^{-1}\mathbf{K}_{**|\bm{y}}\right).\label{eq:jointconditional}
\end{align}

\textbf{A Monte Carlo approximation of} $\mathbb{E}[\bm f_0\mid \bm f_t, \mathcal{D}, \mathcal{C}]$: This conditional expectation is intractable for arbitrary non-linear or simulator-based likelihoods. However, we can exploit the fact that $p(\bm f_0 \mid \bm f_t,\mathcal{D})$ in \eqref{eq:jointconditional} is Gaussian, enabling a simple Monte Carlo approximation that requires the weaker assumption that our likelihood $p(\mathcal C\mid \bm f_0)$ can be evaluated point-wise. Specifically, for each ODE evaluation, we sample from the Gaussian distribution $\bm f_0^{(i)}\sim p(\bm f_0\mid\bm f_t,\mathcal{D})$ for $i=1,..., S$ , compute importance weights $w^{(i)} \propto p(\mathcal C\mid \bm f_0^{(i)})$, and form the self-normalised approximation
\begin{equation}
\label{eq:MC}
\E[\bm f_0\mid \bm f_t,\mathcal{D}, \mathcal C]
\approx
\sum_{i=1}^S \bar w^{(i)} \bm f_0^{(i)},
\qquad
\bar w^{(i)} = \frac{w^{(i)}}{\sum_{r} w^{(r)}}.
\end{equation}

This estimator has high variance when the conditioning event is sharp or the dimensionality is large, likely due to it computing a difference of conditional expectations rather than exploiting local shape information through a gradient. Indeed, in Appendix \ref{appendix:ablate} we provide empirical evidence that this estimator requires orders of magnitude more samples than our gradient-based Monte Carlo estimator in Appendix~\ref{appendix:guidance_calc}.

\section{Numerical Stability} \label{app:theory}

We examine two ways to assess the discretisation error from the ODE solver. The first is via \textit{stiffness}, which provides an notion of maximal instantaneous change in the solution of the linear system. This is an important notion for solvers with fixed discretisation. We will show that this is linked to the condition number of a GP. The next is via \textit{transport}, which provides a measure of change from the initial to final states. This provides a richer notion of solver error as it characterises the whole path.

\subsection{Stiffness}

For prior and linear conditional sampling, the numerical stability of the ODE solution can be characterised through the \emph{stiffness} of the underlying dynamical system. Loosely, stiffness provides a measure of the step size required for stable numerical integration: stiffer systems require relatively small time steps, and vice-versa. Consider a linear ODE of the form $\frac{d \bm{f}_t}{dt} = -\textbf{D}(t)\, \bm{f}_t$. For general $\textbf{D}(t)$, a common definition of stiffness is
\begin{equation} \label{eqn:stiffness}
 \mathrm{stiffness} = \max_{t \in [0,1]} \left\{ \frac{\sigma_{\max}(\textbf{D}(t))}{\sigma_{\min}(\textbf{D}(t))}\right\}
\end{equation}
where $\sigma_{\max}(\textbf{D}(t))$ and $\sigma_{\min}(\textbf{D}(t))$ are the maximum and minimum singular values of $\textbf{D}(t)$, respectively. In the traditional setting of diffusion models, the matrix $\textbf{D}(t)$ depends on the unknown score function and is therefore not available in closed form. 
In contrast, in our GP setting where $p(\bm f_0)$ is specified by a GP prior, \eqref{eq:GPODE} admits an explicit linear representation. Below, we provide Theorem~\ref{cor:stiffness} that applies uniformly to both prior sampling with $\textbf K_{**}$ (from \ref{eq:GPODE}) and linear-conditional sampling with $\textbf K_{** \mid \bm y}$ (from \ref{eq:GPODE2}). In what follows, $\textbf K$ denotes either case, as appropriate.

\begin{theorem} \label{cor:stiffness}
As $n\to\infty$, define a sequence $\{\K_n\}_{n \in \mathbb{Z}^+}$ with corresponding sequence of condition numbers $\kappa(\K_n)=\lambda_{n,\max}/\lambda_{n,\min}$. Assume the regime where maximum eigenvalues $\lambda_{n, \max}< 1$ (hence $\lambda_{n, \max}= \mathcal{O}(1)$) are bounded in $n$ (i.e. $\sup_n \lambda_{n,\max} < 1$) and the sequence of minimum eigenvalues $\lambda_{n,\min}\to 0$. Define the corresponding stiffness sequence $S_n$ induced by $\K_n$ in \eqref{eq:GPODE}. Then
\[
S_n
=
\kappa(\mathbf K_n)\,
\frac{1-\lambda_{n,\min}}{1-\lambda_{n,\max}},
\]
and consequently $S_n \asymp \kappa(\K_n)$ has double-sided asymptotic equivalence as $n\to\infty$.
\end{theorem}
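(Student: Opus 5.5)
The plan is to compute the singular values of the linear drift matrix in \eqref{eq:GPODE} explicitly, using the fact that $\mathbf{A}(t)$ and $\K$ share an eigenbasis, and then maximise the ratio over $t$. Writing \eqref{eq:GPODE} in the form $\frac{d\bm f_t}{dt} = -\mathbf{D}(t)\bm f_t + (\text{affine term})$, we have $\mathbf{D}(t) = \tfrac12\beta(t)\bigl(\mathbf{I}_m - \mathbf{A}(t)^{-1}\bigr)$. The affine term $-\tfrac12\beta\mathbf{A}^{-1}\mathbf{b}$ does not enter the stiffness definition \eqref{eqn:stiffness}. Diagonalise $\K_n = \mathbf{U}\,\mathrm{diag}(\lambda_i)\,\mathbf{U}^\intercal$. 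Then $\mathbf{A}(t)$ has eigenvalues $a_i(t) = 1 + \alpha^2(t)(\lambda_i - 1)$, and $\mathbf{D}(t)$ is symmetric with eigenvalues
\[
\tfrac12\beta(t)\Bigl(1 - \tfrac{1}{a_i(t)}\Bigr) = -\tfrac12\beta(t)\,\frac{\alpha^2(t)(1-\lambda_i)}{1+\alpha^2(t)(\lambda_i-1)}.
\]
Since $\lambda_i \le \lambda_{n,\max} < 1$, all of these share a common sign. This is the negative-definiteness regime referred to in Section~\ref{subsec:whiten}. The singular values are therefore the absolute values. In the ratio $\sigma_{\max}/\sigma_{\min}$, the common factor $\tfrac12\beta(t)\alpha^2(t)$ cancels; this requires $\beta(t)>0$ and $\alpha(t)>0$, which hold on $[0,1]$ since $\alpha(1)\to 0^+$.

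Next, set $g_s(\lambda) = (1-\lambda)/(1 + s(\lambda-1))$ with $s = \alpha^2(t) \in (0,1]$. This function is decreasing in $\lambda$ on $[0,1)$. Hence $\sigma_{\max}$ corresponds to $\lambda_{n,\min}$ and $\sigma_{\min}$ to $\lambda_{n,\max}$, giving
\[
\frac{\sigma_{\max}(\mathbf{D}(t))}{\sigma_{\min}(\mathbf{D}(t))} = \frac{1-\lambda_{n,\min}}{1-\lambda_{n,\max}}\cdot h(s), \qquad h(s) := \frac{1+s(\lambda_{n,\max}-1)}{1+s(\lambda_{n,\min}-1)}.
\]
The step I expect to carry the content is maximising over $t$, i.e.\ over $s$. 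A direct quotient-rule computation gives $h'(s) = (\lambda_{n,\max}-\lambda_{n,\min})/(1+s(\lambda_{n,\min}-1))^2 \ge 0$. So $h$ is nondecreasing and attains its maximum at $s=1$, which is $t=0$ because $\alpha(0)=1 $ and $\alpha$ is monotone. There $h(1) = \lambda_{n,\max}/\lambda_{n,\min} = \kappa(\K_n)$. This yields the exact formula $S_n = \kappa(\K_n)\,\frac{1-\lambda_{n,\min}}{1-\lambda_{n,\max}}$, independent of the schedule $\beta$.

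Finally, for the asymptotic equivalence, bound the prefactor on both sides. It is at most $1/(1-\sup_n\lambda_{n,\max}) < \infty$ by the uniform bound assumption. It is at least $1-\lambda_{n,\min}$, which tends to $1$ since $\lambda_{n,\min}\to 0$, so it exceeds $1/2$ for all large $n$. Hence there are constants $0<c\le C<\infty$ with $c\,\kappa(\K_n) \le S_n \le C\,\kappa(\K_n)$ eventually, i.e.\ $S_n \asymp \kappa(\K_n)$. The same argument applies verbatim with $\K_{**|\bm y}$ in place of $\K_{**}$, using \eqref{eq:GPODE2}.
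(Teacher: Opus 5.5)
Your proposal is correct and follows essentially the same route as the paper. It diagonalises $\mathbf{D}(t)$ in the shared eigenbasis and takes singular values as absolute eigenvalues, which share a sign because $\lambda_{n,\max}<1$. It then shows the ratio is nondecreasing in $\alpha^2(t)$, so the maximum is attained at $t=0$, and bounds the prefactor; your quotient-rule computation of $h'(s)$ is just the paper's log-derivative of $S(a)$ in different variables. One small simplification: the prefactor $(1-\lambda_{n,\min})/(1-\lambda_{n,\max})$ is always at least $1$ because $\lambda_{n,\min}\le\lambda_{n,\max}$, so the ``eventually exceeds $1/2$'' step is unnecessary.
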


\begin{proof}
The GP probability-flow ODE has negative Jacobian
\begin{equation} \label{eqn:jacobian}
    \textbf{D}(t) = \frac{1}{2} \beta(t)(\textbf{I}_m - \textbf{A}(t)^{-1})
\end{equation}
where $\textbf{A}(t) = \alpha^2(t) \textbf{K} + (1 - \alpha^2(t))\textbf{I}_n$.
Denote here the ordered set of eigenvalues of $\textbf{A}(t)$ as $\lambda_1(t) \geq \cdots \geq \lambda_n(t) > 0$. From \eqref{eqn:jacobian}, the eigenvalues of $\textbf{D}(t)$ are given as
\[
\mu_i(t) = \frac{1}{2}\beta(t)(1 - \lambda_i^{-1}(t)),
\]
and since $\textbf{D}(t)$ is symmetric, its singular values are
\[
\sigma_i(t) = |\mu_i(t)| = \frac{1}{2}\beta(t)\left|1 - \lambda_i^{-1}(t)\right|.
\]
Thus, the stiffness at time $t$ is
\[
\mathrm{stiffness}(t)
=
\frac{\sigma_{\max}(t)}{\sigma_{\min}(t)}
=
\frac{1 - \lambda_n(t)^{-1}}{1 - \lambda_1(t)^{-1}} =
\kappa(\textbf{A}(t)) \cdot
\frac{1 - \lambda_n(t)}{1 - \lambda_1(t)}.
\]

We further know that $\lambda_i(t) = \alpha^2(t) \lambda_i(0) + 1 - \alpha^2(t)$ for some monotonically decreasing  $\alpha^2(t)$ with $\alpha^2(0) = 1$ and $\alpha^2(1) > 0$. Set $a(t)\coloneqq \alpha^2(t)\in(0,1]$ and write $\lambda_i(t) = 1+a(t)(\lambda_i(0)-1)$. Further define 

\[
S(a)
=
\frac{\lambda_1(a)}{\lambda_n(a)}
\cdot
\frac{1 - \lambda_n(a)}{1 - \lambda_1(a)},
\qquad
\lambda_i(a) = 1 + a(\lambda_i(0)-1).
\]
Define $u_i \coloneqq \lambda_i(0) - 1$, so that $u_i < 0$ under the assumption $\lambda_i(0) < 1$, and
\[
\lambda_i(a) = 1 + a u_i, \qquad 1 - \lambda_i(a) = -a u_i.
\]
Hence,
\[
S(a)
=
\frac{1 + a u_1}{1 + a u_n}
\cdot
\frac{-a u_n}{-a u_1}
=
\frac{u_n}{u_1}\cdot \frac{1 + a u_1}{1 + a u_n}.
\]
Taking logarithms,
\[
\log S(a)
=
\log\!\left(\frac{u_n}{u_1}\right)
+
\log(1 + a u_1)
-
\log(1 + a u_n),
\]
and therefore
\begin{align*}
\frac{\mathrm d}{\mathrm d a} \log S(a)
&=
\frac{u_1}{1 + a u_1}
-
\frac{u_n}{1 + a u_n} \\
&=
\frac{u_1(1 + a u_n) - u_n(1 + a u_1)}
{(1 + a u_1)(1 + a u_n)} \\
&=
\frac{u_1 - u_n}{(1 + a u_1)(1 + a u_n)}.
\end{align*}
Since $u_1 - u_n = \lambda_1(0) - \lambda_n(0) \ge 0$ and $\lambda_i(a) = 1 + a u_i > 0$ for all $a \in (0,1]$, the denominator is strictly positive. Hence
\[
\frac{\mathrm d}{\mathrm d a} \log S(a) \ge 0,
\]
so $S(a)$ is non-decreasing in $a$.

As $a(t) = \alpha^2(t)$ is decreasing in $t$, it follows that $S(t)$ is strictly decreasing in $t$. Therefore the maximal stiffness is attained at $t=0$, where $\mathbf A(0) = \mathbf K_n$, giving
\[
S_n
=
\max_{t \in [0,1]} \mathrm{stiffness}(t)
=
\mathrm{stiffness}(0)
=
\kappa(\mathbf K_n)\,
\frac{1 - \lambda_{n,\min}}{1 - \lambda_{n,\max}}.
\]

Finally, since $\sup_n \lambda_{n,\max} < 1$ and $\lambda_{n,\min} \to 0$, the factor
\[
\frac{1 - \lambda_{n,\min}}{1 - \lambda_{n,\max}}
\]
is bounded above and below by positive constants independent of $n$. Hence $S_n \asymp \kappa(\mathbf K_n)$ which establishes the claimed double-sided asymptotic equivalence.
\end{proof}

Theorem~\ref{cor:stiffness} makes clear that the diffusion formulation does not circumvent the fundamental numerical difficulties associated with ill-conditioned GPs. In the prior and linearly conditioned settings, where Theorem~\ref{cor:stiffness} is valid, stiffness is governed by the same spectral quantities that control the condition number of the Gram matrix. In this sense, the diffusion framework provides an alternative sampling mechanism, but it does not eliminate the intrinsic spectral ill-conditioning of the underlying GP. When the conditioning information is no longer linear and Gaussian, an explicit expression for $\textbf{D}(t)$, and hence for stiffness, is typically unavailable. Nevertheless, the notion of stiffness remains well defined. Intuitively, increasingly informative or highly nonlinear conditioning introduces strong anisotropy and local curvature in the conditional score, aligning with the asymptotic regime studied.

\subsection{Transport} 

Stiffness and transport quantify fundamentally different aspects of the generative flow. Stiffness is a local, trajectory-level property: it reflects sensitivity of the ODE to perturbations. Transport, by contrast, is a measure-level property: it quantifies how much probability mass is displaced to transform $p(\bm f_1)$ into $p(\bm f_0)$. 
In effect, stiffness characterises numerical sensitivity of sample paths, but it does not quantify how much mass is rearranged to transform the base distribution into the target, or vice versa. This is instead governed by the geometry of the distributional path and necessitates a measure-level perspective. To make this precise, we're required to explicitly index the time-indexed family of distributions of the process. For the stochastic process \(\{\bm f_t\}_{t\in[0,1]}\), denote by \(p_t\) the density of \(\bm f_t\) at time \(t\). In particular, \(p_0\) and \(p_1\) correspond to the target and base distributions, respectively. We will therefore work with the curve \(\{p_t\}_{t\in[0,1]}\) in the space of probability measures.

We quantify the cost of transporting a distributional path \(\{p_t\}_{t\in[0,1]}\) by its kinetic action. Given a velocity field \(v(\cdot,t)\), e.g. \eqref{eq:GPODE} or \eqref{eq:ODE2}, define
\begin{equation}\label{eq:action}
\mathcal A
:= \frac12\int_0^1 \mathbb E\!\left[\|v(\bm f_t,t)\|^2\right]\,dt
=
\frac12\int_0^1 \int \|v(\bm f,t)\|^2\,p_t(\bm f)\,d\bm f\,dt.
\end{equation}
In general, \cite[Chapter 5]{chewi2025statistical} shows that the pair \(\{p_t,v_t\}_{t\in[0,1]}\) is admissible if it satisfies the continuity equation
\begin{equation}\label{eq:continuity}
\partial_t p_t(\bm f)+\nabla_{\bm f}\cdot\bigl(p_t(\bm f)\,v(\bm f,t)\bigr)=0.
\end{equation}
A given path \(\{p_t\}_{t\in[0,1]}\) may admit multiple velocity fields satisfying \eqref{eq:continuity}, and these may yield different values of the action \eqref{eq:action}. The Benamou--Brenier theorem \cite{benamou2000computational} identifies the minimum possible action.

\begin{theorem}[Benamou--Brenier]\label{thm:BB}
Let \(p_0,p_1\in\mathcal P_{2,\mathrm{ac}}(\mathbb R^d)\). Then
\[
W_2^2(p_0,p_1)
=
\inf_{\{p_t,v_t\}}
\left\{
\int_0^1 \int \|v(\bm f,t)\|^2\,p_t(\bm f)\,d\bm f\,dt
\;\middle|\;
\partial_t p_t+\nabla\cdot(p_tv_t)=0
\right\}
\]
where $W_2^2(p_0,p_1)$ is the squared Wasserstein--2 distance between $p_0$ and $p_1$. In particular, every admissible pair \(\{p_t,v_t\}\) satisfies $W_2^2(p_0,p_1)\le 2\mathcal A$.
\end{theorem}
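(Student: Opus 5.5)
The plan has two parts. First I prove the variational identity by matching an upper bound and a lower bound on the infimum. Then I read off the corollary $W_2^2(p_0,p_1)\le 2\mathcal A$ directly from the definition \eqref{eq:action}. For any admissible pair $\{p_t,v_t\}$, the integral in the infimum equals $\int_0^1\mathbb E\|v(\bm f_t,t)\|^2\,dt = 2\mathcal A$. The infimum is no larger than this value, which gives the final claim at once. So the real work is the identity itself, which I take from the standard optimal transport literature \cite{benamou2000computational} and sketch below.

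\emph{Upper bound (construction).} Since $p_0$ is absolutely continuous, Brenier's theorem supplies an optimal map $T=\nabla\varphi$, with $\varphi$ convex, pushing $p_0$ to $p_1$. Its cost is $\int\|T(\bm f)-\bm f\|^2p_0(\bm f)\,d\bm f = W_2^2(p_0,p_1)$. I define McCann's displacement interpolation $T_t=(1-t)\mathrm{Id}+tT$ and set $p_t=(T_t)_\#p_0$. Because $\varphi$ is convex, each $T_t$ with $t<1$ is injective. This lets me define the Eulerian velocity $v_t=(T-\mathrm{Id})\circ T_t^{-1}$ on the support of $p_t$. Pushforward under a time-differentiable family of maps gives the continuity equation \eqref{eq:continuity} in the weak (distributional) sense. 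By construction the particle velocities are constant in time, so $\int\|v_t\|^2p_t = \int\|T-\mathrm{Id}\|^2p_0 = W_2^2$ for every $t$. Integrating over $t\in[0,1]$ gives $W_2^2$, so the infimum is at most $W_2^2(p_0,p_1)$.

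\emph{Lower bound.} Take any admissible pair with finite action. If $v$ is Lipschitz in space, its flow map $X_t$ solves $\dot X_t=v_t(X_t)$ and transports $p_0$ to $p_t$. Uniqueness for the continuity equation justifies this. Then $(X_0,X_1)_\#p_0$ is a coupling of $p_0$ and $p_1$. By Jensen (Cauchy--Schwarz in time),
\[
\|X_1-X_0\|^2=\Bigl\|\int_0^1 v_t(X_t)\,dt\Bigr\|^2\le\int_0^1\|v_t(X_t)\|^2dt .
\]
Taking expectations under $p_0$ gives $W_2^2(p_0,p_1)\le\int_0^1\int\|v_t\|^2p_t\,dt$.

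\emph{Main obstacle.} The hardest step is removing the Lipschitz assumption in the lower bound, since a general admissible $v_t$ only lies in $L^2(p_t)$. My first attempt is a mollification argument. Replace $p_t$ by $p_t*\rho_\varepsilon$ and the momentum $p_tv_t$ by $(p_tv_t)*\rho_\varepsilon$, and set $v^\varepsilon_t$ equal to their ratio. This pair still solves \eqref{eq:continuity}, and $v^\varepsilon_t$ is locally Lipschitz. Joint convexity of $(p,\bm m)\mapsto\|\bm m\|^2/p$ gives $\int\|v^\varepsilon_t\|^2p^\varepsilon_t\le\int\|v_t\|^2p_t$. I then apply the smooth argument and pass $\varepsilon\to0$, using lower semicontinuity of $W_2$ under narrow convergence with uniformly bounded second moments. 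The fallback is Ambrosio--Gigli--Savaré's superposition principle, which represents any admissible pair as a measure on absolutely continuous curves; the same Jensen bound then applies curve by curve.

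For the GP flows in this paper the velocity fields \eqref{eq:GPODE} are affine in $\bm f_t$, hence globally Lipschitz. So the smooth case already suffices for the corollary as it is used here.
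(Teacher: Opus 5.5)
Your proposal is correct, and on the one part the paper actually argues it coincides with the paper. The paper does not prove the variational identity; it quotes it from \cite{benamou2000computational}. The clause $W_2^2(p_0,p_1)\le 2\mathcal A$ is then read off exactly as you do: by \eqref{eq:action}, the objective inside the infimum equals $2\mathcal A$ for any admissible pair.

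What you add is the standard proof of the cited identity. You use the Brenier map and McCann interpolation for the upper bound, characteristics plus Jensen for the lower bound, and then mollification via joint convexity of $(p,\bm m)\mapsto\|\bm m\|^2/p$ (or the superposition principle) to remove regularity. This makes the argument self-contained where the paper relies on the literature.

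Two details are worth tightening. First, $T_t=\nabla\bigl(\tfrac{1-t}{2}\|\cdot\|^2+t\varphi\bigr)$ is $(1-t)$-strongly monotone, so it has a $(1-t)^{-1}$-Lipschitz inverse, not merely injectivity. That is what guarantees $p_t$ is absolutely continuous, and hence admissible in the density form the statement is written in. Second, the mollified $v^\varepsilon$ is only locally Lipschitz. You therefore need the standard fact that its characteristics do not blow up for a.e.\ starting point when the action is finite, rather than your globally Lipschitz argument verbatim; the superposition fallback sidesteps this.

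Finally, the use made of the result in Theorem~\ref{cor:gaussian-action} needs only your lower bound in the globally Lipschitz case, as you observe. That case applies uniformly in $t$ because $\beta$ is bounded and $\mathbf A(t)=\alpha^2(t)\mathbf K+(1-\alpha^2(t))\mathbf I$ has spectrum bounded below by $\min(\lambda_{\min}(\mathbf K),1)$.
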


In particular, in the Gaussian setting considered herein we have $p_t = \mathcal N(0,\textbf A(t))$ and linear velocity field $v(\bm f,t) = -\textbf D(t)\,\bm f$. This leads to Theorem~\ref{cor:gaussian-action} that upper bounds the Wasserstein-2 distance.

\begin{theorem} \label{cor:gaussian-action}
Suppose \(p_t=\mathcal N(0,\textbf A(t))\), where \(\textbf A(t)\in\mathbb R^{n\times n}\) is symmetric positive definite for each \(t\in[0,1]\), and suppose the probability flow velocity $v(\bm f,t) = -\textbf D(t)\,\bm f$, where $\textbf D(t)=\frac12\beta(t)\bigl(I-\textbf A(t)^{-1}\bigr)$. Then
\begin{align*}
W_2^2(p_0,p_1)
\le \frac14\int_0^1 \beta(t)^2
\sum_{i=1}^n
\frac{(\lambda_i(t)-1)^2}{\lambda_i(t)}\,dt,
\end{align*}
where \(\lambda_1(t),\dots,\lambda_n(t)\) are the eigenvalues of \(\textbf A(t)\).
\end{theorem}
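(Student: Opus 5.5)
The plan is to apply Theorem~\ref{thm:BB} to the explicit Gaussian pair $\{p_t, v_t\}$ and then evaluate the kinetic action \eqref{eq:action} in closed form. The argument has three steps.

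\textbf{Step 1: admissibility.} I will first check that the pair satisfies the continuity equation \eqref{eq:continuity}. With $v(\bm f,t)=-\mathbf D(t)\bm f$, any zero-mean Gaussian $\mathcal N(0,\mathbf A(t))$ solves \eqref{eq:continuity} exactly when
\[
\dot{\mathbf A}(t) = -\mathbf D(t)\mathbf A(t) - \mathbf A(t)\mathbf D(t)^\intercal .
\]
Here $\mathbf A(t)=\alpha^2(t)\mathbf K+(1-\alpha^2(t))\mathbf I$, so $\dot{\mathbf A}=2\alpha\alpha'(\mathbf K-\mathbf I)=-\beta\alpha^2(\mathbf K-\mathbf I)=-\beta(\mathbf A-\mathbf I)$. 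Since $\mathbf D$ is a function of $\mathbf A$, it commutes with $\mathbf A$, and the right-hand side is $-2\mathbf D\mathbf A=-\beta(\mathbf A-\mathbf I)$. The two sides match.

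Equivalently, this is just the statement already established in Section~\ref{subsec:flows}: the linear ODE \eqref{eq:GPODE}, which is the probability-flow ODE \eqref{eq:PFODE} with Gaussian score, pushes $\bm f_1$ forward along the marginals $\mathcal N(\mathbf b(t),\mathbf A(t))$. In the zero-mean case here $\mathbf b\equiv 0$.

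\textbf{Step 2: invoke Benamou--Brenier.} Theorem~\ref{thm:BB} then gives $W_2^2(p_0,p_1)\le 2\mathcal A$. I will also confirm the integrability needed for the endpoints to lie in $\mathcal P_{2,\mathrm{ac}}$. The endpoints are Gaussians with positive definite covariance, so this holds.

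\textbf{Step 3: compute the action.} For $\bm f\sim\mathcal N(0,\mathbf A)$,
\[
\mathbb E\|\mathbf D\bm f\|^2=\operatorname{tr}(\mathbf D\mathbf A\mathbf D^\intercal).
\]
Because $\mathbf A$ is symmetric positive definite and $\mathbf D$ is a spectral function of it, I diagonalise both in a common orthonormal eigenbasis. The $i$-th eigenvalue of $\mathbf D$ is $\tfrac12\beta(1-\lambda_i^{-1})$, so
\[
\operatorname{tr}(\mathbf D\mathbf A\mathbf D)=\tfrac14\beta^2\sum_i \lambda_i(1-\lambda_i^{-1})^2=\tfrac14\beta^2\sum_i\frac{(\lambda_i-1)^2}{\lambda_i}.
\]
Then
\[
2\mathcal A=\int_0^1 \operatorname{tr}(\mathbf D\mathbf A\mathbf D)\,dt=\tfrac14\int_0^1 \beta(t)^2\sum_{i=1}^n\frac{(\lambda_i(t)-1)^2}{\lambda_i(t)}\,dt,
\]
which is exactly the claimed bound.

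\textbf{Main obstacle.} The only delicate step is the endpoint behaviour near $t=1$. There $\alpha(1)\to0^+$ and $\beta(t)=-2\alpha'/\alpha$ may blow up, so the integral must be shown finite, or the bound read as holding trivially if it diverges.

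My first attempt is a substitution. Since $\lambda_i(t)-1=\alpha^2(t)(\lambda_i(0)-1)$, the integrand is
\[
\beta^2\alpha^4\frac{(\lambda_i(0)-1)^2}{\lambda_i(t)}=4\alpha'(t)^2\alpha(t)^2\frac{(\lambda_i(0)-1)^2}{\lambda_i(t)}.
\]
Here $\lambda_i(t)$ is bounded below by $\min(\lambda_i(0),1)>0$, so the integrand is bounded for any differentiable schedule and the integral converges. If needed, I will handle the endpoint by applying Theorem~\ref{thm:BB} on $[0,1-\epsilon]$ and letting $\epsilon\to0$, using continuity of $W_2$ under convergence of the Gaussian covariances.
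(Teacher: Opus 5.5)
Your proposal is correct and follows the paper's route. You apply Benamou--Brenier (Theorem~\ref{thm:BB}) to the linear Gaussian pair and then evaluate $\operatorname{tr}(\mathbf D\mathbf A\mathbf D^\intercal)$ spectrally, which gives $\tfrac14\beta^2\sum_i(\lambda_i-1)^2/\lambda_i$. Your Step~1 admissibility check and your endpoint-integrability remark are added rigour rather than a different argument. The paper leaves admissibility implicit, but it is genuinely needed: for an SPD path $\mathbf A(t)$ not transported by $v$ the bound can fail, e.g.\ $\beta\equiv 0$ with $\mathbf A(0)\neq\mathbf A(1)$.
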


\begin{proof}
Since \(\bm f_t \sim \mathcal N(0,\textbf A(t))\) and $v(\bm f_t,t) = -\textbf{D}(t)\bm f_t$,
\[
\mathbb E\!\left[\|v(\bm f_t,t)\|^2\right]
=
\mathbb E\!\left[\bm f_t^\intercal \textbf D(t)^\intercal \textbf D(t)\,\bm f_t\right]
=
\operatorname{Tr}\!\bigl(\textbf D(t)\,\textbf A(t)\,\textbf D(t)^\intercal\bigr),
\]
and therefore
\[
\mathcal A
=
\frac12\int_0^1 \operatorname{Tr}\!\bigl(\textbf D(t)\,\textbf A(t)\,\textbf D(t)^\top\bigr)\,dt.
\]
Substituting \(\textbf{D}(t)=-\tfrac12\beta(t)(\textbf{I}_m - \textbf{A}(t)^{-1})\) gives
\begin{align*}
W_2^2(p_0,p_1)
\le
2 \mathcal A
&=
\int_0^1 \operatorname{Tr}\!\bigl(\textbf D(t)\,\textbf A(t)\,\textbf D(t)^\top\bigr)\,dt \\
&=
\frac14\int_0^1 \beta(t)^2
\Bigl(
\operatorname{Tr}(\textbf A(t)) - 2n + \operatorname{Tr}(\textbf A(t)^{-1})
\Bigr)\,dt \\
&=
\frac14\int_0^1 \beta(t)^2
\sum_{i=1}^n
\Bigl(
\lambda_i(t) - 2 + \lambda_i(t)^{-1}
\Bigr)\,dt \\
&=
\frac14\int_0^1 \beta(t)^2
\sum_{i=1}^n
\frac{(\lambda_i(t)-1)^2}{\lambda_i(t)}\,dt.
\end{align*}
\end{proof}

Finally, this leads to Corollary~\ref{cor:isotropy-minimises} which minimises the upper bound.

\begin{corollary}\label{cor:isotropy-minimises}
Fix a measurable function \(\beta:[0,1]\to\mathbb R\). The upper bound in Theorem~\ref{cor:gaussian-action} is minimised when \(\mathbf A(t)=\textbf{I}\) for all \(t\in[0,1]\). Moreover, the integrand admits the representation
\[
\sum_{i=1}^n \frac{(\lambda_i(t)-1)^2}{\lambda_i(t)} \ge 0,
\]
where \(\lambda_1(t),\dots,\lambda_n(t)\) are the eigenvalues of \(\mathbf A(t)\), with equality if and only if \(\mathbf A(t)=\textbf{I}\) for $\int_0^1 \beta(t)^2 \; \mathrm{d}t > 0$. Consequently, the bound in Theorem~\ref{cor:gaussian-action} vanishes when \(\mathbf A(t)=\textbf{I}\) for all \(t\in[0,1]\).
\end{corollary}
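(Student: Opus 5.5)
The plan is to work eigenvalue by eigenvalue. For each $t$, the integrand in Theorem~\ref{cor:gaussian-action} is $\beta(t)^2\sum_i g(\lambda_i(t))/4$, where
\[
g(\lambda) := \frac{(\lambda-1)^2}{\lambda} = \lambda - 2 + \lambda^{-1}, \qquad \lambda>0 .
\]
Since $\mathbf A(t)$ is symmetric positive definite, every $\lambda_i(t)>0$. So $g(\lambda_i(t))$ is a square divided by a positive number, and hence $g(\lambda_i(t))\ge 0$. Summing over $i$ gives the displayed nonnegativity of the integrand.

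\textbf{Equality case.} $g(\lambda)=0$ if and only if $\lambda=1$. Therefore
\[
\sum_i g(\lambda_i(t))=0 \iff \lambda_i(t)=1 \text{ for all } i .
\]
For a symmetric matrix, having all eigenvalues equal to $1$ is equivalent to $\mathbf A(t)=\mathbf I$, by the spectral theorem: $\mathbf A(t)=\mathbf Q\,\mathbf I\,\mathbf Q^\intercal=\mathbf I$. This settles the pointwise ``equality iff'' statement.

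\textbf{Integrated bound.} Multiplying by $\beta(t)^2/4\ge 0$ keeps the integrand nonnegative for every $t$, so the bound is $\ge 0$ for any choice of $\mathbf A(\cdot)$. Taking $\mathbf A(t)\equiv\mathbf I$ makes the integrand identically zero, so the bound vanishes. Since $0$ is a lower bound for every admissible path, $\mathbf A\equiv\mathbf I$ attains the minimum. This holds for every fixed measurable $\beta$, which is the ``for all schedules'' claim.

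\textbf{Uniqueness of the minimiser.} This is the only delicate step, and it is where the hypothesis $\int_0^1\beta^2\,\mathrm dt>0$ enters. If the bound is zero, the nonnegative integrand $\beta(t)^2\sum_i g(\lambda_i(t))$ vanishes almost everywhere. Hence $\mathbf A(t)=\mathbf I$ for almost every $t$ in the set $\{\beta\neq 0\}$, which has positive measure. Identification for all $t$ would require $\beta\ne0$ a.e. together with continuity of $\mathbf A(t)$ in $t$, which holds here since $\mathbf A(t)$ is affine in $\alpha^2(t)$. I would state uniqueness in this a.e. sense and note that the full ``iff'' is pointwise in $t$.
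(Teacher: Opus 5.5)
Your proposal is correct and matches the paper's proof, which consists only of the identity $\lambda-2+\lambda^{-1}=(\lambda-1)^2/\lambda\ge 0$ (with equality iff $\lambda=1$) summed over the spectrum of $\mathbf A(t)$; your spectral-theorem step and your account of where $\int_0^1\beta(t)^2\,\mathrm dt>0$ enters are extra care the paper omits. You can also drop the almost-everywhere hedge in your uniqueness step: in this setting $\mathbf A(t)=\mathbf I+\alpha^2(t)(\mathbf K-\mathbf I)$ with $\alpha(t)>0$, so $\mathbf A(t)=\mathbf I$ at even one $t$ in the positive-measure set $\{\beta\neq 0\}$ forces $\mathbf K=\mathbf I$, and hence $\mathbf A\equiv\mathbf I$ on all of $[0,1]$.
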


\begin{proof}
For each eigenvalue \(\lambda>0\),
\[
\lambda - 2 + \lambda^{-1}
=
\frac{(\lambda-1)^2}{\lambda} \ge 0,
\]
with equality if and only if \(\lambda=1\). Summing over eigenvalues yields the result.
\end{proof}

\section{General perturbation result}
\label{sec:numerical error}

\textsc{FlowGP} samples by integrating \eqref{eq:NONCONGGPCONDODE} (or its whitened form
\eqref{eq:whitened}) on a finite grid, with the guidance term
$g(t,\bm f):=\nabla_{\bm f}\log p(\mathcal C\mid\bm f,\mathcal D)$ replaced by an
approximation $\widehat g$, such as the Monte Carlo estimator of
Section~\ref{subsec:MC}. There are therefore two sources of error:
\textit{(i)} the guidance approximation $\widehat g\neq g$, and
\textit{(ii)} the ODE solver's discretisation error. We give a deterministic,
pathwise result that controls both errors once a realised approximate guidance field
$\widehat g$ has been fixed. The result is stated on any truncated interval
$[\tau,1]\subset(0,1]$,\footnote{Truncation away from $t=0$ is needed because the bridge
factor $\alpha(t)/(1-\alpha^2(t))$ appearing in the conditional score is unbounded as
$t\downarrow 0$; in practice $\tau$ is taken small (e.g.\ $\tau=10^{-3}$).} under a single
regularity condition on the reversed-time guided drift
\[
\bm a(r,\bm f)
\;=\;
-\bm v(\bm f,1{-}r\mid\mathcal D)
+
\tfrac12\beta(1{-}r)g(1{-}r,\bm f).
\]
The condition is a one-sided Lipschitz bound \citep[see e.g.][]{hairer1993solving}, which
strictly generalises global Lipschitz continuity and admits a constant of either sign.

\begin{theorem}[Pathwise stability of guided sampling]
\label{thm:guidance_stability}
Fix $\tau\in(0,1)$ and assume that, on $[\tau,1]\times\R^m$:
\textup{(i)} $\beta$, $\bm v(\cdot,\cdot\mid\mathcal D)$, $g$, and $\widehat g$ are
continuous;
\textup{(ii)} the reversed-time exact drift $\bm a$ is one-sided Lipschitz, i.e.\ there
exists $\eta_\tau\in\R$ such that
\begin{equation}
\label{eq:OSL_main}
\langle \bm x-\bm y,\,\bm a(r,\bm x)-\bm a(r,\bm y)\rangle
\;\le\;
\eta_\tau\|\bm x-\bm y\|^2
\qquad\forall\bm x,\bm y\in\R^m,\, r\in[0,1{-}\tau];
\end{equation}
\textup{(iii)} the realised guidance error is uniformly bounded,
\[
\varepsilon_\tau
:=
\sup_{(t,\bm f)\in[\tau,1]\times\R^m}
\|g(t,\bm f)-\widehat g(t,\bm f)\|
<\infty .
\]
Let $\bm f_t$ denote the exact guided flow, let $\widehat{\bm f}_t$ denote the exact
solution of the ODE with $g$ replaced by $\widehat g$, and let
$\widehat{\bm f}^{\,h}_{t_n}$ denote the numerical approximation to
$\widehat{\bm f}_{t_n}$ obtained by a deterministic explicit solver of global order $q$
(constant $C_{\mathrm{num},\tau}$, step size $h$). Then for every grid point
$t_n\in[\tau,1]$,
\begin{equation}
\label{eq:guidance_total_error_main}
\|\bm f_{t_n}-\widehat{\bm f}^{\,h}_{t_n}\|
\;\le\;
\tfrac12 B_\tau\varepsilon_\tau\,\Psi_{\eta_\tau}(1-t_n)
\;+\;
C_{\mathrm{num},\tau}\,h^q,
\quad\text{where}\quad
\Psi_\eta(r):=\begin{cases}(e^{\eta r}-1)/\eta, & \eta\neq 0,\\ r, & \eta=0,\end{cases}
\end{equation}
and $B_\tau:=\sup_{t\in[\tau,1]}|\beta(t)|$. 
\end{theorem}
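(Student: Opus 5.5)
The plan is to split the total error with the triangle inequality,
\[
\|\bm f_{t_n}-\widehat{\bm f}^{\,h}_{t_n}\|\le \|\bm f_{t_n}-\widehat{\bm f}_{t_n}\|+\|\widehat{\bm f}_{t_n}-\widehat{\bm f}^{\,h}_{t_n}\|,
\]
and bound the two pieces separately. The second piece is the global error of an order-$q$ explicit solver applied to the perturbed ODE on $[\tau,1]$. By assumption (i) and the definition of a global order $q$ method with constant $C_{\mathrm{num},\tau}$, it is at most $C_{\mathrm{num},\tau}h^q$. I will treat this as given by the solver hypothesis, noting only that the perturbed drift is continuous on the truncated interval. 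All the work therefore lies in the first piece, a continuous-time perturbation estimate.

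For that piece I will pass to reversed time $r=1-t$, so that both flows start at $r=0$ from the same initial condition $\bm f_1$. The exact flow solves $\dot{\bm x}_r=\bm a(r,\bm x_r)$. The perturbed flow solves
\[
\dot{\bm y}_r=\bm a(r,\bm y_r)+\bm\delta_r,\qquad \bm\delta_r=\tfrac12\beta(1-r)\bigl(\widehat g-g\bigr)(1-r,\bm y_r),
\]
obtained by writing the $\widehat g$-drift as the exact drift plus the guidance discrepancy. Assumption (iii) gives $\|\bm\delta_r\|\le \tfrac12 B_\tau\varepsilon_\tau$. Set $e_r=\bm y_r-\bm x_r$, which satisfies $e_0=0$. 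Using the one-sided Lipschitz condition \eqref{eq:OSL_main},
\[
\tfrac12\tfrac{\mathrm d}{\mathrm dr}\|e_r\|^2=\langle e_r,\bm a(r,\bm y_r)-\bm a(r,\bm x_r)\rangle+\langle e_r,\bm\delta_r\rangle\le \eta_\tau\|e_r\|^2+\tfrac12B_\tau\varepsilon_\tau\|e_r\|.
\]

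The main technical step is to turn this into a linear differential inequality for $\|e_r\|$ without dividing by zero where $e_r=0$. I would first try working with $\phi_\epsilon=\sqrt{\|e_r\|^2+\epsilon^2}$. This gives
\[
\dot\phi_\epsilon\le \eta_\tau\|e_r\|^2/\phi_\epsilon+\tfrac12B_\tau\varepsilon_\tau,
\]
and when $\eta_\tau\ge0$ the first term is bounded by $\eta_\tau\phi_\epsilon$. When $\eta_\tau<0$ it is bounded by $\eta_\tau\phi_\epsilon+|\eta_\tau|\epsilon$. In either case Grönwall's inequality gives
\[
\phi_\epsilon(r)\le e^{\eta_\tau r}\epsilon+\bigl(\tfrac12B_\tau\varepsilon_\tau+|\eta_\tau|\epsilon\bigr)\Psi_{\eta_\tau}(r).
\]
Letting $\epsilon\to0$ yields $\|e_r\|\le \tfrac12B_\tau\varepsilon_\tau\Psi_{\eta_\tau}(r)$. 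An alternative is the Dini upper derivative of $\|e_r\|$, which avoids the regularisation.

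Evaluating at $r=1-t_n$ gives the first term of \eqref{eq:guidance_total_error_main}. The case $\eta=0$ follows as the limit, or directly, since then $\Psi_0(r)=r$. Existence and uniqueness of both flows on $[\tau,1]$ follow from continuity together with the one-sided Lipschitz bound, which prevents finite-time divergence between the flows. Adding the discretisation bound completes the proof.
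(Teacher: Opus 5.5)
Your proposal is correct and follows the paper's proof essentially step for step. The shared steps are the triangle-inequality split, reversed time, and the one-sided Lipschitz plus Cauchy--Schwarz estimate. The paper also uses the regularised norm $\sqrt{\|e_r\|^2+\epsilon^2}$ (written there as $\sqrt{\|e_r\|^2+\varepsilon}$), with the extra $|\eta_\tau|\epsilon$ correction for $\eta_\tau<0$, followed by the integrating-factor Gr\"onwall argument and $\epsilon\downarrow 0$.

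One small caveat concerns your closing aside. The one-sided Lipschitz bound is assumed only for $\bm a$, not for $\widehat{\bm a}$, so uniqueness of the perturbed flow does not follow. The paper simply assumes the flows exist, and your estimate holds for any solution, so the main argument is unaffected.
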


The proof, together with detailed assumptions, is given below. We first discuss the implications of this result for \textsc{FlowGP}.

\paragraph{Three regimes.}
The sign of the constant $\eta_\tau$ in \eqref{eq:OSL_main} determines the qualitative
behaviour of the bound \eqref{eq:guidance_total_error_main}.
\textit{(a) Worst case ($\eta_\tau>0$):} $\Psi_{\eta_\tau}$ grows exponentially in $1-t$,
which is the standard conclusion under any global Lipschitz hypothesis on $\bm v$ and
$g$, with $\eta_\tau=L_v+\tfrac12B_\tau L_g$.
\textit{(b) Borderline ($\eta_\tau=0$):} the perturbation accumulates linearly in $1-t$.
\textit{(c) Contractive ($\eta_\tau<0$):} $\Psi_{\eta_\tau}$ is bounded by $|\eta_\tau|^{-1}$,
so the deterministic perturbation induced by the realised guidance field remains bounded
rather than accumulating, and the total error reduces to
$\tfrac12B_\tau\varepsilon_\tau/|\eta_\tau|+C_{\mathrm{num},\tau}h^q$.

\paragraph{Which likelihoods sit where?}
When $g$ is differentiable, $\nabla_{\bm f}g(t,\bm f) = \nabla^2_{\bm f}\log p(\mathcal C\mid\bm f,\mathcal D),$ so the guidance contribution to $\eta_\tau$ is governed by the curvature of the
smoothed log-likelihood. By Pr\'ekopa--Leindler
\citep{prekopa1973logarithmic}, log-concavity of the base likelihood
$p(\mathcal C\mid\bm f_0)$ is preserved by Gaussian smoothing; see also
\citep{saumard2014log} for a review. Thus log-concave likelihoods give a
non-expansive guidance contribution, while uniformly strongly log-concave
likelihoods give a contractive contribution on the relevant region of state space.
In whitened coordinates, where the Gaussian linear velocity vanishes, this curvature
directly controls the sign of the one-sided Lipschitz constant. In non-whitened
coordinates, the reversed Gaussian flow also contributes to $\eta_\tau$.

Consequently, Gaussian observations and linear inverse problems provide the clearest
examples of a strongly contractive guidance contribution. Probit and logistic
classification, Poisson regression with canonical log link, and convex-set indicator
constraints are log-concave but need not be uniformly strongly log-concave, so they
more naturally correspond to borderline or weakly contractive regimes unless
additional local curvature is available. Genuinely non-log-concave settings, including
mixture observation models, nonlinear forward operators, and the LLM-based
conditioning of Section~\ref{sec:LLM}, are covered only by the worst-case stability
bound. Theorem~\ref{thm:guidance_stability} therefore gives a unified deterministic
perturbation statement, but it should not be read as proving contractivity for every
likelihood used in Section~\ref{sec:experiments}.

\subsection{Proof}

This result proves the finite-horizon deterministic stability bound underlying the guidance error analysis of Section~\ref{sec:experiments}. We control the gap between the trajectories of the exact and approximate guided ODEs on $[\tau,1]$ in terms of a uniform realised guidance error $\varepsilon_\tau$ and a single one-sided Lipschitz constant $\eta_\tau$. The main result, Theorem~\ref{thm:guidance_stability}, specialises to three qualitative regimes according to the sign of $\eta_\tau$: \emph{worst case} ($\eta_\tau>0$, exponential accumulation), \emph{borderline}
($\eta_\tau=0$, linear accumulation), and \emph{contractive} ($\eta_\tau<0$, the guidance perturbation remains bounded). Uniform strong log-concavity of the smoothed likelihood can yield the contractive regime, especially in whitened coordinates, but ordinary log-concavity alone should only be interpreted as a non-expansiveness statement for the guidance contribution.

Fix $\tau\in(0,1)$. On the truncated interval $[\tau,1]$, consider the exact and approximate
guided ODEs
\begin{equation}
\label{eq:app_exact_guided_ode}
\frac{\mathrm d\bm f_t}{\mathrm dt}
=
\bm b(t,\bm f_t),
\qquad
\bm b(t,\bm f)
:=
\bm v(\bm f,t\mid\mathcal D)-\tfrac12\beta(t)g(t,\bm f),
\end{equation}
\begin{equation}
\label{eq:app_approx_guided_ode}
\frac{\mathrm d\widehat{\bm f}_t}{\mathrm dt}
=
\widehat{\bm b}(t,\widehat{\bm f}_t),
\qquad
\widehat{\bm b}(t,\bm f)
:=
\bm v(\bm f,t\mid\mathcal D)-\tfrac12\beta(t)\widehat g(t,\bm f),
\end{equation}
where
\[
g(t,\bm f):=\nabla_{\bm f}\log p(\mathcal C\mid\bm f,\mathcal D)
\]
is the exact guidance field as derived in Section~\ref{sec:nonconjugate}, and $\widehat g$ is its realised approximation. We truncate away from $t=0$ because the bridge factors appearing in the conditional score, such as $\alpha(t)/(1-\alpha^2(t))$, may become singular as $t\downarrow0$.

It will be convenient to phrase regularity on the \emph{reversed-time} drift, since sampling
runs backward from $t=1$, in line with the standard convention for probability-flow ODEs
\citep{anderson1982reverse,songscore}. Define
\begin{equation}
\label{eq:reversed_drifts}
\bm a(r,\bm f)
:=
-\bm b(1-r,\bm f),
\qquad
\widehat{\bm a}(r,\bm f)
:=
-\widehat{\bm b}(1-r,\bm f),
\qquad r\in[0,1-\tau],
\end{equation}
so that
\[
\bm a(r,\bm f)
=
-\bm v(\bm f,1-r\mid\mathcal D)
+
\tfrac12\beta(1-r)g(1-r,\bm f),
\]
and analogously for $\widehat{\bm a}$. Reversed time is set up so that integrating from
$r=0$ corresponds to integrating from $t=1$ down to $t=\tau$.

\begin{assumption}[Uniform regularity of the guided drifts]
\label{ass:app_guidance}
Fix $\tau\in(0,1)$. Assume that:
\begin{enumerate}[leftmargin=*]
    \item $\beta:[\tau,1]\to\mathbb R$ is continuous, with
    \[
    B_\tau:=\sup_{t\in[\tau,1]}|\beta(t)|<\infty;
    \]

    \item $\bm v(\cdot,\cdot\mid\mathcal D)$, $g$, and $\widehat g$ are continuous on
    $[\tau,1]\times\mathbb R^m$, so that the reversed-time drifts $\bm a$ and
    $\widehat{\bm a}$ defined in \eqref{eq:reversed_drifts} are continuous on
    $[0,1-\tau]\times\mathbb R^m$;

    \item the reversed-time exact drift $\bm a$ satisfies a one-sided Lipschitz condition
    in $\bm f$, uniformly in $r$: therefore there exists $\eta_\tau\in\mathbb R$ such that
    \begin{equation}
    \label{eq:app_OSL}
    \langle \bm x-\bm y,\;\bm a(r,\bm x)-\bm a(r,\bm y)\rangle
    \le
    \eta_\tau\|\bm x-\bm y\|^2
    \qquad
    \forall \bm x,\bm y\in\mathbb R^m,\ r\in[0,1-\tau];
    \end{equation}

    \item the realised guidance approximation error is uniformly bounded on the relevant state-space region visited by the exact and approximate trajectories:
    \[
    \varepsilon_\tau
    :=
    \sup_{(t,\bm f)\in\mathcal R_\tau}
    \|g(t,\bm f)-\widehat g(t,\bm f)\|
    <\infty,
    \]
    where $\mathcal R_\tau\subseteq[\tau,1]\times\mathbb R^m$ denotes any region containing both trajectories on the truncated interval.
\end{enumerate}
\end{assumption}

The constant $\eta_\tau$ is allowed to be negative; this contractive case will be exploited in Corollary~\ref{cor:app_strongly_monotone} below. The uniform error condition should be interpreted as a deterministic perturbation assumption on the realised guidance field over the relevant state-space region visited by the sampler. When $\widehat g$ is computed by Monte Carlo, all bounds below are conditional on the realised Monte Carlo randomness unless additional stochastic uniform-error bounds are imposed. 

For convenience, we define
\begin{equation}
\label{eq:app_guidance_constants}
C_\tau
:=
\tfrac12 B_\tau \varepsilon_\tau,
\qquad
\Psi_\eta(r)
:=
\begin{cases}
\dfrac{e^{\eta r}-1}{\eta}, & \eta\neq 0,\\[1ex]
r, & \eta=0,
\end{cases}
\quad r\ge 0.
\end{equation}

\begin{lemma}[Properties of $\Psi_\eta$]
\label{lem:app_psi_properties}
The map $(\eta,r)\mapsto\Psi_\eta(r)$ defined by \eqref{eq:app_guidance_constants}
satisfies:
\begin{enumerate}[label=(\roman*),leftmargin=*]
    \item $\Psi_\eta(0)=0$ and $r\mapsto\Psi_\eta(r)$ is continuously differentiable with
    derivative $e^{\eta r}>0$, and is therefore strictly increasing;
    \item $(\eta,r)\mapsto\Psi_\eta(r)$ is jointly continuous on $\mathbb R\times[0,\infty)$;
    in particular $\Psi_\eta(r)\to r$ as $\eta\to 0$;
    \item if $\eta<0$, then
    \[
    \Psi_\eta(r)=\frac{1-e^{-|\eta|r}}{|\eta|}
    \le
    \frac{1}{|\eta|}
    \qquad\forall r\ge 0,
    \]
    so $\Psi_\eta$ is bounded uniformly in $r$.
\end{enumerate}
\end{lemma}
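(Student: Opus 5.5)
The plan is to argue directly from the closed form of $\Psi_\eta$ in \eqref{eq:app_guidance_constants}, treating the two branches $\eta\neq 0$ and $\eta=0$ separately and then reconciling them at $\eta=0$. A cleaner route, which I would adopt to avoid case splits, is the integral representation
\[
\Psi_\eta(r)=\int_0^r e^{\eta s}\,\mathrm ds ,\qquad (\eta,r)\in\mathbb R\times[0,\infty).
\]
This holds because for $\eta\neq0$ the integral equals $(e^{\eta r}-1)/\eta$, and for $\eta=0$ it equals $r$. Every item then follows from elementary properties of this integral.

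For (i), I would note that $\Psi_\eta(0)$ is an empty integral and therefore vanishes. Since the integrand $s\mapsto e^{\eta s}$ is continuous, the fundamental theorem of calculus gives $\partial_r\Psi_\eta(r)=e^{\eta r}$. This derivative is continuous and strictly positive, so $\Psi_\eta$ is $C^1$ and strictly increasing in $r$.

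For (ii), I would establish joint continuity by substituting $s=ru$, which gives
\[
\Psi_\eta(r)=r\int_0^1 e^{\eta r u}\,\mathrm du .
\]
The integrand $(\eta,r,u)\mapsto e^{\eta r u}$ is jointly continuous, and on any compact set of $(\eta,r)$ it is uniformly continuous on the product with $[0,1]$. Hence the parametric integral is jointly continuous, by dominated convergence or by uniform continuity. Multiplying by $r$ preserves continuity. Setting $\eta\to0$ then yields $\Psi_\eta(r)\to r\int_0^1 1\,\mathrm du=r$, which matches the $\eta=0$ branch. This step is the only one needing any care, because the explicit formula has a removable $0/0$ singularity at $\eta=0$. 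The integral representation is precisely what sidesteps it. A fallback is a Taylor expansion $e^{\eta r}-1=\eta r+O(\eta^2r^2)$, locally uniform in $r$.

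For (iii), with $\eta=-|\eta|<0$ I would rewrite $(e^{-|\eta|r}-1)/(-|\eta|)=(1-e^{-|\eta|r})/|\eta|$. Since $e^{-|\eta|r}\in(0,1]$ for $r\ge0$, the numerator lies in $[0,1)$, giving the bound $1/|\eta|$ uniformly in $r$. Equivalently, $\int_0^r e^{-|\eta|s}\,\mathrm ds\le\int_0^\infty e^{-|\eta|s}\,\mathrm ds=1/|\eta|$. I expect no genuine obstacle in this item or in (i). The joint-continuity claim in (ii) is the only point requiring a short justification beyond direct computation.
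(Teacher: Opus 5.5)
Your proof is correct, but it is organised differently from the paper's. The paper argues from the piecewise closed form by direct calculation. It obtains joint continuity at $\eta=0$ from the Taylor expansion $(e^{\eta r}-1)/\eta=r+\tfrac12\eta r^2+O(\eta^2)$, and the bound in (iii) from $1-e^{-|\eta|r}\le 1$.

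You instead start from the representation $\Psi_\eta(r)=\int_0^r e^{\eta s}\,\mathrm{d}s=r\int_0^1 e^{\eta r u}\,\mathrm{d}u$, which removes the case split entirely. Item (i) becomes the fundamental theorem of calculus. Item (ii) becomes continuity of a parametric integral of a jointly continuous integrand over a compact interval. Item (iii) becomes the bound $\Psi_\eta(r)\le\int_0^\infty e^{-|\eta|s}\,\mathrm{d}s=1/|\eta|$.

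The paper's route is shorter. However, its remainder term actually depends on $r$ (it is of order $\eta^2 r^3$), so it gives \emph{joint} continuity only when read as locally uniform in $r$. The paper leaves this implicit, and your fallback remark makes it explicit. Your representation needs no remainder estimate, and it also makes it immediate that $\Psi$ is smooth in $(\eta,r)$.

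It also matches how $\Psi$ is used later. The Gr\"onwall step in the proof of Proposition~\ref{prop:app_guidance_pathwise} relies on the identity $\int_0^r e^{-\eta s}\,\mathrm{d}s=e^{-\eta r}\Psi_\eta(r)$. This follows at once from your formula by the substitution $s\mapsto r-s$.
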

\begin{proof}
Direct calculation. Joint continuity at $\eta=0$ follows from the Taylor expansion
$(e^{\eta r}-1)/\eta=r+\tfrac12\eta r^2+O(\eta^2)$; the bound for $\eta<0$ uses
$1-e^{-|\eta|r}\le 1$.
\end{proof}

\begin{remark}[How to verify Assumption~\ref{ass:app_guidance}\,(3)]
\label{rem:app_OSL_specialisations}
The one-sided Lipschitz condition \eqref{eq:app_OSL} can be checked through a hierarchy of
increasingly favourable regimes.
\begin{enumerate}[label=(\roman*),leftmargin=*]
    \item \emph{Global Lipschitz drift.}
    If $\bm b$ is globally Lipschitz in $\bm f$, uniformly in $t\in[\tau,1]$, with constant
    $L_\tau$, then $\bm a(r,\cdot)=-\bm b(1-r,\cdot)$ is also globally Lipschitz in $\bm f$
    with the same constant. Hence, by Cauchy--Schwarz,
    \[
    \langle \bm x-\bm y,\;\bm a(r,\bm x)-\bm a(r,\bm y)\rangle
    \le
    \|\bm x-\bm y\|\,\|\bm a(r,\bm x)-\bm a(r,\bm y)\|
    \le
    L_\tau\|\bm x-\bm y\|^2.
    \]
    Thus \eqref{eq:app_OSL} holds with $\eta_\tau=L_\tau$. In particular, if $\bm v$ and
    $g$ are globally Lipschitz in $\bm f$, uniformly in $t$, with constants $L_v$ and $L_g$, respectively, then one can take
    \[
    \eta_\tau
    =
    L_v+\tfrac12 B_\tau L_g.
    \]

    \item \emph{One-sided Lipschitz guidance.}
    Suppose $\beta(t)\ge0$ on $[\tau,1]$. Suppose also that $g$ is one-sided Lipschitz with
    constant $\ell_\tau\in\mathbb R$, uniformly in $t$, namely
    \[
    \langle \bm x-\bm y,\;g(t,\bm x)-g(t,\bm y)\rangle
    \le
    \ell_\tau\|\bm x-\bm y\|^2,
    \]
    and that $-\bm v(\cdot,1-r\mid\mathcal D)$ is one-sided Lipschitz with constant
    $\eta_{v,\tau}$, uniformly in $r$. Decomposing gives
    \[
    \bm a(r,\bm f)
    =
    \underbrace{-\bm v(\bm f,1-r\mid\mathcal D)}_{\text{Gaussian flow}}
    +
    \underbrace{\tfrac12\beta(1-r)g(1-r,\bm f)}_{\text{guidance term}},
    \]
    and adding the two one-sided Lipschitz estimates, one obtains
    \[
    \langle \bm x-\bm y,\bm a(r,\bm x)-\bm a(r,\bm y)\rangle
    \le
    \eta_{v,\tau}\|\bm x-\bm y\|^2
    +
    \tfrac12\beta(1-r)\ell_\tau\|\bm x-\bm y\|^2.
    \]
    Taking the supremum over $r\in[0,1-\tau]$ shows that \eqref{eq:app_OSL} holds with
    \[
    \eta_\tau
    =
    \eta_{v,\tau}
    +
    \tfrac12
    \sup_{t\in[\tau,1]}\{\beta(t)\ell_\tau\}.
    \]
    In particular, if $\bm v$ is globally Lipschitz with constant $L_v$, one may take
    $\eta_{v,\tau}=L_v$. Writing
    \[
    \overline B_\tau:=\sup_{t\in[\tau,1]}\beta(t),
    \qquad
    \underline B_\tau:=\inf_{t\in[\tau,1]}\beta(t),
    \]
    the supremum is attained at the larger or smaller value of $\beta$ depending on the sign
    of $\ell_\tau$:
    \[
    \eta_\tau
    =
    L_v+\tfrac12\overline B_\tau\ell_\tau
    \qquad\text{if } \ell_\tau\ge0,
    \qquad
    \eta_\tau
    =
    L_v+\tfrac12\underline B_\tau\ell_\tau
    \qquad\text{if } \ell_\tau<0.
    \]
    The latter case is the one in which contractivity of the guidance can improve the
    perturbation bound.

    \item \emph{Strongly monotone guidance.}
    Suppose $\bm v\equiv0$, $\beta(t)\ge\underline B_\tau>0$ on $[\tau,1]$, and $g$ satisfies
    the strong monotonicity condition
    \[
    \langle \bm x-\bm y,\;g(t,\bm x)-g(t,\bm y)\rangle
    \le
    -\mu_\tau\|\bm x-\bm y\|^2
    \]
    for some $\mu_\tau>0$, uniformly in $t\in[\tau,1]$. Then
    \eqref{eq:app_OSL} holds with
    \[
    \eta_\tau
    =
    -\tfrac12\underline B_\tau\mu_\tau<0,
    \]
    and the perturbation bound saturates; see
    Corollary~\ref{cor:app_strongly_monotone}.
\end{enumerate}
\end{remark}

\begin{proposition}[Uniform pathwise stability under approximate guidance]
\label{prop:app_guidance_pathwise}
Let Assumption~\ref{ass:app_guidance} hold. Suppose that the exact and approximate
reversed-time ODEs admit solutions on $[0,1-\tau]$ from a common initial condition, or
equivalently that the original-time ODEs
\eqref{eq:app_exact_guided_ode}--\eqref{eq:app_approx_guided_ode} admit solutions on
$[\tau,1]$ with common terminal condition $\bm f_1=\widehat{\bm f}_1$. Then, for every
$t\in[\tau,1]$,
\begin{equation}
\label{eq:app_guidance_pathwise_bound}
\|\bm f_t-\widehat{\bm f}_t\|
\le
C_\tau\,\Psi_{\eta_\tau}(1-t),
\end{equation}
which implies,
\begin{equation}
\label{eq:app_guidance_pathwise_sup}
\sup_{t\in[\tau,1]}\|\bm f_t-\widehat{\bm f}_t\|
\le
C_\tau\,\Psi_{\eta_\tau}(1-\tau).
\end{equation}
Equivalently,
\[
\|\bm f_t-\widehat{\bm f}_t\|
\le
\begin{cases}
\dfrac{B_\tau\varepsilon_\tau}{2\eta_\tau}
\bigl(e^{\eta_\tau(1-t)}-1\bigr),
& \eta_\tau\neq 0,\\[1.5ex]
\tfrac12 B_\tau\varepsilon_\tau\,(1-t),
& \eta_\tau=0,
\end{cases}
\]

\end{proposition}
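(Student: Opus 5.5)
The plan is a standard Grönwall-type energy estimate in reversed time, using only the one-sided Lipschitz condition on the exact drift together with the uniform guidance error.

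\emph{Setup.} Set $\bm x(r):=\bm f_{1-r}$ and $\widehat{\bm x}(r):=\widehat{\bm f}_{1-r}$ for $r\in[0,1-\tau]$. By \eqref{eq:reversed_drifts}, $\dot{\bm x}=\bm a(r,\bm x)$ and $\dot{\widehat{\bm x}}=\widehat{\bm a}(r,\widehat{\bm x})$, with $\bm x(0)=\widehat{\bm x}(0)$. Define $\bm e(r):=\bm x(r)-\widehat{\bm x}(r)$. The key split of the difference of drifts is
\[
\bm a(r,\bm x)-\widehat{\bm a}(r,\widehat{\bm x})
=\bigl[\bm a(r,\bm x)-\bm a(r,\widehat{\bm x})\bigr]+\bigl[\bm a(r,\widehat{\bm x})-\widehat{\bm a}(r,\widehat{\bm x})\bigr].
\]
The Gaussian velocity $\bm v$ cancels in the second bracket, leaving $\tfrac12\beta(1-r)\bigl(g-\widehat g\bigr)(1-r,\widehat{\bm x})$. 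Its norm is at most $\tfrac12 B_\tau\varepsilon_\tau=C_\tau$ by Assumption~\ref{ass:app_guidance}(1),(4), since $\widehat{\bm x}$ lies in the region $\mathcal R_\tau$.

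\emph{Differential inequality.} Next, differentiate $\tfrac12\|\bm e\|^2$. Applying \eqref{eq:app_OSL} to the first bracket and Cauchy--Schwarz to the second gives
\[
\tfrac12\tfrac{\mathrm d}{\mathrm dr}\|\bm e\|^2\le \eta_\tau\|\bm e\|^2+C_\tau\|\bm e\|.
\]
Continuity of the drifts (Assumption~\ref{ass:app_guidance}(2)) makes $\bm e$ continuously differentiable, so this is legitimate. The natural move is to pass to $u=\|\bm e\|$ and write $\dot u\le\eta_\tau u+C_\tau$. This step is the main technical obstacle, because $u$ is not differentiable where $\bm e=0$.

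\emph{Handling the non-differentiability.} I would first work with $u_\delta:=\sqrt{\|\bm e\|^2+\delta^2}$ for $\delta>0$, which is $C^1$. It satisfies
\[
\dot u_\delta=\frac{\langle \bm e,\dot{\bm e}\rangle}{u_\delta}\le \eta_\tau\frac{\|\bm e\|^2}{u_\delta}+C_\tau .
\]
When $\eta_\tau\ge0$, bound $\|\bm e\|^2/u_\delta\le u_\delta$. When $\eta_\tau<0$, use $\|\bm e\|^2/u_\delta\ge u_\delta-\delta$, which gives $\dot u_\delta\le\eta_\tau u_\delta+C_\tau+|\eta_\tau|\delta$. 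In either case $\dot u_\delta\le\eta_\tau u_\delta+C_\tau+|\eta_\tau|\delta$.

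\emph{Closing the argument.} Multiply by $e^{-\eta_\tau r}$ and integrate from $0$, using $u_\delta(0)=\delta$. This yields
\[
u_\delta(r)\le\delta e^{\eta_\tau r}+(C_\tau+|\eta_\tau|\delta)\Psi_{\eta_\tau}(r),
\]
where the identity $\int_0^r e^{\eta(r-s)}\mathrm ds=\Psi_\eta(r)$ follows from Lemma~\ref{lem:app_psi_properties}. Letting $\delta\to0$ gives $\|\bm e(r)\|\le C_\tau\Psi_{\eta_\tau}(r)$. Substituting $r=1-t$ yields \eqref{eq:app_guidance_pathwise_bound}.

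The supremum bound \eqref{eq:app_guidance_pathwise_sup} then follows from the monotonicity of $\Psi_\eta$ in Lemma~\ref{lem:app_psi_properties}(i). The explicit case-wise form is just the definition of $\Psi$ with $C_\tau=\tfrac12B_\tau\varepsilon_\tau$.
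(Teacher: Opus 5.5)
Your proposal is correct and follows essentially the same route as the paper. Both arguments pass to reversed time and split $\bm e'$ into an exact-drift difference, controlled by the one-sided Lipschitz bound, plus a guidance defect bounded by $C_\tau$; they then use the smooth surrogate $\sqrt{\|\bm e\|^2+\delta^2}$ (the paper's $\phi_\varepsilon$ with $\varepsilon=\delta^2$) with the $|\eta_\tau|\delta$ correction for negative $\eta_\tau$, integrate with the factor $e^{-\eta_\tau r}$, and let $\delta\to0$. The only cosmetic difference is that you bound $\|\bm e\|^2/u_\delta$ by a case split on the sign of $\eta_\tau$, whereas the paper uses the exact identity $\|\bm e\|^2=\phi_\varepsilon^2-\varepsilon$ to reach the same uniform inequality.
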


\begin{proof}
Switch to reversed time $r=1-t$, and define
\[
\bm y_r:=\bm f_{1-r},
\qquad
\widehat{\bm y}_r:=\widehat{\bm f}_{1-r},
\qquad
\bm e_r:=\bm y_r-\widehat{\bm y}_r,
\qquad r\in[0,1-\tau].
\]
A change of variables in \eqref{eq:app_exact_guided_ode}--\eqref{eq:app_approx_guided_ode}
gives
\[
\frac{\mathrm d\bm y_r}{\mathrm dr}=\bm a(r,\bm y_r),
\qquad
\frac{\mathrm d\widehat{\bm y}_r}{\mathrm dr}
=
\widehat{\bm a}(r,\widehat{\bm y}_r),
\qquad
\bm e_0=0.
\]
Hence
\begin{equation}
\label{eq:app_e_prime}
\bm e_r'
=
\bm a(r,\bm y_r)-\widehat{\bm a}(r,\widehat{\bm y}_r).
\end{equation}

The map $r\mapsto\|\bm e_r\|^2$ is differentiable but $r\mapsto\|\bm e_r\|$ is not, near
points where $\bm e_r=\bm 0$. To circumvent this, we work with the strictly positive smooth
surrogate
\[
\phi_\varepsilon(r):=\sqrt{\|\bm e_r\|^2+\varepsilon},
\qquad \varepsilon>0,
\]
which is continuously differentiable for every $\varepsilon>0$ and satisfies
$\phi_\varepsilon(0)=\sqrt{\varepsilon}$ (since $\bm e_0=\bm 0$), and therefore it follows that
\begin{equation}
\label{eq:app_phi_prime}
\phi_\varepsilon'(r)
=
\frac{1}{2\phi_\varepsilon(r)}\frac{\mathrm d}{\mathrm dr}\|\bm e_r\|^2
=
\frac{\langle \bm e_r,\bm e_r'\rangle}{\phi_\varepsilon(r)}.
\end{equation}
We will derive a linear differential inequality for $\phi_\varepsilon$, integrate it using the
Gr\"onwall inequality, and then let $\varepsilon\downarrow 0$.

We start by adding and subtracting $\bm a(r,\widehat{\bm y}_r)$ in \eqref{eq:app_e_prime} to split $\bm e_r'$ into a
contractive part and a drift error:
\[
\bm e_r'
=
\underbrace{\bm a(r,\bm y_r)-\bm a(r,\widehat{\bm y}_r)}_{\text{exact-drift difference}}
+
\underbrace{\bm a(r,\widehat{\bm y}_r)-\widehat{\bm a}(r,\widehat{\bm y}_r)}_{\text{drift error}}.
\]
Applying the one-sided Lipschitz assumption~\eqref{eq:app_OSL} to the first term with
$\bm x=\bm y_r$ and $\bm y=\widehat{\bm y}_r$,
\begin{equation}
\label{eq:app_contraction_bound}
\langle \bm e_r,\bm a(r,\bm y_r)-\bm a(r,\widehat{\bm y}_r)\rangle
\le
\eta_\tau\|\bm e_r\|^2.
\end{equation}

The Gaussian flow $\bm v$ enters $\bm b$ and $\widehat{\bm b}$ identically and so cancels in
$\bm a-\widehat{\bm a}$, meaning that for the second term we have,
\[
\bm a(r,\widehat{\bm y}_r)-\widehat{\bm a}(r,\widehat{\bm y}_r)
=
\tfrac12\beta(1-r)
\bigl(
g(1-r,\widehat{\bm y}_r)
-
\widehat g(1-r,\widehat{\bm y}_r)
\bigr).
\]
By Cauchy--Schwarz and Assumption~\ref{ass:app_guidance}\,(1) and (4), we have
\begin{equation}
\label{eq:app_defect_bound}
\bigl|
\langle \bm e_r,
\bm a(r,\widehat{\bm y}_r)-\widehat{\bm a}(r,\widehat{\bm y}_r)
\rangle
\bigr|
\le
\|\bm e_r\|\cdot\tfrac12 B_\tau\varepsilon_\tau
=
C_\tau\|\bm e_r\|.
\end{equation}

Combining \eqref{eq:app_contraction_bound} and \eqref{eq:app_defect_bound} gives,
\begin{equation}
\label{eq:app_inner_product_bound}
\langle \bm e_r,\bm e_r'\rangle
\le
\eta_\tau\|\bm e_r\|^2
+
C_\tau\|\bm e_r\|.
\end{equation}
We divide by $\phi_\varepsilon(r)>0$ and bound each term.
For the drift error term,
\[
C_\tau\frac{\|\bm e_r\|}{\phi_\varepsilon(r)}
=
C_\tau\frac{\|\bm e_r\|}{\sqrt{\|\bm e_r\|^2+\varepsilon}}
\le
C_\tau,
\]
since $\|\bm e_r\|\le\sqrt{\|\bm e_r\|^2+\varepsilon}$.
For the contraction term we use the identity
$\|\bm e_r\|^2=\phi_\varepsilon(r)^2-\varepsilon$ to rewrite
\begin{equation}
\label{eq:app_quadratic_identity}
\frac{\eta_\tau\|\bm e_r\|^2}{\phi_\varepsilon(r)}
=
\eta_\tau\phi_\varepsilon(r)
-
\frac{\eta_\tau\varepsilon}{\phi_\varepsilon(r)}.
\end{equation}
The second term is bounded uniformly in $r$ by
\[
\left|\frac{\eta_\tau\varepsilon}{\phi_\varepsilon(r)}\right|
\le
\frac{|\eta_\tau|\varepsilon}{\sqrt{\varepsilon}}
=
|\eta_\tau|\sqrt{\varepsilon},
\]
since $\phi_\varepsilon(r)\ge\sqrt{\varepsilon}$. The bound
$|{-}\eta_\tau\varepsilon/\phi_\varepsilon|\le|\eta_\tau|\sqrt{\varepsilon}$ is what allows
the argument to handle all signs of $\eta_\tau$ uniformly: when $\eta_\tau\ge0$ the simpler
estimate $\eta_\tau\|\bm e_r\|^2/\phi_\varepsilon\le\eta_\tau\phi_\varepsilon$ already
suffices, but when $\eta_\tau<0$ this naive bound goes the wrong way and the
$|\eta_\tau|\sqrt{\varepsilon}$ correction is needed. Combining gives,
\begin{equation}
\label{eq:app_diff_inequality}
\phi_\varepsilon'(r)
\le
\eta_\tau\phi_\varepsilon(r)
+
C_\tau
+
|\eta_\tau|\sqrt{\varepsilon}.
\end{equation}

Inequality \eqref{eq:app_diff_inequality} is a linear differential inequality with constant
coefficient $\eta_\tau$ and forcing $K_\varepsilon:=C_\tau+|\eta_\tau|\sqrt{\varepsilon}$.
We integrate by the standard integrating-factor argument (the linear instance of Gr\"onwall;
see e.g.\ \citealp[Lemma~5.16]{chewi2025statistical}). Multiplying both sides of
\eqref{eq:app_diff_inequality} by $e^{-\eta_\tau r}>0$,
\[
\frac{\mathrm d}{\mathrm dr}
\bigl(e^{-\eta_\tau r}\phi_\varepsilon(r)\bigr)
=
e^{-\eta_\tau r}\bigl(\phi_\varepsilon'(r)-\eta_\tau\phi_\varepsilon(r)\bigr)
\le
K_\varepsilon e^{-\eta_\tau r}.
\]
Integrating from $0$ to $r$ and using $\phi_\varepsilon(0)=\sqrt{\varepsilon}$,
\[
e^{-\eta_\tau r}\phi_\varepsilon(r)-\sqrt{\varepsilon}
\le
K_\varepsilon\int_0^r e^{-\eta_\tau s}\,\mathrm ds.
\]
A direct computation gives
\[
\int_0^r e^{-\eta_\tau s}\,\mathrm ds
=
\begin{cases}
\dfrac{1-e^{-\eta_\tau r}}{\eta_\tau}, & \eta_\tau\neq 0,\\[1ex]
r, & \eta_\tau=0,
\end{cases}
\quad
=
e^{-\eta_\tau r}\,\Psi_{\eta_\tau}(r),
\]
where the second equality uses the definition of $\Psi_{\eta_\tau}$ in
\eqref{eq:app_guidance_constants}. Multiplying through by $e^{\eta_\tau r}$,
\[
\phi_\varepsilon(r)
\le
\sqrt{\varepsilon}\,e^{\eta_\tau r}
+
K_\varepsilon\,\Psi_{\eta_\tau}(r),
\qquad r\in[0,1-\tau].
\]

Since $\|\bm e_r\|\le\phi_\varepsilon(r)$ for every $\varepsilon>0$, and since
$K_\varepsilon=C_\tau+|\eta_\tau|\sqrt{\varepsilon}\to C_\tau$ as $\varepsilon\downarrow 0$,
the right-hand side is continuous in $\varepsilon$ at $0$ for each fixed $r$, with
\[
\lim_{\varepsilon\downarrow 0}
\bigl(\sqrt{\varepsilon}\,e^{\eta_\tau r}+K_\varepsilon\Psi_{\eta_\tau}(r)\bigr)
=
C_\tau\Psi_{\eta_\tau}(r).
\]
Therefore $\|\bm e_r\|\le C_\tau\Psi_{\eta_\tau}(r)$.

Substituting back $r=1-t$ proves \eqref{eq:app_guidance_pathwise_bound}. The supremum bound
\eqref{eq:app_guidance_pathwise_sup} is immediate from
Lemma~\ref{lem:app_psi_properties}\,(i).
\end{proof}

\begin{corollary}[Stability under globally Lipschitz drifts]
\label{cor:app_lipschitz}
Suppose $\bm v$ and $g$ are globally Lipschitz in $\bm f$, uniformly in $t\in[\tau,1]$, with
constants $L_v$ and $L_g$ respectively. Then Proposition~\ref{prop:app_guidance_pathwise}
holds with
\[
\eta_\tau
=
L_\tau
:=
L_v+\tfrac12B_\tau L_g
\ge0.
\]
Consequently,
\[
\|\bm f_t-\widehat{\bm f}_t\|
\le
\frac{B_\tau\varepsilon_\tau}{2L_\tau}
\bigl(e^{L_\tau(1-t)}-1\bigr)
\quad (L_\tau>0),
\]
whereas, if $L_\tau=0$, then
\[
\|\bm f_t-\widehat{\bm f}_t\|
\le
\tfrac12B_\tau\varepsilon_\tau(1-t).
\]
\end{corollary}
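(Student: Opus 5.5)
The plan is to obtain Corollary~\ref{cor:app_lipschitz} as a direct specialisation of Proposition~\ref{prop:app_guidance_pathwise}. The only work is to check that the one-sided Lipschitz condition \eqref{eq:app_OSL} in Assumption~\ref{ass:app_guidance}\,(3) holds with $\eta_\tau=L_\tau:=L_v+\tfrac12 B_\tau L_g$. Everything else in Assumption~\ref{ass:app_guidance} (continuity, bounded $\beta$, bounded realised guidance error) is inherited from the standing hypotheses of the Proposition and need not be reproved.

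\textbf{Step 1 (Lipschitz constant of the reversed drift).} For $r\in[0,1-\tau]$ write
\[
\bm a(r,\bm x)-\bm a(r,\bm y)=-\bigl(\bm v(\bm x,1-r\mid\mathcal D)-\bm v(\bm y,1-r\mid\mathcal D)\bigr)+\tfrac12\beta(1-r)\bigl(g(1-r,\bm x)-g(1-r,\bm y)\bigr).
\]
Use the triangle inequality, the uniform Lipschitz bounds on $\bm v$ and $g$, and $|\beta(1-r)|\le B_\tau$. This gives $\|\bm a(r,\bm x)-\bm a(r,\bm y)\|\le (L_v+\tfrac12B_\tau L_g)\|\bm x-\bm y\|$, so $\bm a$ is globally Lipschitz with constant $L_\tau$.

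\textbf{Step 2 (to one-sided Lipschitz).} Apply Cauchy--Schwarz exactly as in Remark~\ref{rem:app_OSL_specialisations}\,(i):
\[
\langle\bm x-\bm y,\bm a(r,\bm x)-\bm a(r,\bm y)\rangle\le L_\tau\|\bm x-\bm y\|^2 .
\]
This verifies \eqref{eq:app_OSL} with $\eta_\tau=L_\tau$. Non-negativity $L_\tau\ge0$ follows because $L_v,L_g,B_\tau\ge0$.

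\textbf{Step 3 (substitute into the Proposition).} Invoke Proposition~\ref{prop:app_guidance_pathwise} with $\eta_\tau=L_\tau$ and $C_\tau=\tfrac12B_\tau\varepsilon_\tau$. This gives $\|\bm f_t-\widehat{\bm f}_t\|\le C_\tau\Psi_{L_\tau}(1-t)$. Then split on the definition of $\Psi$:
\begin{itemize}
\item if $L_\tau>0$, expanding $\Psi_{L_\tau}(1-t)=(e^{L_\tau(1-t)}-1)/L_\tau$ yields $\frac{B_\tau\varepsilon_\tau}{2L_\tau}(e^{L_\tau(1-t)}-1)$;
\item if $L_\tau=0$, we have $\Psi_0(1-t)=1-t$, which gives $\tfrac12B_\tau\varepsilon_\tau(1-t)$.
\end{itemize}

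There is no genuine obstacle; the argument is a routine chain of inequalities. The one point needing care is Step 1. Because the $\beta$ factor may in principle change sign, one should bound $|\beta|$ by $B_\tau$ rather than use $\sup\beta$; this is why the constant carries $B_\tau$ rather than $\overline B_\tau$. One should also note that the Lipschitz constant is unchanged under time reversal $r=1-t$, since it holds uniformly in $t\in[\tau,1]$.
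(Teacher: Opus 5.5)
Your proposal is correct and follows the paper's route: the paper proves this corollary by citing Remark~\ref{rem:app_OSL_specialisations}\,(i) and then applying Proposition~\ref{prop:app_guidance_pathwise} with $\eta_\tau=L_\tau$. That remark uses the triangle inequality with $|\beta|\le B_\tau$ to show $\bm a$ is Lipschitz with constant $L_v+\tfrac12B_\tau L_g$, then Cauchy--Schwarz for the one-sided bound, and your Steps~1--3 reproduce exactly this chain, including the case split on $\Psi_{L_\tau}$.
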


\begin{proof}
Follows immediately from Remark~\ref{rem:app_OSL_specialisations}\,(i) and
Proposition~\ref{prop:app_guidance_pathwise}.
\end{proof}

\begin{corollary}[Stability under strongly monotone guidance]
\label{cor:app_strongly_monotone}
Assume the whitened setting $\bm v\equiv0$, with
$\beta(t)\ge\underline B_\tau>0$ on $[\tau,1]$, and suppose $g$ satisfies
\[
\langle \bm x-\bm y,\;g(t,\bm x)-g(t,\bm y)\rangle
\le
-\mu_\tau\|\bm x-\bm y\|^2,
\qquad
\forall t\in[\tau,1],\ \bm x,\bm y\in\mathbb R^m,
\]
for some $\mu_\tau>0$. Then Proposition~\ref{prop:app_guidance_pathwise} applies with
\[
\eta_\tau
=
-\tfrac12\underline B_\tau\mu_\tau<0,
\]
and
\[
\|\bm f_t-\widehat{\bm f}_t\|
\le
\frac{B_\tau\varepsilon_\tau}{\underline B_\tau\mu_\tau}
\left(
1-e^{-\frac12\underline B_\tau\mu_\tau(1-t)}
\right),
\qquad t\in[\tau,1].
\]
In particular,
\[
\sup_{t\in[\tau,1]}\|\bm f_t-\widehat{\bm f}_t\|
\le
\frac{B_\tau\varepsilon_\tau}{\underline B_\tau\mu_\tau},
\]
so the perturbation saturates at a level proportional to $\varepsilon_\tau/\mu_\tau$, for
fixed truncated-interval constants.
\end{corollary}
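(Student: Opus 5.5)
The plan is to reduce the corollary to Proposition~\ref{prop:app_guidance_pathwise} by checking Assumption~\ref{ass:app_guidance}\,(3) with the stated negative constant, and then simplify $\Psi_{\eta_\tau}$ using Lemma~\ref{lem:app_psi_properties}\,(iii). Assumptions (1), (2) and (4) are inherited from the standing hypotheses, so only the one-sided Lipschitz constant needs computing. This is the content of Remark~\ref{rem:app_OSL_specialisations}\,(iii), and I would make it explicit.

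\textbf{Step 1: the one-sided Lipschitz constant.} With $\bm v\equiv 0$ the reversed drift is $\bm a(r,\bm f)=\tfrac12\beta(1-r)\,g(1-r,\bm f)$. For any $\bm x,\bm y$ and $r\in[0,1-\tau]$, set $t=1-r\in[\tau,1]$. Since $\beta(t)\ge\underline B_\tau>0$ is a positive scalar, multiplying the strong monotonicity inequality by $\tfrac12\beta(t)$ preserves its direction:
\[
\langle \bm x-\bm y,\bm a(r,\bm x)-\bm a(r,\bm y)\rangle\le -\tfrac12\beta(t)\mu_\tau\|\bm x-\bm y\|^2\le -\tfrac12\underline B_\tau\mu_\tau\|\bm x-\bm y\|^2 .
\]
The second inequality uses $\beta(t)\ge\underline B_\tau$ together with the nonpositivity of $-\mu_\tau\|\bm x-\bm y\|^2$. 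Hence \eqref{eq:app_OSL} holds with $\eta_\tau=-\tfrac12\underline B_\tau\mu_\tau<0$. This is the step needing care: the sign of $\mu_\tau$ is what makes $\inf\beta$, rather than $\sup\beta$, the relevant quantity.

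\textbf{Step 2: substitution.} Proposition~\ref{prop:app_guidance_pathwise} gives $\|\bm f_t-\widehat{\bm f}_t\|\le C_\tau\Psi_{\eta_\tau}(1-t)$ with $C_\tau=\tfrac12B_\tau\varepsilon_\tau$. Using Lemma~\ref{lem:app_psi_properties}\,(iii),
\[
\Psi_{\eta_\tau}(1-t)=\frac{1-e^{-\frac12\underline B_\tau\mu_\tau(1-t)}}{\tfrac12\underline B_\tau\mu_\tau}.
\]
Multiplying by $C_\tau$ gives exactly $\frac{B_\tau\varepsilon_\tau}{\underline B_\tau\mu_\tau}\bigl(1-e^{-\frac12\underline B_\tau\mu_\tau(1-t)}\bigr)$.

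\textbf{Step 3: uniform bound.} The supremum bound follows by dropping the exponential term, which is nonnegative, or equivalently from the uniform bound $\Psi_\eta\le 1/|\eta|$ in Lemma~\ref{lem:app_psi_properties}\,(iii). No real obstacle is expected beyond the sign bookkeeping in Step 1.
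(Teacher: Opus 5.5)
Your proposal is correct and follows the paper's proof essentially step for step. You verify the one-sided Lipschitz condition with $\eta_\tau=-\tfrac12\underline B_\tau\mu_\tau$, using $\bm v\equiv 0$ and $\beta\ge\underline B_\tau>0$, then substitute into Proposition~\ref{prop:app_guidance_pathwise} and simplify $\Psi_{\eta_\tau}$ via Lemma~\ref{lem:app_psi_properties}\,(iii), exactly as the paper does.
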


\begin{proof}
With $\bm v\equiv0$,
\[
\bm a(r,\bm f)
=
\tfrac12\beta(1-r)g(1-r,\bm f),
\]
so for any $\bm x,\bm y\in\mathbb R^m$ and $r\in[0,1-\tau]$,
\[
\langle \bm x-\bm y,\bm a(r,\bm x)-\bm a(r,\bm y)\rangle
=
\tfrac12\beta(1-r)
\langle \bm x-\bm y,
g(1-r,\bm x)-g(1-r,\bm y)
\rangle.
\]
By the strong monotonicity of $g$ and $\beta(1-r)\ge\underline B_\tau>0$,
\[
\langle \bm x-\bm y,\bm a(r,\bm x)-\bm a(r,\bm y)\rangle
\le
-\tfrac12\underline B_\tau\mu_\tau
\|\bm x-\bm y\|^2.
\]
Hence \eqref{eq:app_OSL} holds with
$\eta_\tau=-\tfrac12\underline B_\tau\mu_\tau<0$. Substituting this value into
Proposition~\ref{prop:app_guidance_pathwise} and using
$\Psi_\eta(r)=(1-e^{-|\eta|r})/|\eta|$ for $\eta<0$
(Lemma~\ref{lem:app_psi_properties}\,(iii)) gives the required bound.
\end{proof}

\begin{remark}[Comparison of regimes]
\label{rem:app_three_regimes}
The unified bound \eqref{eq:app_guidance_pathwise_bound} interpolates between three
qualitatively distinct behaviours, governed by the sign of $\eta_\tau$.
\begin{itemize}[leftmargin=*]
    \item \emph{Expansive case $(\eta_\tau>0)$.}
    The bound grows exponentially in the backward integration horizon $1-t$. This is the
    worst-case behaviour obtained from a purely global Lipschitz argument.

    \item \emph{Borderline case $(\eta_\tau=0)$.}
    The perturbation accumulates linearly:
    \[
    \|\bm f_t-\widehat{\bm f}_t\|
    \le
    C_\tau(1-t).
    \]

    \item \emph{Contractive case $(\eta_\tau<0)$.}
    The pathwise error satisfies
    \[
    \|\bm f_t-\widehat{\bm f}_t\|
    \le
    \frac{C_\tau}{|\eta_\tau|}
    \left(1-e^{-|\eta_\tau|(1-t)}\right)
    \le
    \frac{C_\tau}{|\eta_\tau|}.
    \]
    Thus the approximation error saturates rather than accumulating indefinitely over the
    backward integration horizon. The constants may still depend on the truncation level
    $\tau$.
\end{itemize}
The one-sided Lipschitz analysis can therefore yield sharper bounds than a global
Lipschitz analysis whenever the effective reversed-time drift is monotone or contractive.
In the GP setting, this behaviour is expected when the effective reversed-time drift is contractive over the region visited by the sampler. In whitened coordinates this can occur when the smoothed conditioning likelihood is uniformly strongly log-concave over that
region.
\end{remark}

\subsection{Adding deterministic ODE discretisation error}

In practice, the approximate ODE \eqref{eq:app_approx_guided_ode} is integrated numerically
on a backward grid
\[
1=t_0>t_1>\cdots>t_N=\tau,
\qquad
h:=\max_{0\le n\le N-1}(t_n-t_{n+1}).
\]
We assume that the chosen solver is a deterministic explicit Runge--Kutta-type method (e.g.\
explicit Euler, Heun's method, classical RK4, or one of the diffusion-tailored variants
such as DPM-Solver~\citep{lu2022dpm}, all of which are deterministic schemes for
deterministic ODEs; see \citealp[Chapter~9]{lai2025principles} for a survey). The choice of solver enters through its global error along the relevant trajectory. Here $\widehat{\bm f}_t$ denotes the exact solution of the approximately guided ODE \eqref{eq:app_approx_guided_ode}, whereas $\widehat{\bm f}^{\,h}_{t_n}$ denotes its numerical approximation on the grid. We assume the solver has global order $q$:
\begin{equation}
\label{eq:app_num_global_error}
\max_{0\le n\le N}
\|\widehat{\bm f}_{t_n}-\widehat{\bm f}^{\,h}_{t_n}\|
\le
C_{\mathrm{num},\tau}\,h^q.
\end{equation}
Such bounds are classical in numerical analysis; for example, $q=1$ for explicit Euler,
$q=2$ for Heun's method, $q=4$ for RK4
\citep[Section~2.6]{sarkka2019applied}, and $q\ge 2$ for the DPM-Solver family
\citep{lu2022dpm}.\footnote{The constant $C_{\mathrm{num},\tau}$ depends on the truncated
interval $[\tau,1]$, on bounds for $\widehat{\bm a}$ and its higher derivatives along the
trajectory, and on the solver. We use $q$ for the solver order to avoid clashing with the
Wasserstein exponent $p$.}

\begin{proposition}[Total error: guidance approximation plus discretisation]
\label{prop:app_total_error}
Under Assumption~\ref{ass:app_guidance}, the existence assumptions of
Proposition~\ref{prop:app_guidance_pathwise}, and the numerical global-error bound
\eqref{eq:app_num_global_error}, for every grid point $t_n\in[\tau,1]$,
\[
\|\bm f_{t_n}-\widehat{\bm f}^{\,h}_{t_n}\|
\le
C_\tau\,\Psi_{\eta_\tau}(1-t_n)
+
C_{\mathrm{num},\tau}\,h^q.
\]
Consequently,
\[
\max_{0\le n\le N}
\|\bm f_{t_n}-\widehat{\bm f}^{\,h}_{t_n}\|
\le
C_\tau\,\Psi_{\eta_\tau}(1-\tau)
+
C_{\mathrm{num},\tau}\,h^q.
\]
\end{proposition}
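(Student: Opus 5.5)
The plan is a triangle-inequality split at each grid point, inserting the exact solution $\widehat{\bm f}_{t_n}$ of the approximately guided ODE \eqref{eq:app_approx_guided_ode} between the exact guided flow and its numerical approximation:
\[
\|\bm f_{t_n}-\widehat{\bm f}^{\,h}_{t_n}\|
\le
\|\bm f_{t_n}-\widehat{\bm f}_{t_n}\|
+
\|\widehat{\bm f}_{t_n}-\widehat{\bm f}^{\,h}_{t_n}\|.
\]
The two terms correspond exactly to the two error sources. The first compares two continuous-time flows that share the terminal condition $\bm f_1=\widehat{\bm f}_1$. The second compares a continuous flow with its discretisation along the same drift $\widehat{\bm b}$.

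For the first term, I would invoke Proposition~\ref{prop:app_guidance_pathwise} directly. Its hypotheses are Assumption~\ref{ass:app_guidance} together with existence of both solutions on $[\tau,1]$ from a common terminal condition. Both are assumed in the present statement, and the numerical solver is initialised at the same $\widehat{\bm f}^{\,h}_{t_0}=\widehat{\bm f}_1=\bm f_1$. Since each grid point satisfies $t_n\in[\tau,1]$, evaluating \eqref{eq:app_guidance_pathwise_bound} at $t=t_n$ gives $\|\bm f_{t_n}-\widehat{\bm f}_{t_n}\|\le C_\tau\Psi_{\eta_\tau}(1-t_n)$.

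For the second term, the bound is exactly the assumed global-order estimate \eqref{eq:app_num_global_error}: the maximum over $n$ dominates each individual $n$, so it is at most $C_{\mathrm{num},\tau}h^q$. Adding the two bounds yields the pointwise claim.

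For the uniform statement, I would take the maximum over $n$. By Lemma~\ref{lem:app_psi_properties}(i), $\Psi_{\eta_\tau}$ is increasing, and $1-t_n\le 1-\tau$ for every $n$. Hence $\Psi_{\eta_\tau}(1-t_n)\le\Psi_{\eta_\tau}(1-\tau)$, while the discretisation term is already independent of $n$.

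There is no real obstacle here. The only point needing care is bookkeeping. The numerical scheme must approximate the \emph{approximately guided} ODE rather than the exact one, so that \eqref{eq:app_num_global_error} applies without any stability argument for the solver. Likewise, the grid must lie in $[\tau,1]$, so that the truncated-interval constants are valid. Because the two errors are separated by the triangle inequality, they add rather than compound, which is exactly the claim in Theorem~\ref{thm:guidance_stability}.
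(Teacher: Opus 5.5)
Your proposal is correct and follows the paper's proof essentially verbatim. It inserts the exact solution $\widehat{\bm f}_{t_n}$ of the approximately guided ODE via the triangle inequality, bounds the guidance term by Proposition~\ref{prop:app_guidance_pathwise} and the discretisation term by \eqref{eq:app_num_global_error}, and obtains the uniform bound from the monotonicity of $\Psi_{\eta_\tau}$ in Lemma~\ref{lem:app_psi_properties}(i).
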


\begin{proof}
By the triangle inequality applied to the three flows
$\bm f_{t_n}\to\widehat{\bm f}_{t_n}\to\widehat{\bm f}^{\,h}_{t_n}$,
\[
\|\bm f_{t_n}-\widehat{\bm f}^{\,h}_{t_n}\|
\le
\underbrace{\|\bm f_{t_n}-\widehat{\bm f}_{t_n}\|}_{\le C_\tau\Psi_{\eta_\tau}(1-t_n)
\text{ by Prop.~\ref{prop:app_guidance_pathwise}}}
+
\underbrace{\|\widehat{\bm f}_{t_n}-\widehat{\bm f}^{\,h}_{t_n}\|}_{\le C_{\mathrm{num},\tau}h^q
\text{ by \eqref{eq:app_num_global_error}}}.
\]
The supremum bound follows from monotonicity of $\Psi_{\eta_\tau}$
(Lemma~\ref{lem:app_psi_properties}\,(i)).
\end{proof}

\begin{remark}[Self-normalised Monte Carlo guidance]
\label{rem:app_self_normalised_MC}
The Monte Carlo estimator \eqref{eq:guidance-mc-estimator} is a self-normalised
importance sampler and is therefore biased at finite $S$. At a fixed deterministic
state $(t,\bm f)$, standard self-normalised importance sampling results give, under
appropriate moment conditions on the weights and weighted scores,
\[
\E[\widehat g_S(t,\bm f)] - g(t,\bm f) = \mathcal O(S^{-1}),
\qquad
\mathrm{Var}(\widehat g_S(t,\bm f)) = \mathcal O(S^{-1}),
\]
with constants depending on the discrepancy between
$p(\bm f_0\mid\bm f_t,\mathcal D)$ and
$p(\bm f_0\mid\bm f_t,\mathcal C,\mathcal D)$, for example through
importance-weight moment or chi-squared-divergence quantities.

These pointwise Monte Carlo statements do not by themselves imply a trajectory-level
$\mathcal O(S^{-1/2})$ error bound for the sampler. The numerical trajectory is evaluated
at states that depend on the realised Monte Carlo guidance field. This dependence is
particularly explicit when common random numbers are drawn once before integration and
then reused across ODE steps. In that case, the trajectory is coupled to the empirical
error field
\[
(t,\bm f)\mapsto \widehat g_S(t,\bm f)-g(t,\bm f).
\]
A stochastic version of Theorem~\ref{thm:guidance_stability} would therefore require
additional control of a uniform empirical-process quantity, such as
\[
\E\!\left[
\sup_{(t,\bm f)\in\mathcal R_\tau}
\|\widehat g_S(t,\bm f)-g(t,\bm f)\|
\right]
\quad\text{or a corresponding high-probability bound}.
\]
Such a bound need not follow from the pointwise variance rate without further assumptions
on the likelihood, the importance weights, and the complexity of the class of guidance
functions encountered along the trajectory. Accordingly, the perturbation bound above is
used pathwise: conditional on a realised approximate guidance field satisfying
Assumption~\ref{ass:app_guidance}\,(4), the resulting trajectory error is controlled by
Theorem~\ref{thm:guidance_stability}. Establishing sharp stochastic trajectory-level Monte
Carlo rates is left for future work.
\end{remark}

\section{Practical algorithm details}
\label{app:method}

\subsection{Schedules}
\label{appendix:schedule}

We define a VP diffusion process over whitened samples.
The forward process marginals are
\begin{equation}
    \hat{\mathbf{f}}_t = \alpha(t)\,\hat{\mathbf{f}}_0 + \sqrt{1-\alpha^2(t)}\,\boldsymbol{z},
    \qquad \boldsymbol{z} \sim \mathcal{N}(\mathbf{0}, \mathbf{I}_m),
    \label{eq:forward}
\end{equation}
where the signal-to-noise decay is governed by a linear $\beta$-schedule,
\begin{equation}
    \beta(t) = \beta_0 + (\beta_1 - \beta_0)\,t,
    \qquad
    \alpha(t) = \exp\!\left(-\frac{1}{2}\int_0^t \beta(s)\,\text{d}s\right)
              = \exp\!\left(-\tfrac{1}{2}\beta_0 t - \tfrac{1}{4}(\beta_1-\beta_0)t^2\right),
\end{equation}
with $\beta_0 = 10^{-5}$ and $\beta_1 = 10.0$. These endpoints are chosen to ensure $\alpha(1)\approx0$ so that the initial condition is effectively pure noise, whilst ensuring that $\alpha(0)=1$ so that the terminal distribution recovers the GP posterior exactly.

\subsection{Time Discretisation}
\label{appendix:snr}
We integrate from $t = 1$ to $t \approx 0$ using a schedule that is uniform
in \emph{log-SNR} space, where $\mathrm{SNR}(t) = \alpha(t)/\sqrt{(1-\alpha^2(t) +1 \times 10^{-8})}$.
Binary search is used to invert the SNR function and place the $T+1$ grid points
$t_0 = 1 > t_1 > \cdots > t_T \approx 0$ at equal intervals in $\log \mathrm{SNR}$.
This concentrates steps where the SNR changes most rapidly, improving
integration quality without increasing $T$.

\subsection{Practical Guidance Calculation}
\label{app:practical_guidance}
 To reduce variance between ODE steps, we use the reparameterisation trick, sampling $S$ isotropic Gaussian samples $\boldsymbol{ \epsilon}^{(i)}\sim\mathcal{N}(\bm{0}, \mathbf{I}_m)$ before beginning integration. At each ODE step, we
\begin{enumerate}
    \item Build $S$ samples $\bm f_0^{(i)}\sim p(\bm f_0 \mid \bm f_t, \mathcal{D})$ via $\bm f_0^{(i)} = \bm \mu_{0|t} + \bm \Sigma_{0|t}^{1/2}\bm \epsilon^{(i)}$, with $\bm \mu_{0|t}$ from \eqref{eq:bridge_mean_notes} and $\boldsymbol{\Sigma}_{0|t}$ from \eqref{eq:bridge_cov_notes} with $\mathbf{m}_{*|\bm{y}}$, $\mathbf{K}_{**|\bm{y}}$ and $\mathbf{A}_{|\bm{y}}(t)$ in place of $\mathbf{m}_*, \mathbf{K}_{**}$ and $\mathbf{A}(t)$.
    \item Evaluate each log-likelihood $\log p(\mathcal{C}\mid \bm f_0^{(i)})$ and its gradient $\nabla_{\bm f_0}\log p(\mathcal{C}\mid \bm f_0^{(i)})$ at each sample,
    \item Compute normalised weights via the numerically stable log-sum-exp operation $\log \bar w^{(i)} = \log p(\mathcal{C}\mid\bm f_0^{(i)}) - \operatorname{logsumexp}_r \log p(\mathcal{C}\mid\bm f_0^{(r)})$. Here $$\operatorname{logsumexp}_ra^{(r)} := \max_r a^{(r)} + \log\left[\sum_r\exp\left(a^{(r)} - \max_r a^{(r)}\right)\right]$$ avoids overflow and underflow by subtracting the potentially very small maximal value.
    \item Evaluate the weighted sum \eqref{eq:finalform}, applying the Jacobian to each gradient term,
    \item We clip the norm of the vector field 
    (after scaling by $-\frac{1}{2}\beta(t)$ as prescribed by the probability flow ODE \eqref{eq:NONCONGGPCONDODE}) to limit excessively large steps to ensure stable integration. We use the smooth saturation: $\bm{v} \mapsto \bm{v} \cdot \tau \tanh(\|\bm{v}\|/\tau) / (\|\bm{v}\|+1e^{-8})$, where $\tau=1\times 10^2$ is a maximum norm threshold. This transformation bounds excessively large gradients whilst preserving Lipschitz continuity.
\end{enumerate}

\subsection{Full integration loop}

The full sampling loop integrates the probability-flow ODE with the Euler method.
Since the unconditional vector field is zero in whitened coordinates, each step
reduces to applying only the conditional correction, see Algorithm \ref{alg:sampling}.

\newcommand{\algphasegap}{\STATEx\vspace{0.3ex}}
\begin{algorithm}[t]
\caption{\textsc{FlowGP}: sampling from a GP predictive distribution under
arbitrary conditioning via the whitened probability-flow ODE}
\label{alg:sampling}
\begin{algorithmic}[1]

    \algphasegap
    
\REQUIRE
    Linear-Gaussian predictive mean $\mathbf{m}_{*|\bm y}$ and covariance
    $\mathbf{K}_{**|\bm y}$ from the data $\mathcal{D}$;
    non-Gaussian condition $\mathcal{C}$ with point-wise evaluable likelihood
    $p(\mathcal{C}\mid\bm f_0)$;
    number of ODE steps $T$;
    number of Monte Carlo samples $S$;
    smooth-clipping threshold $v_{\max}$;
    schedule $\alpha(t),\beta(t)$ (Appendix~\ref{appendix:schedule}).
    \algphasegap
\ENSURE Sample $\bm f_0 \sim p(\bm f_0 \mid \mathcal{D}, \mathcal{C})$.
\algphasegap
    
\STATE Factorise $\mathbf{K}_{**|\bm y} = \mathbf{L}\mathbf{L}^{\intercal}$
       \hfill\COMMENT{one-off $\mathcal{O}(m^3)$ cost; defines $\mathcal{W}^{-1}(\cdot) = \mathbf{L}\cdot + \mathbf{m}_{*|\bm y}$}
\STATE Build decreasing time grid
       $1 = t_0 > t_1 > \cdots > t_T \approx 0$, uniform in log-SNR
       (Appendix~\ref{appendix:snr})
\STATE Draw initial whitened state $\hat{\bm f}_{t_0} \sim \mathcal{N}(\mathbf{0},\mathbf{I}_m)$
       \hfill\COMMENT{$t=1$: pure white noise}
       \algphasegap
       \FOR{$j = 0, 1, \ldots, T-1$}
    \STATE $\Delta t_j \leftarrow t_j - t_{j+1}$,\quad
           $\alpha_j \leftarrow \alpha(t_j)$,\quad
           $\beta_j  \leftarrow \beta(t_j)$
   
           \algphasegap
           \algphasegap
    
           \COMMENT{(i) Draw $S$ samples from 
                    $\hat{\bm f}_0 \mid \hat{\bm f}_{t_j}, \mathcal{D}
                     \sim \mathcal{N}\bigl(\alpha_j\,\hat{\bm f}_{t_j},\,(1-\alpha_j^{2})\mathbf{I}_m\bigr)$; see
                    \eqref{eq:whitened_bridge_distribution_notes}.}
    \FOR{$i = 1, \ldots, S$}
        \STATE $\hat{\bm f}_0^{(i)} \sim
               \mathcal{N}\bigl(\alpha_j\,\hat{\bm f}_{t_j},\,(1-\alpha_j^{2})\mathbf{I}_m\bigr)$
        \STATE $\bm f_0^{(i)} \leftarrow
               \mathbf{L}\,\hat{\bm f}_0^{(i)} + \mathbf{m}_{*|\bm y}$
               \hfill\COMMENT{unwhiten before evaluating the likelihood}
    \ENDFOR
    \algphasegap
    \algphasegap
    
    \COMMENT{(ii) Self-normalised importance weights (numerically stable form)}
    \STATE $\ell^{(i)} \leftarrow \log p\bigl(\mathcal{C}\mid\bm f_0^{(i)}\bigr)$
           for $i=1,\ldots,S$
    \STATE $\bar w^{(i)} \leftarrow
           \exp\!\Bigl(\ell^{(i)}
                - \operatorname{logsumexp}_{r}\ell^{(r)}\Bigr)$
           for $i=1,\ldots,S$
    \algphasegap
    \COMMENT{(iii) Likelihood scores via automatic differentiation
                    through $\bm f_0^{(i)} = \mathbf{L}\hat{\bm f}_0^{(i)} + \mathbf{m}_{*|\bm y}$}
    \STATE $\bm s^{(i)} \leftarrow
           \nabla_{\hat{\bm f}_0^{(i)}}\log p\bigl(\mathcal{C}\mid\bm f_0^{(i)}\bigr)$
           for $i=1,\ldots,S$
    
    \algphasegap
    \COMMENT{(iv) Guided velocity field in whitened space
                   (Eq.~(\ref{eq:whitened}) with MC guidance from Eq.~(\ref{eq:guidance-mc-estimator}))}
    \STATE $\bm v \leftarrow
           -\tfrac{1}{2}\,\beta_j\,\alpha_j
           \sum_{i=1}^{S} \bar w^{(i)}\,\bm s^{(i)}$
    
           \algphasegap
    \COMMENT{(v) Smooth clipping of the step to promote numerical stability}
    \STATE $\bm v \leftarrow
           \dfrac{v_{\max}\,\tanh\!\bigl(\|\bm v\|/v_{\max}\bigr)}
                 {\|\bm v\|}\,\bm v$
    \algphasegap
    \COMMENT{(vi) Explicit Euler step of the probability-flow ODE, integrating from $t=1$ to $t=0$}
    \STATE $\hat{\bm f}_{t_{j+1}} \leftarrow \hat{\bm f}_{t_j} - \Delta t_j\,\bm v$
    \algphasegap
    \ENDFOR
\STATE \textbf{return}
       $\bm f_0 \leftarrow
        \mathbf{L}\,\hat{\bm f}_{t_T} + \mathbf{m}_{*|\bm y}$
       \hfill\COMMENT{unwhiten final state}
\end{algorithmic}
\end{algorithm}

\section{Experimental details for monotonic and bounded Bayesian regression}
\label{app:monotonic}

\paragraph{Task.}
We demonstrate shape-constrained GP regression, conditioning a 1D GP on noisy
observations while simultaneously enforcing that the inferred function is
(i) \emph{monotonically increasing} and (ii) \emph{bounded} between a
known lower and upper envelope.
This setting is representative of problems in dose-response modelling,
reliability analysis, or any domain where monotonicity and range constraints
are required.

\paragraph{Data.}
The ground truth function is
\begin{equation}
    f_{\text{true}}(x) = \frac{1}{3}\!\left[\arctan(20x - 10) - \arctan(-10)\right],
    \quad x \in [0, 1].
\end{equation}
We observe $n = 7$ noise-free evaluations at locations
$x_i = 0.1 + 1/(i+1)$ for $i = 1, \ldots, 7$, generated according to \eqref{eqn:gp_model} with $\bm\Gamma=\sigma^2\mathbf{I}_n$ and measurement-error variance $\sigma^2 = 10^{-10}$ (i.e.\ effectively noiseless observations).

\paragraph{Shape constraints.}
Our target is $f_0(\cdot)$ evaluated on a uniform grid of size $m = 64$ on $[0,1]$; we denote the collection of elements by $\bm{f}_0$. We make two constraints $\mathcal{C} = \{\mathcal{C}_1, \mathcal{C}_2\}$ when making inference on $\bm{f}_0$:

\textit{Monotonicity.}
We require $f'_0(x) \geq 0$ everywhere on $[0,1]$.
This is implemented on our uniform grid using constraints on the forward finite differences
\begin{equation}
    c_i = \frac{f_{0,i+1} - f_{0,i}}{\Delta x}, \quad i = 1, \ldots, m-1,
\end{equation}
where $\Delta x = 1/m$.
We implement these constraints through a probit log-probability model (via a standard-normal CDF relaxation):
\begin{equation}\label{eq:probit}
    \log p(\mathcal{C}_1 \mid \mathbf{f}_0) = \sum_{i=1}^{m-1} \log \Phi\!\left(\frac{c_i}{v}\right),
\end{equation}
where $\mathcal{C}_1 = \{c_1,\dots,c_{m-1}\}$, $\Phi(\cdot)$ is the standard normal CDF, and the sharpness parameter $v = 10^{-4}$. The model heavily penalises for any $c_i$ being negative.

\textit{Boundedness.}
We require $\ell(x) \leq f_0(x) \leq u(x)$ for all $x$, where the lower
and upper envelopes are
\begin{equation}
    \ell(x) = 0, \qquad
    u(x) = \frac{1}{3}\log(30x + 1) + 0.1.
\end{equation}
We implement this constraint on $\bm{f}_0$ by considering the $2m$ margin values $\mathcal{C}_2 = \{(u(x_i) - f_{0,i},\, f_{0,i} - \ell(x_i)): i = 1,\dots,m\}$. These are modelled using the probit relaxation of \eqref{eq:probit} with $v = 10^{-5}$ and $\mathcal{C}_1$ replaced with $\mathcal{C}_2$.

\paragraph{GP prior.}
We use a GP prior with a squared exponential kernel over functions on $[0,1]$ of the form 
$$\text{cov}(f_0(x), f_0(x')) = \tau^2 \exp\left(\frac{-(x-x')^2}{2\kappa^2}\right)$$, 
with fixed length-scale
$\kappa = 0.1$ and scaling parameter $\tau^2 = 0.25$ (assumed known).
The GP evaluated on our grid is conditioned on the 7 observations to obtain the posterior mean
$\mathbf{m}_{*|\bm y}$ and covariance $\mathbf{K}_{**|\bm y}$ on the $m = 64$ grid points, yielding
the base Gaussian predictive distribution $\mathcal{N}(\mathbf{m}_{*|\bm y}, \mathbf{K}_{**|\bm y})$.


\paragraph{Additional results.} Figure \ref{fig:monotone_full} shows the predictions from \textsc{FlowGP} along with those from an unconstrained GP and two projection methods \citep{lin2014bayesian,astfalck2018posterior}. \textsc{FlowGP} closely matches the quality of \cite{astfalck2018posterior}, a bespoke method specifically engineered for this constraint structure whilst \cite{lin2014bayesian} exhibits prior inconsistency.

\begin{figure}
    \centering
    \begin{subfigure}[b]{0.49\textwidth}
        \centering
        \includegraphics[width=\textwidth]{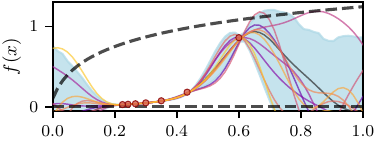}
        \caption{Unconstrained GP}
    \end{subfigure}
    \hfill
    \begin{subfigure}[b]{0.49\textwidth}
        \centering
        \includegraphics[width=\textwidth]{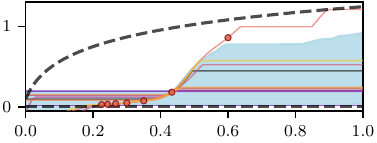}
        \caption{Projected GP \cite{lin2014bayesian} (5ms)}
    \end{subfigure}
    \begin{subfigure}[b]{0.49\textwidth}
        \centering
        \includegraphics[width=\textwidth]{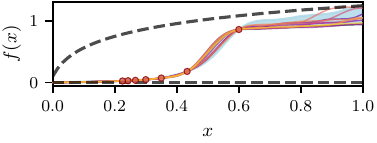}
        \caption{Posterior Projected GP \cite{astfalck2018posterior} (40ms)}
    \end{subfigure}
    \hfill
    \begin{subfigure}[b]{0.49\textwidth}
        \centering
        \includegraphics[width=\textwidth]{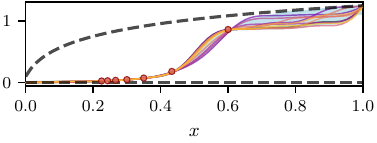}
        \caption{\textsc{FlowGP} (Ours) (200ms)}
    \end{subfigure}
    \caption{Predictive samples from an unconstrained GP \textbf{(a)}, two custom projection methods \textbf{(b)} and \textbf{(c)} \cite{lin2014bayesian, astfalck2018posterior}, and \textsc{FlowGP} \textbf{(d)}, for a bounded monotonic regression problem. All methods are conditioned on seven observations (red dots) and we wish to encode monotonicity and bound constraints (dashed lines). \textsc{FlowGP} remains prior-consistent and matches the quality of the bespoke method, without a substantial increase in runtime. We plot $10$ predictive samples and the 0.05-0.95 quantiles from $100$ samples. Timings are all on an Apple M2 Pro processor with 32GB RAM.}
    \label{fig:monotone_full}
\end{figure}

\section{Experimental details for physics-informed experiments}
\subsection{Evaluation strategy}
\textbf{Evaluating sample paths at new test points.}
\textsc{FlowGP} produces sample paths evaluated on a fixed discretisation grid, whereas the test observations lie at
arbitrary locations $\mathbf{x}^*$. Therefore, we extend our sampled field across new test points by kernel smoothing. In particular, we can apply the standard GP regression formula to get a smoothed field at $\mathbf{x}^{new}$. Of course, we can longer guarantee that our condition will hold at $\mathbf{x}^{new}$, but we can ensure that it is at least consistent with the closed-form GP components. In particular, we extend a sample $\bm f^{(s)}$ defined over a grid $\bm X_{*}$ to a new location via
\begin{equation}
    \hat{\bm f}^{(s)}(\mathbf{x}^{new}) \;=\;
    \mu_{*|y}(\bm x^{\textrm{new}}) \;+\;
    \mathbf{k}(\mathbf{x}^{new}, \bm X_{*})\,
    \bm K_{**|y}^{-1}
    \bigl(\bm f^{(s)} - \bm m_{*|y})\bigr),
\end{equation}
where  $\mu_{*|y}(\cdot)\in \mathbb{R}^{1}$ is the GP's posterior mean at $\mathbf{x}^{new}$ and $\mathbf{k}_{*|y}(\mathbf{x}^*, \bm X_*) \in \mathbb{R}^{1\times m}$ is the vector of
posterior cross-covariances between at $\mathbf{x}^{new}$ and all the grid points $\bm X_*$, and $\textbf{K}_{**|y}$ and $\bm m_{*|y}$ are the covariance and mean of $\bm f_0|\mathcal{D}$.

\textbf{Metrics.} From the ensemble we form a Monte Carlo predictive distribution.
The predictive mean and variance at test point $i$ are
\begin{equation}
    \hat{\mu}_i = \frac{1}{S}\sum_{s=1}^S \hat{f}^{(s)}(\mathbf{x}^*_i),
    \qquad
    \hat{\sigma}^2_i = \frac{1}{S-1}\sum_{s=1}^S
        \bigl(\hat{f}^{(s)}(\mathbf{x}^*_i) - \hat{\mu}_i\bigr)^2 + \sigma^2_\varepsilon,
\end{equation}
where $\sigma^2$ is the observation noise variance (if applicable).
We report two scalar metrics averaged over the $M$ test points:
\begin{align}
    \mathrm{RMSE} &= \sqrt{\frac{1}{M}\sum_{i=1}^M (\hat{\mu}_i - y_i)^2}, \\
    \mathrm{NLPD} &= \frac{1}{M}\sum_{i=1}^M
        \left[
            \tfrac{1}{2}\log(2\pi\hat{\sigma}^2_i)
            + \frac{(y_i - \hat{\mu}_i)^2}{2\hat{\sigma}^2_i}
        \right],
\end{align}
where $y_i$ is the observed target.
RMSE measures point-prediction accuracy, while NLPD (negative log predictive density) also
penalises miscalibrated uncertainty.
Uncertainty in both metrics is reported as $\pm 1$ standard deviation estimated by
bootstrapping over the $1000$ samples.

\label{appendix:physics}
\subsection{Damped Pendulum Experiment}
\label{app:pendulum}

\paragraph{Task.}
We consider the problem of inferring the angular displacement trajectory
$\theta(t)$ of a damped nonlinear pendulum from noisy sparse observations,
while simultaneously enforcing that the inferred trajectory satisfies the
governing equation of motion.
The pendulum dynamics are described by the second-order ODE
\begin{equation}
    {\theta}^{\prime\prime}(t) + \sin\!\bigl(\theta(t)\bigr) + \beta\,{\theta}^\prime(t) = 0,
    \label{eq:pendulum}
\end{equation}
where $\beta = 0.2$ is the damping coefficient.
The time horizon is $t \in [0, 30]$\,s (note that here and below we abuse notation somewhat and let `$t$' also index our input space).

\paragraph{Data.}
We use a fixed train/test split generated from a numerical solution of
\cref{eq:pendulum}.
Training observations are noisy evaluations of $\theta(\cdot)$ at irregularly
spaced time points.
The test set is evaluated on a subsampled grid of held-out time points. Both training and test data are as provided in the \textsc{PHYSS} \cite{hamelijnck2024physics}.

\paragraph{Experimental settings.}
We run two settings with different purposes:
\begin{itemize}
    \item \textbf{Qualitative / plotting setting.}
    Measurement-error variance $\sigma^2 = 0.15^2$.
    The GP posterior and samples are evaluated on a grid of $m = 250$ points.
    We generate 25 predictive samples.
    This setting is used to produce visualisations.

    \item \textbf{Quantitative / metrics setting.}
    Measurement-error variance $\sigma^2 = 0.01^2$.
    The grid is reduced to $m = 125$ points, and 1000 predictive samples are drawn to obtain reliable estimates
    of predictive means and variances.
    This setting is used to compute RMSE and NLPD.
\end{itemize}

\paragraph{GP prior.}
We place a GP prior over trajectories $f_0 : [0,1] \to \mathbb{R}$
(time is normalised by dividing by $30$) using a scaled RBF kernel
\begin{equation}
    \text{cov}(f_0(t), f_0(t')) = \tau^2 \exp\!\left(-\frac{(t-t')^2}{2\kappa^2}\right),
\end{equation}
with hyperparameters $\tau^2$ and $\kappa$ optimised together with an affine mean function by maximising the marginal likelihood using just $\mathcal{D}$,.
As in the previous experiment, the GP predictive mean $\mathbf{m}_{*|\bm y}$ and covariance $\mathbf{K}_{**|\bm y}$ on the evaluation grid
are used to construct the base Gaussian predictive distribution $\mathcal{N}(\mathbf{m}_{*|\bm y}, \mathbf{K}_{**|\bm y})$.

\paragraph{Physics-informed conditioning.}
The ODE constraint~\eqref{eq:pendulum} is encoded as a Gaussian likelihood
over the pointwise residuals of our target function $f_0(\cdot)$ on our prediction grid $t_1,\dots,t_m$:
\begin{equation}
    r(t_j)
    = {f_0}^{\prime\prime}(t_j) + \sin(f_0(t_j)) + \beta\,{f_0}^{\prime}(t_j), \quad j = 2,\dots,m-1,
\end{equation}
with a likelihood standard deviation parameter $\sigma_\text{phys} = 10^{-10}$. Note that, while the likelihood function is Gaussian, the constraints it encodes are nonlinear. We evaluate the derivatives using central finite differences (see Appendix \ref{app:burgers} for an example).
Guidance is applied during probability-flow ODE denoising
($1000$ steps, $5$ MC samples per step, whitened representation).

\paragraph{Evaluation.}
We form the predictive mean and variance
from predictive samples, adding measurement error with variance $\sigma^2$ to the
sample variance in order to get the predictive variance of test data. We report results using RMSE and NLPD.

\subsection{Allen--Cahn Equation Experiment}
\label{app:allen_cahn}

\paragraph{Task.}
We consider the problem of inferring the solution field $u(x, t)$ of the
Allen--Cahn equation from noisy, partial observations in an early-time window,
while enforcing the PDE and symmetry boundary conditions over the full domain.
The governing equation is
\begin{equation}
    \frac{\partial u(x,t)}{\partial t}
    = \varepsilon \frac{\partial^2 u(x,t)}{\partial x^2}
    + 5u(x,t) - 5u(x,t)^3,
    \label{eq:allen_cahn}
\end{equation}
with $\varepsilon = 10^{-5}$, $x \in [-1, 1]$, $t \in [0, 1]$.

\paragraph{Data.}
Training observations are drawn from the
early-time regime $t < 0.28$ (256 randomly subsampled points) and the test
set contains 1000 randomly subsampled points from $t > 0.28$. Both training and test data splits are as provided in the \textsc{PHYSS} \cite{hamelijnck2024physics}.

\paragraph{Experimental settings.}
We run two settings with different purposes:
\begin{itemize}
    \item \textbf{Qualitative / plotting setting.}
    Measurement-error variance $\sigma^2 = (0.01)^2$.
    The field is discretised on a $H \times W = 100 \times 100$ grid
    ($m = 10{,}000$).
    We draw 5 predictive samples.
    This setting is used to produce visualisations.

    \item \textbf{Quantitative / metrics setting.}
    Effectively noiseless observations ($\sigma^2 = 10^{-10}$).
    The grid is reduced to $H \times W = 50 \times 20$
    ($m = 1000$).
    We draw 10 predictive samples.
    This setting is used to compute RMSE and NLPD.
\end{itemize}

\paragraph{GP prior.}
We use a squared-exponential covariance function 2D field $f_0 : [-1,1]\times[0,1] \to \mathbb{R}$
\begin{equation}
    \text{cov}\!\left(f_0(x,t),f_0(x',t')\right)
    = \tau^2\exp\!\left(-\frac{(x-x')^2}{2\kappa_x^2} - \frac{(t-t')^2}{2\kappa_t^2}\right),
\end{equation}
with 
the length scales and variances fitted by maximising the marginal likelihood using just $\mathcal{D}$.
As in the previous experiments, the GP predictive mean $\mathbf{m}_{*|\bm y}$ and covariance $\mathbf{K}_{**|\bm y}$ on the evaluation grid
are used to construct the base Gaussian predictive distribution $\mathcal{N}(\mathbf{m}_{*|\bm y}, \mathbf{K}_{**|\bm y})$.

\paragraph{Physics-informed conditioning.}
Samples are conditioned jointly on the Allen--Cahn PDE residual (evaluated at
interior spatial rows, via central finite differences in both $x$ and $t$)
and on symmetric boundary conditions enforcing $f_0(-1,t) = f_0(1,t)$ and
matching first spatial derivatives at the boundaries ( see Appendix \ref{app:burgers} for an example).
Both conditions use $\sigma_\text{phys} = 10^{-5}$, and are combined as a product when constructing $p(\mathcal{C} \mid \bm f_0)$.

\paragraph{Evaluation.}
For each test point $(x_j, t_j)$ we 
form the predictive mean and variance from the predictive samples, before adding
observation noise $\sigma^2$ to the predictive variance. We report our results using RMSE and NLPD.

\subsection{Viscous Burgers' Equation Experiment}
\label{app:burgers}

\paragraph{Task.}
We consider the problem of inferring the velocity field $u(x, t)$ of the
one-dimensional viscous Burgers' equation from a small number of observations
of the initial condition, while enforcing that the inferred field satisfies the
PDE and homogeneous Dirichlet boundary conditions over the full spatio-temporal
domain (as considered in \cite{chen2021solving}).
The governing equation is
\begin{equation}
    \frac{\partial u(x,t)}{\partial t} + u\frac{\partial u(x,t)}{\partial x}
    = \nu \frac{\partial^2 u(x,t)}{\partial x^2},
    \label{eq:burgers}
\end{equation}
where $\nu = 0.02$ is the kinematic viscosity.
The spatial domain is $x \in [-1, 1]$ and the time domain is $t \in [0, 1]$.

\paragraph{Data and experimental variants.}
We assume we have noisy observations of the initial condition that are generated from the true profile
\begin{equation}
    u(x, 0) = -\sin\!\bigl(\pi(2x - 1)\bigr),
\end{equation}
 at uniformly spaced interior points.
We consider two experimental settings:

\begin{itemize}
    \item \textbf{Noisy, scarce data.} $n = 5$ observations are
    used, each corrupted by independent Gaussian measurement error with variance
    $\sigma^2 = (0.01)^2$. This setting tests the ability to recover the
    PDE solution from very few, noisy measurements.

    \item \textbf{Noiseless, dense data.} $n = 100$ observations
    are used, with effectively zero measurement error ($\sigma^2 = 10^{-12}$).
    This setting tests physics enforcement when the initial condition is
    almost exactly known on a dense grid.
\end{itemize}

In both cases the observations are 2D input pairs $(x_j, 0)$ covering only the
initial time slice; the model must extrapolate the dynamics forward in time by
satisfying the PDE constraint.
Ground-truth solutions at three evaluation snapshots
$t \in \{0.2, 0.5, 0.8\}$ are used for quantitative comparison.

\paragraph{GP prior.}
We place a Gaussian process prior over the 2D field
$f_0 : [-1, 1] \times [0, 1] \to \mathbb{R}$.
We use a squared exponential kernel
\begin{equation}
    \text{cov}\!\left(f_0(x,t),f_0(x',t')\right)
    = \tau^2\exp\!\left(-\frac{(x-x')^2}{2\kappa_x^2} - \frac{(t-t')^2}{2\kappa_t^2}\right),
\end{equation}

with fixed length-scales $\kappa_x = 0.025$, $\kappa_t = 0.3$, and output scale
$\tau^2 = 1.0$ (assumed known, following the benchmark protocol of \cite{chen2021solving}).

We construct the GP predictive mean $\mathbf{m}_{*|\bm y}$ and covariance $\mathbf{K}_{**|\bm y}$ on a uniform
$H \times W = 50 \times 20$ grid, giving a base Gaussian predictive distribution $\mathcal{N}(\mathbf{m}_{*|\bm y}, \mathbf{K}_{**|\bm y})$ with
$m = HW = 1000$.

\paragraph{Physics-informed conditioning.}
We condition samples jointly on the Burgers' PDE residual and on homogeneous
Dirichlet boundary conditions.

\textit{PDE residual.}
Given a candidate field $\mathbf{f}_0 \in \mathbb{R}^{H \times W}$ with spacings
$\Delta x = 2/(H-1)$ and $\Delta t = 1/(W-1)$, we compute derivatives via central
finite differences at interior points:
\begin{align}
    \left.\frac{\partial f_0}{\partial t}\right|_{i,j}
    &\approx \frac{f_{0,i,j+1} - f_{0,i,j-1}}{2\Delta t},
    \qquad j = 1,\ldots,W-2, \\
    \left.\frac{\partial f_0}{\partial x}\right|_{i,j}
    &\approx \frac{f_{0,i+1,j} - f_{0,i-1,j}}{2\Delta x},
    \qquad i = 1,\ldots,H-2, \\
    \left.\frac{\partial^2 f_0}{\partial x^2}\right|_{i,j}
    &\approx \frac{f_{0,i+1,j} - 2f_{0,i,j} + f_{0,i-1,j}}{\Delta x^2}.
\end{align}
The pointwise PDE residual is
\begin{equation}
    r(x_j, t_j)
    = \frac{\partial f_0}{\partial t}\bigg|_{i,j}
    + f_{0,i,j}\frac{\partial f_0}{\partial x}\bigg|_{i,j}
    - \nu \frac{\partial^2 f_0}{\partial x^2}\bigg|_{i,j}.
\end{equation}

\textit{Boundary conditions.}
Homogeneous Dirichlet conditions $f_0(-1, t) = f_0(1, t) = 0$ for all $t$ are
enforced by pinning the top ($i=0$) and bottom ($i=H-1$) spatial rows to zero.

Both conditions are modelled as Gaussian likelihoods with
$\sigma_\text{phys} = 10^{-5}$ (PDE) and $\sigma_\text{bc} = 10^{-6}$
(boundary), combined as a product when constructing $p(\mathcal{C} \mid \bm f_0)$. Unlike all other experiments in this paper, we found that \textsc{FlowGP} required $10,000$ ODE steps (as opposed to $1,000$) to resolve the resulting flow.

\paragraph{Evaluation.}
We evaluate the conditioned samples at $t \in \{0.2, 0.5, 0.8\}$, reporting
RMSE and NLPD at each snapshot (pooling all spatial points across
all three temporal snapshots) for both experimental variants. Results shown in Table \ref{tab:burgers}, where we follow the same formatting as in Table \ref{tab:physics}.

\begin{table}[h]
    \centering
\begin{tabular}{@{}p{0.8cm}rrrr@{}}
\toprule
\textbf{Model} & \textbf{RMSE} & \textbf{NLPD} & \textbf{Time} \\
\midrule
$\textrm{Kernel}$       & \textbf{0.01} & $\textrm{N}/\textrm{A}$ & $\mathbf{4\!\cdot\!10^{1}}$ \\
\midrule
$\textrm{FLOW}$         & 0.04 (0.00) & \textbf{-1.82 (0.04)} & $8\!\cdot\!10^{1}$ \\
\midrule
$\textrm{FLOW}_{\textrm{W}}$ & 0.03 (0.00) & -1.35 (0.12)& $8\!\cdot\!10^{1}$ \\
\end{tabular}
\begin{tabular}{@{}p{0.8cm}rrrr@{}}
\toprule
\textbf{Model} & \textbf{RMSE} & \textbf{NLPD} & \textbf{Time} \\
\midrule
$\textrm{Kernel}$       & 0.25 & $\textrm{N}/\textrm{A}$ & $\mathbf{4\!\cdot\!10^{1}}$ \\
\midrule
$\textrm{FLOW}$         & 0.30 (0.01) & \-0.81 (0.05) & $8\!\cdot\!10^{1}$ \\
\midrule
$\textrm{FLOW}_{\textrm{W}}$ & \textbf{0.22} (0.00) & \textbf{-1.44 (0.03)} & $8\!\cdot\!10^{1}$ \\
\end{tabular}
    \caption{The whitened formulation of \textsc{FlowGP} matches existing physics-obeying deterministic kernel regression methods\citep{chen2021solving} in accuracy on the Burgers benchmarks with the additional benefit of providing confidence intervals. \textbf{left:} dense low-noise regime. \textbf{right:} sparse high-noise regime where the kernel approach degrades. We used the author's open-source implementation, timing both theirs and our methods on our Nvidia A45000 workstation.}
    \label{tab:burgers}
\end{table}

\section{Bayesian Optimisation: \textsc{FlowGP} supports well-calibrated decision making}
\label{appendix:BO}

GPs are widely used for decision-making under uncertainty, with Bayesian optimisation (BO) being the most prominent example. Certain advanced BO settings benefit from structural constraints: \cite{wang2025bayesian} show that for Bayesian Optimisation with Preference Exploration (BOPE), replacing the standard preference-learning GP (i.e.\ a probit-likelihood GP that learns via pairwise comparison data using a Laplace approximation) \citep{chu2005preference} with a carefully engineered Monotonic Neural Network Ensemble (MoNNE) yields state-of-the-art performance. We consider exactly this setting, but instead of a bespoke architecture, we simply apply \textsc{FlowGP} with monotonicity as a constraint on the standard preference-learning GP. As can be seen in \Cref{fig:fullbo}, we found \textsc{FlowGP} (yellow) to significantly improve GP performance (purple) across all six benchmark functions found in \citep{wang2025bayesian}, often matching MoNNE performance (pink). \textsc{FlowGP} thus provides a principled and general way to incorporate structural knowledge into BO, where well-calibrated uncertainty is critical and bespoke architectures are typically required. Lastly, we note that, to the best of our knowledge, there are currently no known ways to encode monotonicity into a preference-learning GP, with \textsc{FlowGP} being the first of its kind. We believe this is a meaningful contribution, since real-world preference functions are often assumed to be monotonic \cite{mas1995microeconomic}.

\paragraph{Problem setting} We consider the BOPE setup of \citep{wang2025bayesian}, and provide a short description below. For more details, we refer readers to the original paper, given that we consider exactly the same setting (with only a few minor differences introduced by \textsc{FlowGP}, which we discuss in the next paragraph). In \citep{wang2025bayesian}, an expensive and black-box function $\bm{f}_{\text{true}}: \mathbb{R}^d \to \mathbb{R}^k$ maps a decision variable $\bx \in \mathcal{X} \subseteq \mathbb{R}^d$ to $k$ objective values, and a decision maker (DM) has an unknown utility $g_{\text{true}}: \mathbb{R}^k \to \mathbb{R}$ over these outputs. The goal is to find
\begin{equation*}
\bx^* \in \operatorname*{arg\,max}_{\bx \in \mathcal{X}} \, g_{\text{true}}(\bm{f}_{\text{true}}(\bx)).
\end{equation*}
The procedure alternates between two stages. In \emph{experimentation}, the experimenter selects $\bx_t$ and observes $\bm{f}_{\text{true}}(\bx_t)$, yielding a dataset $\mathcal{D} := \{(\bx_i, \bold{y}_i)\}_{i=1}^n$. In \emph{preference exploration}, the DM is shown a pair of previously observed outputs $(\bold{y}_{1,t}, \bold{y}_{2,t})$ and returns $p_t \in \{-1, +1\}$. Restricting comparisons to observed outputs, rather than allowing arbitrary or unattainable pairs as in \citep{lin2022preference}, avoids presenting the DM with options that cannot be realised. We model DM noise as utility noise:
\begin{equation*}
p_t = 2 \cdot \mathbbm{1}_{g_{\text{true}}(\bold{y}_{1,t}) - g_{\text{true}}(\bold{y}_{2,t}) + \epsilon > 0} - 1, \quad \epsilon \sim \mathcal{N}(0, \sigma_{\text{noise}}^2),
\end{equation*}
so error probability scales with the utility gap rather than being constant. After $m$ comparisons, the preference data is $\mathcal{P}_t := \{(\bold{y}_{1,i}, \bold{y}_{2,i})\}_{i=1}^m$ with responses $\mathcal{R}_t := \{p_i\}_{i=1}^m$. The two stages alternate until a stopping condition is met, and the setting supports batch acquisition of both $\bx_t$ and comparison pairs.
 
\paragraph{Experimental details} For the experiments in \Cref{fig:fullbo}, we rely on the same set-up and codebase\footnote{\url{https://github.com/HanyangHenry-Wang/BOPE-MoNNE}} as \citep{wang2025bayesian}, modifying only the parts necessary for our \textsc{FlowGP} model to work. Specifically, we rely on the same acquisition function, namely 
\begin{equation*}
    \text{qNEIUU}(\bx_{1:q}) = \frac{1}{M} \frac{1}{N} \sum_{j=1}^{M} \sum_{k=1}^{N} \left\{ \max g^j\left(\bm{f}^k(\bx_{1:q})\right) - \max g^j\left(\bm{f}^k(\bold{X}_n)\right) \right\}^+,
\end{equation*}
where $\bm{f}$ is the same GP surrogate as in \citep{wang2025bayesian}. For $g$, we replace their custom MoNNE architecture by our \textsc{FlowGP}, where the underlying GP is the \texttt{PairwiseGP} model from \texttt{BoTorch} \cite{balandat2020botorch,chu2005preference} and $\mathcal{C}$ encodes monotonicity with respect to all dimensions of $\bx$. After building the two surrogate models $\bm{f}$ and $g$, qNEIUU is computed by Monte Carlo simulation,
where $\bx_{1:q} \in \mathcal{X}^q$ (with $q=1$ in this paper), $\bold{X}_n$ is the tensor of evaluated solutions, $\{\cdot\}^+$ denotes the positive part, $M$ is the number of samples from $g$, $N$ is the number of predictive samples from $\bm{f}$, $g^j(\cdot)$ is the $j$-th sample from $g$, and $\bm{f}^k(\cdot)$ is the $k$-th predictive sample from $\bm{f}$.

For the preference-exploration stage, we do not rely on the more complicated IEUBO presented in \citep{wang2025bayesian}, and instead rely simply on EUBO \citep{lin2022preference}:
\begin{equation*}
    \text{EUBO}(\bold{y}_{1,i}, \bold{y}_{2,i}) = \frac{1}{M} \sum_{j=1}^{M} \max\left\{ g^j(\bold{y}_{1,i}), g^j(\bold{y}_{2,i}) \right\}, \quad \bold{y}_{1,i}, \bold{y}_{2,i} \in \{\bold{y}_i \mid (\bold{x}_i, \bold{y}_i) \in \mathcal{D}_n\}.
\end{equation*}

Since the \textsc{FlowGP} model does not easily allow for gradient-based optimisation of the acquisition function, we instead evaluate the acquisition function on a set of candidate points, from which we select the best one. To generate this set of candidate points, we rely on a technique similar to the state-of-the-art ``Vanilla BO'' method \citep{hvarfner2024vanilla}, which samples points both globally across the search space $\mathcal{X}$ and locally (as a Gaussian) around the incumbent.

\begin{figure}
\centering
\begin{subfigure}[b]{0.16\textwidth}
    \centering
    \includegraphics[width=\textwidth]{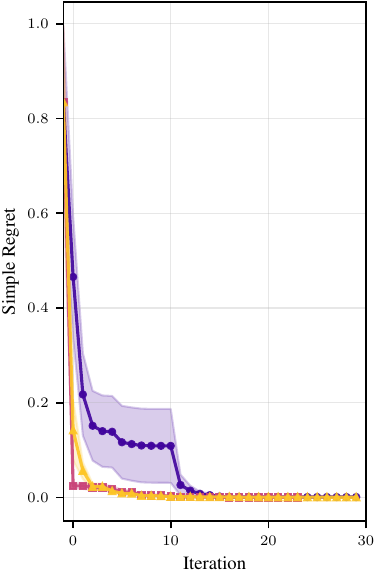}
    \caption{}
\end{subfigure}
\hfill
\begin{subfigure}[b]{0.16\textwidth}
    \centering
    \includegraphics[width=\textwidth]{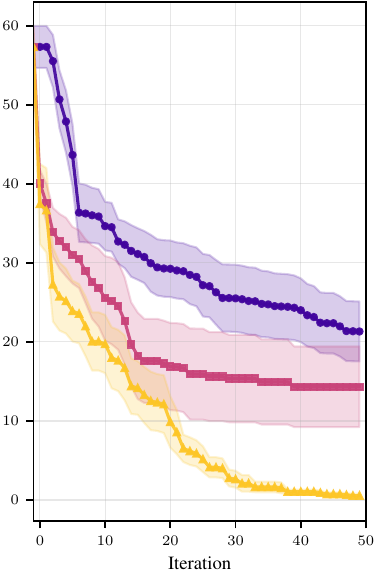}
    \caption{}
\end{subfigure}
\hfill
\begin{subfigure}[b]{0.16\textwidth}
    \centering
    \includegraphics[width=\textwidth]{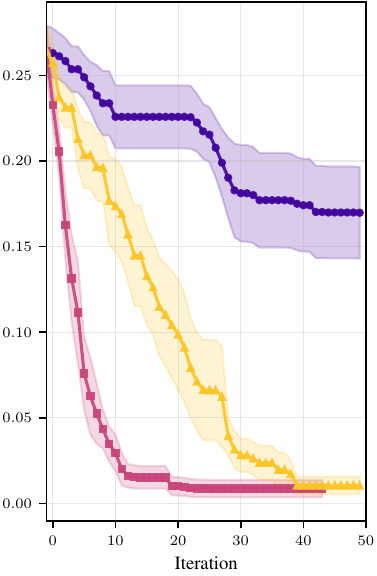}
    \caption{}
\end{subfigure}
\hfill
\begin{subfigure}[b]{0.16\textwidth}
    \centering
    \includegraphics[width=\textwidth]{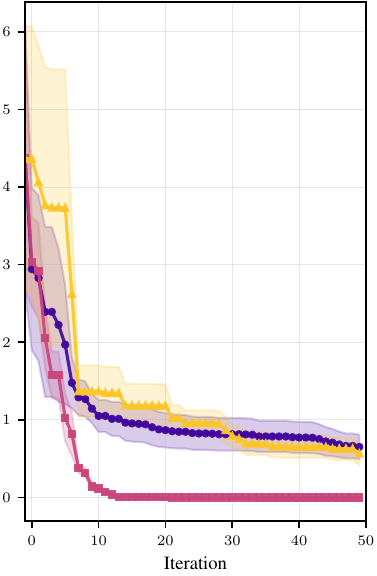}
    \caption{}
\end{subfigure}
\hfill
\begin{subfigure}[b]{0.16\textwidth}
    \centering
    \includegraphics[width=\textwidth]{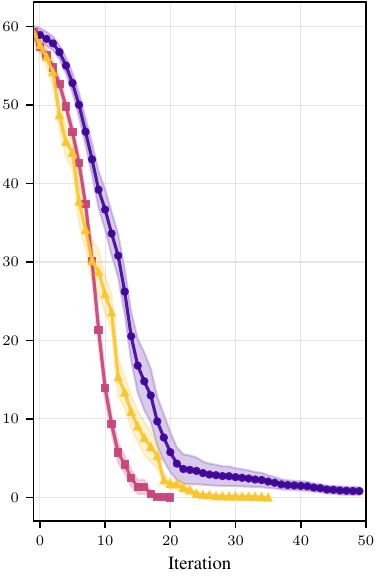}
    \caption{}
\end{subfigure}
\hfill
\begin{subfigure}[b]{0.16\textwidth}
    \centering
    \includegraphics[width=\textwidth]{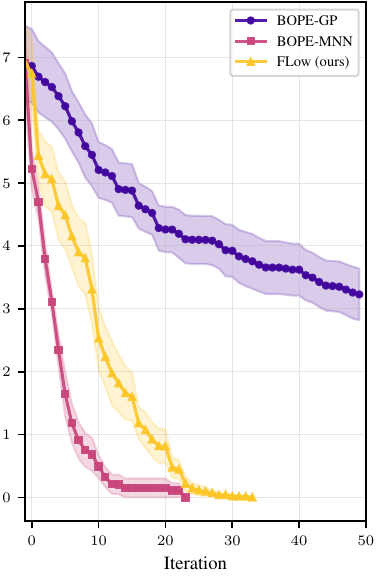}
    \caption{}
\end{subfigure}

\caption{
On all six Bayesian Optimisation with Preference Exploration (BOPE) problems considered in \citep{wang2025bayesian}, our \textsc{FlowGP} model improves upon the underlying preference-learning GP \citep{chu2005preference} by easily incorporating monotonicity, and often matches the problem-specific and carefully-engineered Monotonic Neural Network Ensemble (MoNNE) from \citep{wang2025bayesian}. {\bf (a)} DTLZ2 (3D) + Linear (2D). {\bf (b)} OSY (6D) + Quadratic (8D). {\bf (c)} VehicleSafety (5D) + KumaraswamyCDF (3D). {\bf (d)} VLMOP3 (2D) + Exponential (3D). {\bf (e)} ZDT1 (10D) + LinearExponential (2D). {\bf (f)} CarCabDesign (7D) + CobbDouglas (9D).
}
\label{fig:fullbo}
\end{figure}

\section{LLM-Guided Process Experiment}
\label{app:llm}

\paragraph{Task.}
Our final experiment demonstrates a \emph{product-of-experts} construction
that combines a GP stochastic-process prior with marginal predictive densities
derived from a large language model (LLM).
The LLM is queried with a natural-language description of the time series and
returns a discrete probability distribution over output bins at each input
location; this constitutes a text-conditioned ``expert'' density.

\paragraph{Product-of-experts formulation.}
Let $p(\bm f_0 \mid \mathcal{D})$ denote the GP predictive density and
$q(\bm f_0\mid \mathcal{C})$ the LLM-derived density conditioned on text prompt $\mathcal{C}$.
We define the target distribution as the normalised product
\begin{equation}
    \pi(\bm f_0 \mid \mathcal{D}, \mathcal{C})
    \propto \,p(\bm f_0 \mid \mathcal{D})\,q(\bm f_0 \mid \mathcal{C}).
    \label{eq:GPexpert}
\end{equation}
Here, we are following exactly the setup of Biggs et al. \citep{biggs2026llm} and use the LLM-derived density provided by their code. We refer to their paper for details about the specific prompting strategy, number of tokens, temperature settings, and any post-processing of the Qwen3.5 \cite{team2026qwen3} LLM's output.

\paragraph{Conditioning implementation.}
The LLM scores $q(\bm f_0 \mid \mathcal{C})$ are pre-computed offline and stored as
per-timestep histograms over discrete output bins. In order to provide a smooth likelihood for our framework, we evaluate the log-likelihood of the current sample
under a kernel-smoothed version of the LLM histogram density and add its
gradient to the unconditional score. 

Concretely, let $\bm f_0 \in \mathbb{R}^m$ be the current sample.
For each grid point $j$ and bin $k$ with edges $[b_k^-, b_k^+]$ and LLM
probability mass $p_k^{(j)}$, we compute
\begin{equation}
    \log {q}(\bm f_0 \mid \mathcal{C})
    = \sum_{j=1}^{m} \frac{1}{b_k^+ - b_k^-} \log
    \sum_{k} p_k^{(j)}
    \left[\Phi\!\left(\frac{b_k^+ - f_{0,j}}{\nu}\right)
         - \Phi\!\left(\frac{b_k^- - f_{0,j}}{\nu}\right)\right],
\end{equation}
where $\Phi(\cdot)$ is the standard normal CDF and $\nu = 0.5$ is a smoothing
bandwidth parameter.
This replaces the discrete histogram with a continuous, differentiable density
by treating each bin's mass as a Gaussian mixture component.
Gradients of this expression with respect to $\bm f_0$ are computed
analytically via the Gaussian PDF, so no backpropagation through the LLM is
required at any point. We build bins as all the numbers that are 3 significant figures and within the problem range (see below). A \texttt{BoundedCondition} is combined multiplicatively to keep samples within
physically plausible output ranges.

\paragraph{Datasets and text prompts.}
We evaluate on four qualitative scenarios, each defined by a natural-language
prompt, the Qwen3.5 \cite{team2026qwen3} LLM, and a corresponding GP prior,:
\begin{itemize}
    \item \textbf{Bankruptcy.} \textit{``A small UK company's daily stock
    price in GBP over 50 trading days. On day 30 the company enters
    compulsory liquidation and is permanently delisted from the London
    Stock Exchange.''} squared exponential kernel ($\kappa = 5/50$, $\tau^2 = 2.0$),
    output bounded to $[0, 10]$.

    \item \textbf{Stable.} \textit{``A small UK company's daily stock price
    in GBP over 50 trading days. The company has no notable news during
    the period, but performs well. Ending with a value over 5 GBP per
    share.''} Same squared exponential kernel as above, output bounded to $[0, 10]$.

    \item \textbf{San Diego (precipitation).} \textit{``Monthly average
    precipitation in San Diego in inches over 50 months, starting in
    January.''} Product of a squared exponential kernel and a periodic kernel (period $= 12/50$,
    $\kappa = 1.0$, $\tau^2 = 2.0$), output bounded to $[0, 4]$.

    \item \textbf{Montreal (temperature).} Squared exponential kernel
    ($\kappa = 4/50$, $\tau^2 = 5.0$, prior mean $14.0$),
    output bounded to $[0, 30]$.
\end{itemize}
All four scenarios use $m = 50$ output time steps, 100 predictive samples,
two weak anchor observations at $t \in \{1/50, 2/50\}$ to initialise the
GP posterior, and run for $1{,}000$ denoising steps with $1$ MC sample per
step in the whitened representation.

\section{Ablation Study: Guidance Method and Discretisation}
\label{appendix:ablate}

We ablate two axes of our method on the
monotonicity-and-boundedness task in Figure \ref{fig:whitenedablate}:
(i) the guidance estimator and
(ii) the number of denoising steps $T$.
All combinations are evaluated qualitatively by inspecting whether
$S = 1{,}000$ posterior samples respect the monotonicity and
boundedness constraints and display appropriate posterior variation.

\paragraph{Guidance estimators.}
We compare four estimators for the conditional score term
$\nabla_{\mathbf{f}_t} \log p(\mathcal{C} \mid \mathbf{f}_t)$:

\begin{itemize}
    \item \textbf{DPS}~\citep{chung2023diffusion} estimates the guidance term with first-order approximations {${\nabla_{\bm f_t}\log p(\mathcal{C} \mid \E[\bm f_0 \mid \bm f_t])}$}, where $\E[\bm f_0 \mid \bm f_t] = \frac{1}{\alpha(t)}\left(\bm f_t + (1 - \alpha(t)^2)) s(\bm f_t, t)\right)$.

    \item \textbf{MPGD}~\citep{manifold}  uses $\nabla_{\bm f_0}\log p(\mathcal{C} \mid \E[\bm f_0 \mid \bm f_t])$, avoiding differentiation through the denoising process.

    \item \textbf{MC (ours).} Importance-weighted MC gradient using $S \in
    \{1, 10, 100\}$ samples as described in
    \cref{app:method}.

    \item \textbf{MC-Fisher (ours).} An alternative MC estimator that
    forms the guidance direction as the importance-weighted deviation of
    the denoised samples from their mean, using $S \in
    \{1, 10, 100, 1000\}$
\end{itemize}

\textbf{Non-MC approaches underrepresent predictive variance.}
Figure \ref{fig:whitenedablate} shows that both DPS and MPGD produce samples that collapse to near-deterministic
trajectories, dramatically underrepresenting the true posterior uncertainty.
Because both methods condition on a single point estimate of $\bm f_0$
rather than integrating over the denoising predictive distribution, the guidance signal
is overconfident and pushes all samples towards a narrow mode.
The resulting sample sets show almost no spread, failing to capture the
breadth of constraint-satisfying functions consistent with the observations.

\textbf{Our MC estimator works well even with a single sample.}
The importance-weighted MC estimator produces diverse, well-constrained
samples already at $S = 1$.
Even a single draw from the denoising predictive distribution is sufficient to obtain an
unbiased direction that preserves predictive variance, and quality
improves gracefully as $S$ increases.
This makes the MC estimator both practical and accurate in the regime
relevant to our experiments.

\textbf{MC-Fisher requires many samples.}
The Fisher-based MC estimator is substantially less sample-efficient.
With $S < 1000$ the resulting samples still collapse and exhibit poor
diversity, and even at $S = 1000$ the sample spread is only beginning
to look reasonable.
We therefore favour the standard MC estimator in all main experiments.

\paragraph{Number of steps.}
With $T = 10$ steps the ODE discretisation is too coarse and constraint
violations are common regardless of estimator.
At $T = 100$ the MC estimator begins to produce satisfactory results.
$T = 1000$ steps consistently yields smooth, well-constrained samples
under our MC estimator and this choice is used in all main experiments, unless otherwise stated.

\begin{figure}
\centering
\includegraphics[width=0.8\linewidth]{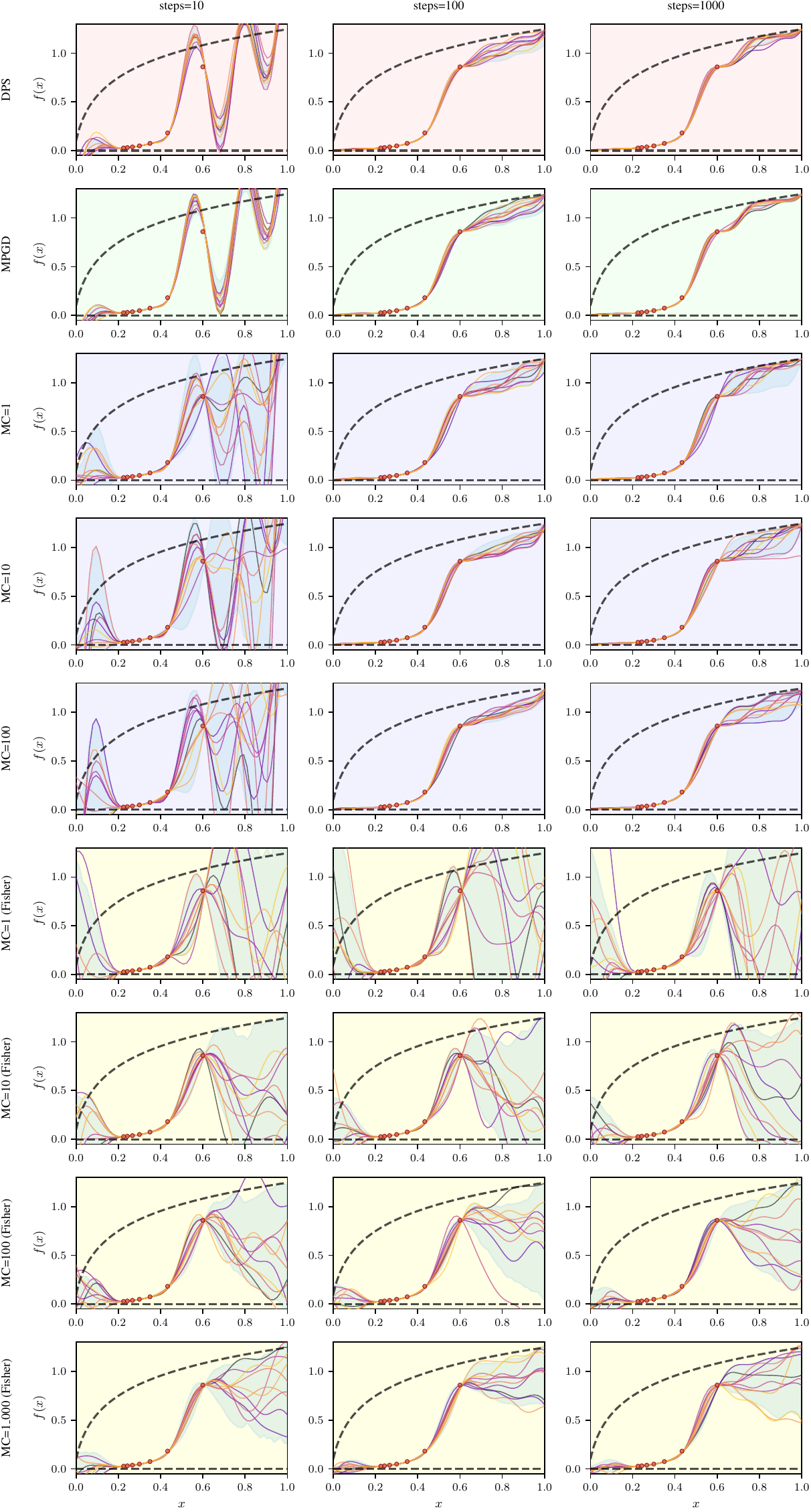}
\caption{Additional experiments on the monotonic and bounded regression problem, showing the effect of different time discretisations (left to right), for guidance estimation strategies (top to bottom), including DPS (red), MPGD (green), our MC estimator (Blue), and our MC-fisher estimator (Yellow).}
\label{fig:whitenedablate}
\end{figure}


\end{document}

%% file: preamble_draft.tex
\usepackage[
  draft,
  textsize=tiny,
  tickmarkheight=0.1cm,
  figwidth=0.99\linewidth,
]{todonotes}

\definecolor{TUred}{RGB}{165,30,55}
\definecolor{TUgold}{RGB}{180,160,105}
\definecolor{TUdark}{RGB}{50,65,75}
\definecolor{TUgray}{RGB}{175,179,183}

\definecolor{TUdarkblue}{RGB}{65,90,140}
\definecolor{TUblue}{RGB}{0,105,170}
\definecolor{TUlightblue}{RGB}{80,170,200}
\definecolor{TUlightgreen}{RGB}{130,185,160}
\definecolor{TUgreen}{RGB}{125,165,75}
\definecolor{TUdarkgreen}{RGB}{50,110,30}
\definecolor{TUocre}{RGB}{200,80,60}
\definecolor{TUviolet}{RGB}{175,110,150}
\definecolor{TUmauve}{RGB}{180,160,150}
\definecolor{TUbeige}{RGB}{215,180,105}
\definecolor{TUorange}{RGB}{210,150,0}
\definecolor{TUbrown}{RGB}{145,105,70}